\newif\ifexport
\def\Figref#1{Figure~\ref{#1}}
\def\Tabref#1{Table~\ref{#1}}
\def\Secref#1{Section~\ref{#1}}
\def\eqref#1{equation~\ref{#1}}
\def\Eqref#1{Equation~\ref{#1}}
\def\twoEqref#1#2{Equations \ref{#1} and \ref{#2}}
\def\1{\bm{1}}
\DeclareMathAlphabet{\mathsfit}{\encodingdefault}{\sfdefault}{m}{sl}
\SetMathAlphabet{\mathsfit}{bold}{\encodingdefault}{\sfdefault}{bx}{n}
\newcommand{\E}{\mathbb{E}}
\newcommand{\R}{\mathbb{R}}
\newcommand{\lone}{\ell_1}
\newcommand\independent{\protect\mathpalette{\protect\independenT}{\perp}}
\def\independenT#1#2{\mathrel{\rlap{$#1#2$}\mkern2mu{#1#2}}}
\newcommand{\jointP}{\mathbb{P}}
\newtheorem{theorem}{Theorem}[section]
\newcommand{\pa}[1]{\text{pa}(#1)} %
\newcommand{\USER}{STR}
\def\SOUL@hlpreamble{%
    \setul{\dp\strutbox}{\dimexpr\ht\strutbox+\dp\strutbox\relax}%
    \let\SOUL@stcolor\SOUL@hlcolor
    \SOUL@stpreamble
}
\newcommand{\addref}[1]{\textcolor{magenta}{ [REF] }}
\newcommand{\comment}[1]{\textcolor{magenta}{ [\textbf{\USER}: #1] }}
\newcommand{\commentPL}[1]{\textcolor{blue}{ [\textbf{PL}: #1] }}
\newcommand{\TODO}[1]{\textcolor{red}{\textbf{[TODO}: #1\textbf{]}}}
\newcommand{\Lsparse}{\lambda_{\text{sparse}}}
\newcommand{\ie}{\emph{i.e.}}
\newcommand{\OurApproach}{ENCO}
\renewcommand{\cite}{\citep}
    \renewcommand{\TODO}[1]{}
    \renewcommand{\comment}[1]{}
    \renewcommand{\commentPL}[1]{}
\title{Efficient Neural Causal Discovery \\ without Acyclicity Constraints} %
\author{%
  Phillip Lippe \\
  University of Amsterdam \\
  QUVA lab \\
  \texttt{p.lippe@uva.nl} \\
  \And
  Taco Cohen \\
  Qualcomm AI Research\thanks{Qualcomm AI Research is an initiative of Qualcomm Technologies, Inc.} \\
  \texttt{tacos@qti.qualcomm.com} \\
  \And 
  Efstratios Gavves \\
  University of Amsterdam \\
  QUVA lab \\
  \texttt{e.gavves@uva.nl} \\
}
\begin{document}

\maketitle

\begin{abstract}
  Learning the structure of a causal graphical model using both observational and interventional data is a fundamental problem in many scientific fields.
A promising direction is continuous optimization for score-based methods, which, however, require constrained optimization to enforce acyclicity or lack convergence guarantees.
In this paper, we present \OurApproach{}, an efficient structure learning method for directed, acyclic causal graphs leveraging observational and interventional data.
\OurApproach{} formulates the graph search as an optimization of independent edge likelihoods, with the edge orientation being modeled as a separate parameter.
Consequently, we provide for \OurApproach{} convergence guarantees when interventions on all variables are available, without having to constrain the score function with respect to acyclicity.
In experiments, we show that \OurApproach{} can efficiently recover graphs with hundreds of nodes, an order of magnitude larger than what was previously possible, while handling deterministic variables and discovering latent confounders.
\end{abstract}

\section{Introduction}
\label{sec:introduction}

Uncovering and understanding causal mechanisms is an important problem not only in machine learning \cite{Scholkopf2021TowardsLearning,Pearl2009Causality} but also in various scientific disciplines such as computational biology \cite{Friedman2000Using,Sachs2005CausalPN}, epidemiology \cite{Robins2000Marginal,Vandenbroucke2016Causality}, and economics \cite{Pearl2009Causality,Hicks1980Causality}. 
A common task of interest is \textit{causal structure learning} \cite{Pearl2009Causality,Peters2017Elements}, which aims at learning a directed acyclic graph (DAG) in which edges represent causal relations between variables.
While observational data alone is in general not sufficient to identify the DAG \cite{Yang2018Characterizing, Hauser2012Characterization}, interventional data can improve identifiability up to finding the exact graph \cite{Eberhardt2005Number, Eberhardt2008Almost}.
Unfortunately, the solution space of DAGs grows super-exponentially with the variable count, requiring efficient methods for large graphs.
Current methods are typically applied to a few dozens of variables and cannot scale so well, which is imperative for modern applications like learning causal relations with gene editing interventions \cite{Dixit2016PerturbSeq,Macosko2015Highly}.

A promising new direction for scaling up DAG discovery methods are continuous-optimization methods \cite{Zheng2018DAGs,Zheng2020Learning,Zhu2020Causal,Ke2019LearningInterventions,Brouillard2020Differentiable,Yu2019DAGGNN}.
In contrast to score-based and constrained-based~\cite{Peters2017Elements, Guo2020Survey} methods, continuous-optimization methods reinterpret the search over discrete graph topologies as a continuous problem with neural networks as function approximators, for which efficient solvers are amenable. 
To restrict the search space to acyclic graphs, \citet{Zheng2018DAGs} first proposed to view the search as a constrained optimization problem using an augmented Lagrangian procedure to solve it.
While several improvements have been explored~\cite{Zheng2020Learning,Brouillard2020Differentiable,Yu2019DAGGNN,Lachapelle2020Gradient}, constrained optimization methods remain slow and hard to train.
Alternatively, it is possible to apply a regularizer~\cite{Zhu2020Causal, Ke2019LearningInterventions} to penalize cyclic graphs.
While simpler to optimize, methods relying on acyclicity regularizers commonly lack guarantees for finding the correct causal graph, often converging to suboptimal solutions.
Despite the advances, beyond linear, continuous settings~\cite{Ng2020Role, Varando2020Learning} continuous optimization methods still cannot scale to more than 100 variables, often due to difficulties in enforcing acyclicity.

In this work, we address both problems.
By modeling the orientation of an edge as a separate parameter, we can define the score function without any acyclicity constraints or regularizers.
This allows for unbiased low-variance gradient estimators that scale learning to much larger graphs.
Yet, if we are able to intervene on all variables, the proposed optimization is guaranteed to converge to the correct, acyclic graph.
Importantly, since such interventions might not always be available, we show that our algorithm performs robustly even when intervening on fewer variables and having small sample sizes.  
We call our method \OurApproach{} for \emph{\textbf{E}fficient \textbf{N}eural \textbf{C}ausal Disc\textbf{o}very}. 

We make the following four contributions.
Firstly, we propose \OurApproach{}, a causal structure learning method for observational and interventional data using continuous optimization. 
Different from recent methods, \OurApproach{} models the edge orientation as a separate parameter.
Secondly, we derive unbiased, low-variance gradient estimators, which is crucial for scaling up the model to large numbers of variables. 
Thirdly, we show that \OurApproach{} is guaranteed to converge to the correct causal graph if interventions on all variables are available, despite not having any acyclicity constraints.
Yet, we show in practice that the algorithm works on partial intervention sets as well.
Fourthly, we extend \OurApproach{} to detecting latent confounders. %
In various experimental settings, \OurApproach{} recovers graphs accurately, making less than one error on graphs with 1,000 variables in less than nine hours of computation.

\section{Background and Related Work}
\label{sec:background}
\label{sec:related_work}

\subsection{Causal graphical models}
\label{sec:background_causal_graphical_models}
A causal graphical model (CGM) is defined by a distribution $\jointP$ over a set of random variables $\bm{X}=\left\{X_1,...,X_N\right\}$ and a directed acyclic graph (DAG) $G=(V,E)$.
Each node $i\in V$ corresponds to the random variable $X_i$ and an edge $(i,j)\in E$ represents a direct causal relation from variable $X_i$ to $X_j$: $X_i\to X_j$.
A joint distribution $\jointP$ is faithful to a graph $G$ if all and only the conditional independencies present in $\jointP$ are entailed by $G$ \cite{Pearl1988Probabilistic}.
Vice versa, a distribution $\jointP$ is Markov to a graph $G$ if the joint distribution can be factorized as $p(\bm{X})=\prod_{i=1}^{N}p_i(X_i|\pa{X_i})$ where $\pa{X_i}$ is the set of parents of the node $i$ in $G$.
An important concept which distinguishes CGMs from standard Bayesian Networks are interventions.
An intervention on a variable $X_i$ describes the local replacement of its conditional distribution $p_i(X_i|\pa{X_i})$ by a new distribution $\tilde{p}(X_i|\pa{X_i})$.
$X_i$ is thereby referred to as the intervention target.
An intervention is ``perfect'' when the new distribution is independent of the original parents, \emph{i.e.} $\tilde{p}(X_i|\pa{X_i})=\tilde{p}(X_i)$.

\subsection{Causal structure learning}
\label{sec:background_causal_structure_learning}

Discovering the graph $G$ from samples of a joint distribution $\jointP$ is called \textit{causal structure learning} or \textit{causal discovery}, a fundamental problem in causality \cite{Pearl2009Causality,Peters2017Elements}. %
While often performed from observational data, i.e. samples from $\jointP$ (see \citet{Glymour2019Review} for an overview), we focus in this paper on algorithms that recover graphs from joint observational and interventional data. Commonly, such methods are grouped into constraint-based and score-based approaches. %

\textbf{Constraint-based methods} use conditional independence tests to identify causal relations \cite{Monti2020Causal, Spirtes2000Causation, Kocaoglu2019Characterization, Jaber2020Causal, Sun2007Kernel, Hyttinen2014Constraint}. 
For instance, the Invariant Causal Prediction (ICP) algorithm \cite{Peters2016Causal,HeinzeDeml2018Invariant} exploits that causal mechanisms remain unchanged under an intervention except the one intervened on \cite{Pearl2009Causality,Scholkopf2012Causal}.
We rely on a similar idea by testing for mechanisms that generalize from the observational to the interventional setting. Another line of work is to extend methods working on observations only to interventions by incorporating those as additional constraints in the structure learning process \cite{Mooij2020Joint,Jaber2020Causal}.

\textbf{Score-based methods}, on the other hand, search through the space of all possible causal structures with the goal of optimizing a specified metric \cite{Tsamardinos2006MaxMin, Ke2019LearningInterventions, Goudet2017Causal, Zhu2020Causal}. 
This metric, also referred to as score function, is usually a combination of how well the structure fits the data, for instance in terms of log-likelihood, as well as regularizers for encouraging sparsity. 
Since the search space of DAGs is super-exponential in the number of nodes, many methods rely on a greedy search, yet returning graphs in the true equivalence class \cite{Meek1997Graphical, Hauser2012Characterization, Wang2017Permutation, Yang2018Characterizing}.
For instance, GIES \cite{Hauser2012Characterization} repeatedly adds, removes, and flips the directions of edges in a proposal graph until no higher-scoring graph can be found.
The Interventional Greedy SP (IGSP) algorithm \cite{Wang2017Permutation} is a hybrid method using conditional independence tests in its score function. 

\textbf{Continuous-optimization methods} are score-based methods that avoid the combinatorial greedy search over DAGs by using gradient-based methods \cite{Zheng2018DAGs, Ke2019LearningInterventions, Lachapelle2020Gradient, Yu2019DAGGNN, Zheng2020Learning, Zhu2020Causal, Brouillard2020Differentiable}. 
Thereby, the adjacency matrix is parameterized by weights that represent linear factors or probabilities of having an edge between a pair of nodes.
The main challenge of such methods is how to limit the search space to acyclic graphs. 
One common approach is to view the search as a constrained optimization problem and deploy an augmented Lagrangian procedure to solve it \cite{Zheng2018DAGs, Zheng2020Learning, Yu2019DAGGNN, Brouillard2020Differentiable}, including NOTEARS \cite{Zheng2018DAGs} and DCDI \cite{Brouillard2020Differentiable}.
Alternatively, \citet{Ke2019LearningInterventions} propose to use a regularization term penalizing cyclic graphs while allowing unconstrained optimization.
However, the regularizer must be designed and weighted such that the correct, acyclic causal graph is the global optimum of the score function.

\section{Efficient Neural Causal Discovery}
\label{sec:method}

\begin{figure}[t!]
    \vspace{-2mm}
    \centering
    \includegraphics[width=0.8\textwidth]{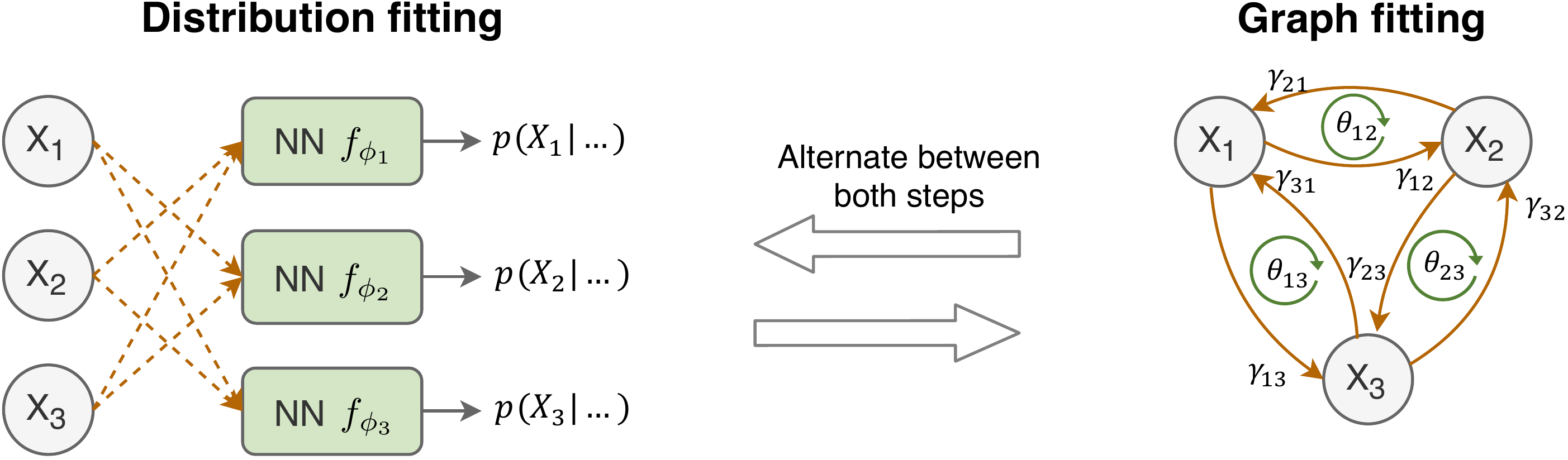}
    \vspace{-1mm}
    \caption{Visualization of the two training stages of \OurApproach{}, distribution fitting and graph fitting, on an example graph with 3 variables ($X_1,X_2,X_3$). The graph on the right further shows how the parameters $\bm{\gamma}$ and $\bm{\theta}$ correspond to edge probabilities. We learn those parameters by comparing multiple graph samples on how well they generalize from observational to interventional data.}
    \label{fig:method_two_step_approach}
    \vspace{-2mm}
\end{figure}

\subsection{Scope and Assumptions}
\label{sec:method_scope_assumptions}

We consider the task of finding a directed acyclic graph $G=(V,E)$ with $N$ variables of an unknown CGM given observational and interventional samples. Firstly, we assume that: (1) The CGM is causally sufficient, \textit{i.e.}, all common causes of variables are included and observable; (2) We have $N$ interventional datasets, each sparsely intervening on a different variable; (3) The interventions are ``perfect'' and ``stochastic'', meaning the intervention does not set the variable necessarily to a single value.
Thereby, we do not strictly require faithfulness, thus also recovering some graphs violating faithfulness.
We emphasize that we place no constraints on the domains of the variables (they can be discrete, continuous, or mixed) or the distributions of the interventions.
We discuss later how to extend the algorithm to infer causal mechanisms in graphs with latent confounding causal variables.
Further, we discuss how to extend the algorithm to support interventions to subsets of variables only.

\subsection{Overview}
\label{sec:method_overview}
\label{sec:method_orientation_params}

\OurApproach{} learns a causal graph from observational and interventional data by modelling a probability for every possible directed edge between pairs of variables. 
The goal is that the probabilities corresponding to the edges of the ground truth graph converge to one, while the probabilities of all other edges converge to zero. 
For this to happen, we exploit the idea of independent causal mechanisms \cite{Pearl2009Causality,Peters2016Causal}, according to which the conditional distributions for all variables in the ground-truth CGM stay invariant under an intervention, except for the intervened ones.
By contrast, for graphs modelling the same joint distribution but with a flipped or additional edge, this does not hold \cite{Peters2016Causal}. 
In short, we search for the graph which generalizes best from observational to interventional data.
To implement the optimization, we alternate between two learning stages, that is distribution fitting and graph fitting, visually summarized in \Figref{fig:method_two_step_approach}. 

\textbf{Distribution fitting} trains a neural network $f_{\phi_i}$ per variable $X_i$ parameterized by $\phi_i$ to model its observational, conditional data distribution $p(X_i|...)$. 
The input to the network are all other variables, $\bm{X}_{-i}$.
For simplicity, we want this neural network to model the conditional of the variable $X_i$ with respect to any possible set of parent variables.
We, therefore, apply a dropout-like scheme to the input to simulate different sets of parents, similar as \citep{Ke2019LearningInterventions, Ivanov2018Variational, Li2020ACFlow, Brouillard2020Differentiable}.
In that case, during training, we randomly set an input variable $X_j$ to zero based on the probability of its corresponding edge $X_j\to X_i$, and minimize
\begin{equation}
    \min_{\phi_i}\E_{\bm{X}}\E_{\bm{M}}\left[-\log f_{\phi_i}(X_i;\bm{M}_{-i}\odot \bm{X}_{-i})\right],
\end{equation}
where $M_{j}\sim\text{Ber}(p(X_j\to X_i))$.
For categorical random variables $X_i$, we apply a softmax output activation for $f_{\phi_i}$, and for continuous ones, we use Normalizing Flows \cite{Rezende2015Variational}.

\textbf{Graph fitting} uses the learned networks to score and compare different graphs on interventional data. 
For parameterizing the edge probabilities, we use two sets of parameters: $\bm{\gamma}\in\R^{N\times N}$ represents the existence of edges in a graph, and $\bm{\theta}\in\R^{N\times N}$ the orientation of the edges. 
The likelihood of an edge is determined by $p(X_i\to X_j)=\sigma(\gamma_{ij})\cdot\sigma(\theta_{ij})$, with $\sigma(...)$ being the sigmoid function and $\theta_{ij}=-\theta_{ji}$.
The probability of the two orientations always sum to one.
The benefit of separating the edge probabilities into two independent parameters $\bm{\gamma}$ and $\bm{\theta}$ is that it gives us more control over the gradient updates.
The existence of an (undirected) edge can usually be already learned from observational or arbitrary interventional data alone, excluding deterministic variables \cite{Pearl2009Causality}.
In contrast, the orientation can only be reliably detected from data for which an intervention is performed on its adjacent nodes, \ie{}, $X_i$ or $X_j$ for learning $\theta_{ij}$.
While other interventions eventually provide information on the edge direction, \textit{e.g.}, intervening on a node $X_k$ which is a child of $X_i$ and a parent of $X_j$, we do not know the relation of $X_k$ to $X_i$ and $X_j$ at this stage, as we are in the process of learning the structure.
Despite having just one variable for the orientation, $\gamma_{ij}$ and $\gamma_{ji}$ are learned as two separate parameters. 
One reason is that on interventional data, an edge can improve the log-likelihood estimate in one direction, but not necessarily the other, leading to conflicting gradients.

We optimize the graph parameters $\bm{\gamma}$ and $\bm{\theta}$ by minimizing
\begin{equation}
    \label{eqn:methodology_original_objective}
    \tilde{\mathcal{L}} = \E_{\hat{I}\sim p_I(I)}\E_{\tilde{p}_{\hat{I}}(\bm{X})}\E_{p_{\bm{\gamma},\bm{\theta}}(C)}\left[\sum_{i=1}^{N}\mathcal{L}_{C}(X_i)\right] + \Lsparse \sum_{i=1}^{N}\sum_{j=1}^{N}\sigma(\gamma_{ij})\cdot\sigma(\theta_{ij})
\end{equation}
where $p_{I}(I)$ is the distribution over which variable to intervene on (usually uniform), and $\tilde{p}_{\hat{I}}(\bm{X})$ the joint distribution of all variables under the intervention $\hat{I}$.
In other words, these two distributions represent our interventional data distribution.
With $p_{\bm{\gamma},\bm{\theta}}(C)$, we denote the distribution over adjacency matrices $C$ under $\bm{\gamma},\bm{\theta}$, where $C_{ij}\sim\text{Ber}(\sigma(\gamma_{ij})\sigma(\theta_{ij}))$.
$\mathcal{L}_{C}(X_i)$ is the negative log-likelihood estimate of variable $X_i$ conditioned on the parents according to $C$: $\mathcal{L}_{C}(X_i)=-\log f_{\phi_i}(X_i;C_{\cdot,i}\odot \bm{X}_{-i})$.
The second term of \Eqref{eqn:methodology_original_objective} is an $\ell_1$-regularizer on the edge probabilities. It acts as a prior, selecting the sparsest graph of those with similar likelihood estimates by removing redundant edges.

\textbf{Prediction.} Alternating between the distribution and graph fitting stages allows us to fine-tune the neural networks to the most probable parent sets along the training. 
After training, we obtain a graph prediction by selecting the edges for which $\sigma(\gamma_{ij})$ and $\sigma(\theta_{ij})$ are greater than 0.5. 
The orientation parameters prevent loops between any two variables, since $\sigma(\theta_{ij})$ can only be greater than 0.5 in one direction. 
Although the orientation parameters do not guarantee the absence of loops with more variable, we show that under certain conditions \OurApproach{} yet converges to the correct, acyclic graph.

\subsection{Low-variance gradient estimators for edge parameters}
\label{sec:method_gradient_estimators}

To update $\bm{\gamma}$ and $\bm{\theta}$ based on \Eqref{eqn:methodology_original_objective}, we need to determine their gradients through the expectation $\E_{p_{\bm{\gamma},\bm{\theta}}(C)}$, where $C$ is a discrete variable. 
For this, we apply REINFORCE \cite{Williams1992simplestatistical}. 
For clarity of exposition, we limit the discussion here to the final results and provide the detailed derivations in Appendix~\ref{sec:appendix_gradient_estimators}.
For parameter $\gamma_{ij}$, we obtain the following gradient:
\begin{equation}
    \label{eqn:methodology_edge_gradients}
    \frac{\partial}{\partial \gamma_{ij}}\tilde{\mathcal{L}}=\sigma'(\gamma_{ij})\cdot\sigma(\theta_{ij})\cdot\E_{\bm{X},C_{-ij}}\left[\mathcal{L}_{X_i\to X_j}(X_j)-\mathcal{L}_{X_i\not\to X_j}(X_j)+\lambda_{\text{sparse}}\right]
\end{equation}
where $\E_{\bm{X},C_{-ij}}$ summarizes for brevity the three expectations in \Eqref{eqn:methodology_original_objective}, excluding the edge $X_i\to X_j$ from $C$.
Further, $\mathcal{L}_{X_i\to X_j}(X_j)$ denotes the negative log-likelihood for $X_j$, if we include the edge $X_i \to X_j$ to the adjacency matrix $C_{-ij}$, \emph{i.e.}, $C_{ij}=1$, and $\mathcal{L}_{X_i\not\to X_j}(X_j)$ if we set $C_{ij}=0$.
The gradient in \Eqref{eqn:methodology_edge_gradients} can be intuitively explained: if by the addition of the edge $X_i\to X_j$, the log-likelihood estimate of $X_j$ is improved by more than $\Lsparse$, we increase the corresponding edge parameter $\gamma_{ij}$; otherwise, we decrease it.

We derive the gradients for the orientation parameters $\bm{\theta}$ similarly.
As mentioned before, we only take the gradients for $\theta_{ij}$ when we perform an intervention on either $X_i$ or $X_j$.
This leads us to:
\begin{equation}
    \label{eqn:methodology_orientation_gradients}
    \begin{split}
        \frac{\partial}{\partial \theta_{ij}}\tilde{\mathcal{L}} = \sigma'(\theta_{ij})\Big(&p(I_{X_i})\cdot\sigma(\gamma_{ij})\cdot \E_{I_{X_i},\bm{X},C_{-ij}}\left[\mathcal{L}_{X_i\to X_j}(X_j)-\mathcal{L}_{X_i\not\to X_j}(X_j)\right]-\\[-5pt]
        & p(I_{X_j})\cdot\sigma(\gamma_{ji})\cdot\E_{I_{X_j},\bm{X},C_{-ij}}\left[\mathcal{L}_{X_j\to X_i}(X_i)-\mathcal{L}_{X_j\not\to X_i}(X_i)\right]\Big)
    \end{split}
\end{equation}
The probability of taking an intervention on $X_i$ is represented by $p(I_{X_i})$ (usually uniform across variables), and $\E_{I_{X_i},\bm{X},C_{-ij}}$ the same expectation as before under the intervention on $X_i$.
When the oriented edge $X_i\to X_j$ improves the log-likelihood of $X_j$ under intervention on $X_i$, then the first part of the gradient increases $\theta_{ij}$. 
In contrast, when the true edge is $X_j\to X_i$, the correlation between $X_i$ and $X_j$ learned from observational data would yield a worse likelihood estimate of $X_j$ on interventional data on $X_i$ than without the edge $X_j\to X_i$.
This is because $p(X_j|X_i,...)$ does not stay invariant under intervening on $X_i$.
The same dynamic holds for interventions on $X_j$.
Lastly, for independent nodes, the expectation of the gradient is zero.

Based on \twoEqref{eqn:methodology_edge_gradients}{eqn:methodology_orientation_gradients}, we obtain a tractable, unbiased gradient estimator by using Monte-Carlo sampling. 
Luckily, samples can be shared across variables, making training efficient. 
We first sample an intervention, a corresponding data batch, and $K$ graphs from $p_{\bm{\gamma},\bm{\theta}}(C)$ ($K$ usually between 20 and 100). 
We then evaluate the log likelihoods of all variables for these graphs on the batch, and estimate $\mathcal{L}_{X_i\to X_j}(X_j)$ and $\mathcal{L}_{X_i\not\to X_j}(X_j)$ for all pairs of variables $X_i$ and $X_j$ by simply averaging the results for the two cases separately. 
Finally, the estimates are used to determine the gradients for $\bm{\gamma}$ and $\bm{\theta}$.

\textbf{Low variance.} Previous methods \cite{Ke2019LearningInterventions,Bengio2019MetaTransfer} relied on a different REINFORCE-like estimator proposed by \citet{Bengio2019MetaTransfer}.
Adjusting their estimator to our setting of the parameter $\gamma_{ij}$, for instance, the gradient looks as follows:
\begin{equation}
    \label{eqn:method_reinforce_edges}
    g_{ij}=\sigma(\theta_{ij})\cdot\E_{\bm{X}}\left[\frac{\E_{C}\left[(\sigma(\gamma_{ij})-C_{ij})\cdot\mathcal{L}_{C}(X_j)\right]}{\E_{C}\left[\mathcal{L}_{C}(X_j)\right]}+\lambda_{\text{sparse}}\right]
\end{equation}
\begin{wrapfigure}{r}{5.2cm}
    \vspace{-0mm}
    \centering
    \vspace{-4mm}
    \includegraphics[width=0.35\textwidth]{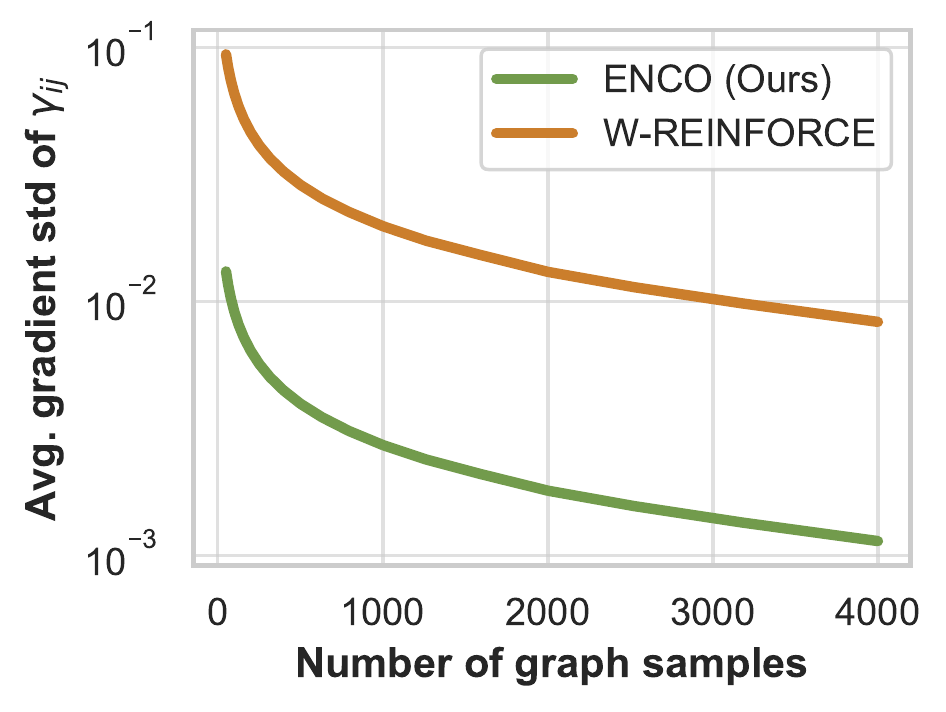}
    \caption{\OurApproach{} estimates gradients of significantly lower variance compared to \cite{Bengio2019MetaTransfer}.%
    }
    \label{fig:method_gradient_variance}
    \vspace{-3mm}
\end{wrapfigure}
where $g_{ij}$ represents the gradient of $\gamma_{ij}$.
Performing Monte-Carlo sampling for estimating the gradient leads to a biased estimate which becomes asymptotically unbiased with increasing number of samples \cite{Bengio2019MetaTransfer}.
The division by the expectation of $\mathcal{L}_{C}(X_j)$ is done for variance reduction \cite{Mahmood2014Weighted}.
\Eqref{eqn:method_reinforce_edges}, however, is still sensitive to the proportion of sampled $C_{ij}$ being one or zero.
A major benefit of our gradient formulation in \Eqref{eqn:methodology_edge_gradients}, instead, is that it removes this noise by considering the difference of the two independent Monte-Carlo estimates $\mathcal{L}_{X_i\to X_j}(X_j)$ and $\mathcal{L}_{X_i\not\to X_j}(X_j)$.
Hence, we can use a smaller sample size than previous methods and attain 10 times lower standard deviation, as visualized in \Figref{fig:method_gradient_variance}.

\subsection{Convergence guarantees}
\label{sec:method_guarantees}

Next, we discuss the conditions under which \OurApproach{} convergences to the correct causal graph.
We show that not only does the global optimum of \Eqref{eqn:methodology_original_objective} correspond to the true graph, but also that there exist no other local minima \OurApproach{} can converge to.
We outline the derivation and proof of these conditions in Appendix~\ref{sec:appendix_guarantees}, and limit our discussion here to the main assumptions and implications.

To construct a theoretical argument, we make the following assumptions.
First, we assume that sparse interventions have been performed on all variables.
Later, we show how to extend the algorithm to avoid this strong assumption.
Further, given a CGM, we assume that its joint distribution $p(\bm{X})$ is Markovian with respect to the true graph $\mathcal{G}$. 
In other words, the parent set $\text{pa}(X_i)$ reflects the inputs to the causal generation mechanism of $X_i$. 
We assume that there exist no latent confounders in $\mathcal{G}$.  
Also, we assume the neural networks in ENCO are sufficiently large and sufficient observational data is provided to model the conditional distributions of the CGM up to an arbitrary small error.

Under these assumptions, we produce the following conditions for convergence:

\begin{theorem}
    \label{theo:main_text_convergence}
    Given a causal graph $\mathcal{G}$ with variables $X_{1},...,X_{N}$ and conditional observational distributions $p(X_i|...)$, the proposed method \OurApproach{} will converge to the true, causal graph $\mathcal{G}$, if the following conditions hold for all edges $X_i\to X_j$ in $\mathcal{G}$:
    \begin{enumerate}[topsep=0pt,leftmargin=1cm]
        \item For all possible sets of parents of $X_j$ excluding $X_i$, by adding $X_i$ the log-likelihood estimate of $X_j$ is improved or unchanged under the intervention on $X_i$:
        \begin{equation}
            \label{eqn:methodology_guarantees_orientation_assumption}
            \forall \widehat{\text{pa}}(X_j)\subseteq X_{-i,j}: \E_{I_{X_i},\mathbf{X}}\left[\log p(X_j|\widehat{\text{pa}}(X_j),X_i) - \log p(X_j|\widehat{\text{pa}}(X_j))\right] \geq 0
        \end{equation}
        \item For at least one set of nodes $\widehat{\text{pa}}(X_j)$, for which the probability to be sampled as parents of $X_j$ is greater than 0, \Eqref{eqn:methodology_guarantees_orientation_assumption} must be strictly greater than zero.
        \item The effect of $X_i$ on $X_j$ cannot be described by other variables up to $\lambda_{\text{sparse}}$: 
        \begin{equation}
            \label{eqn:methodology_guarantees_edge_assumption}
            \min_{\hat{\text{pa}}\subseteq \text{gpa}_i(X_j)}\E_{\hat{I}\sim p_{I_{-j}}(I)}\E_{\tilde{p}_{\hat{I}}(\mathbf{X})}\big[\log p(X_j|\hat{\text{pa}},X_i) - \log p(X_j|\hat{\text{pa}})\big] > \lambda_{\text{sparse}}
        \end{equation}
        where $\text{gpa}_i(X_j)$ is the set of nodes excluding $X_i$ which, according to the ground truth graph, could have an edge to $X_j$ without introducing a cycle, and $p_{I_{-j}}(I)$ refers to the distribution over interventions $p_I(I)$ excluding the intervention on variable $X_{j}$.
    \end{enumerate}
    Further, for all other pairs $X_i,X_j$ for which $X_j$ is a descendant of $X_i$, condition 1 and 2 must hold.
\end{theorem}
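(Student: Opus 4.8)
The plan is to treat \OurApproach{} as a (population) gradient flow on $(\bm{\gamma},\bm{\theta})$ and to show, by a sign analysis of \Eqref{eqn:methodology_edge_gradients} and \Eqref{eqn:methodology_orientation_gradients}, that every parameter is driven monotonically to the limit prescribed by $\mathcal{G}$, so that the true graph is the only configuration the dynamics can approach and there are no spurious stationary points in which to get trapped. In both gradients the prefactors $\sigma'(\cdot)$, $\sigma(\gamma_{ij})$, $\sigma(\theta_{ij})$ are strictly positive for finite parameters, so the sign of each update is governed entirely by the bracketed expected log-likelihood differences. First I would fix notation, writing $E_1$ and $E_2$ for the two interventional expectations appearing in \Eqref{eqn:methodology_orientation_gradients}, and recall that $\mathcal{L}$ is a \emph{negative} log-likelihood, so a log-likelihood improvement corresponds to a negative entry in each bracket.

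\textbf{Orientation parameters.} I would analyse $\theta_{ij}$ for every ordered pair with $X_j$ a descendant of $X_i$ in $\mathcal{G}$ (direct edges being the special case covered by the first clause). Intervening on the ancestor $X_i$ leaves it causally informative about $X_j$, so condition 1 gives $E_1\le 0$, which condition 2 sharpens to $E_1<0$ because a strict improvement occurs on at least one parent set of positive sampling probability; intervening on the descendant $X_j$ severs every directed path into $X_j$, so the observational dependence of $X_i$ on $X_j$ becomes misleading and adding $X_j$ as a predictor of $X_i$ cannot help, giving $E_2\ge 0$. Substituting into \Eqref{eqn:methodology_orientation_gradients}, both contributions are $\le 0$ and the gradient is strictly negative, so gradient descent sends $\theta_{ij}\to+\infty$, i.e. $\sigma(\theta_{ij})\to 1$ and $\sigma(\theta_{ji})\to 0$. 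Crucially this holds \emph{independently of} $\bm{\gamma}$, since the prefactors stay positive; the consequence I would record is that, in the limit, no descendant of $X_j$ can be sampled as one of its parents.

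\textbf{Edge parameters.} Armed with the orientation limits, the sampled parent sets of $X_j$ are eventually confined to subsets of the non-descendant set $\text{gpa}_i(X_j)$. For a genuine edge $X_i\to X_j$, condition 3 then guarantees that for \emph{every} such admissible subset the expected log-likelihood improvement from adding $X_i$ exceeds $\lambda_{\text{sparse}}$; hence the bracket in \Eqref{eqn:methodology_edge_gradients} is negative and $\gamma_{ij}\to+\infty$. For a non-edge $X_i\to X_j$, once the true parents of $X_j$ carry high sampling probability (their own $\gamma$ having diverged), the Markov property renders $X_i$ conditionally independent of $X_j$, so the expected improvement is at most zero, the $+\lambda_{\text{sparse}}$ term dominates, the bracket is positive, and $\gamma_{ij}\to-\infty$. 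Combining the two limits, the selection rule $\sigma(\gamma_{ij})\sigma(\theta_{ij})>0.5$ recovers exactly the edges of $\mathcal{G}$; and because all ancestor--descendant orientations agree with a single topological order, the recovered graph is acyclic, i.e. equals $\mathcal{G}$.

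\textbf{Main obstacle.} The delicate point is the circular dependence between the two stages: the $\gamma_{ij}$ gradient averages over random parent sets $C_{-ij}$ whose law is itself set by the current $(\bm{\gamma},\bm{\theta})$, so before the parameters settle the sign of the per-sample improvement can fluctuate (a missing co-parent can make a non-edge look informative, or a true edge look redundant). Making the argument rigorous therefore requires breaking this coupling --- most naturally by inducting along a topological order of $\mathcal{G}$, establishing the orientation and edge limits for sources first so that the parent sets feeding condition 3 are provably restricted to $\text{gpa}_i(X_j)$, and checking that the minimum-over-subsets margin in condition 3 dominates any transient wrong-sign contributions. Verifying that this induction closes, and that the monotone sign of each population gradient indeed precludes convergence to any graph other than $\mathcal{G}$, is where the bulk of the work lies.
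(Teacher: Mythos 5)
Your overall architecture matches the paper's proof in Appendix~\ref{sec:appendix_guarantees}: a sign analysis of the population gradients, where the orientation parameters $\bm{\theta}$ are driven to their limits first for all ancestor--descendant pairs (Theorems~\ref{theo:appendix_guarantees_theta} and \ref{theo:appendix_guarantees_theta_ancestor}), then $\gamma_{ij}$ diverges for true edges via condition 3 (Theorem~\ref{theo:appendix_guarantees_gamma}), and finally all remaining edges are removed by the sparsity term (Theorem~\ref{theo:appendix_guarantees_lambda}). One structural remark: the paper does not need your closing topological-order induction for the edge stage. The $\gamma$--$\theta$ coupling you identify is neutralized by construction --- conditions 1 and 2 make the $\theta$-gradient sign correct uniformly in $\bm{\gamma}$ (the positive prefactors only scale it), and the minimum over \emph{all} subsets $\hat{\text{pa}}\subseteq\text{gpa}_i(X_j)$ in condition 3 makes the $\gamma_{ij}$-gradient negative regardless of the current values of the other edge parameters, once the converged orientations have excluded descendants from the candidate parent sets. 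So the worst-case quantification in the conditions is itself the decoupling device.

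The genuine gap is your one-line claim that $E_2\ge 0$. It is not unconditionally true that intervening on the descendant $X_j$ makes $X_j$ useless as a predictor of $X_i$: if $X_i$ and $X_j$ share a collider $X_k$ (a common child) and the sampled adjacency matrix conditions $X_i$ on both $X_k$ and $X_j$, then $X_i$ and $X_j$ remain dependent given $X_k$ even under a perfect intervention on $X_j$, and the observational conditional $p(X_i|X_j,X_k)$ can transfer well enough that the bracket in $E_2$ is negative. This is precisely the delicate point on which the paper's proof of Theorem~\ref{theo:appendix_guarantees_theta} spends its effort: such a conditioning set can only be sampled with non-vanishing probability if the orientation parameter for the child edge $X_i\to X_k$ fails to converge, which is ruled out by applying the same theorem to the pair $(X_i,X_k)$; when $X_i$ and $X_k$ in turn share a collider, the argument recurses, terminating because the node set is finite and the true graph acyclic. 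A parallel d-separation argument is needed in Theorem~\ref{theo:appendix_guarantees_theta_ancestor} for ancestor--descendant pairs without a direct edge, where the gradient can be exactly zero under some sampled parent sets and one must show the problematic sets cannot persist. Your ``main obstacle'' paragraph gestures at the right kind of bootstrapping, but it is aimed at a coupling the stated conditions already handle, while the actual hole --- collider-induced dependence under interventions on the descendant --- requires the recursion over shared colliders, not an induction from sources.
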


Condition 1 and 2 ensure that the orientations can be learned from interventions. Intuitively, ancestors and descendants in the graph have to be dependent when intervening on the ancestors. This aligns with the technical interpretation in Theorem~\ref{theo:main_text_convergence} that the likelihood estimate of the child variable must improve when intervening and conditioning on its ancestor variables.
Condition 3 states intuitively that the sparsity regularizer needs to be selected such that it chooses the sparsest graph among those graphs with equal joint distributions as the ground truth graph, without trading sparsity for worse distribution estimates. The specific condition in Theorem~\ref{theo:main_text_convergence} ensures thereby that the set can be learned with a gradient-based algorithm.
We emphasize that this condition only gives an upper bound for $\lambda_\text{sparse}$ when sufficiently large datasets are available.
In practice, the graph can thus be recovered with a sufficiently small sparsity regularizer and dependencies among variables under interventions. We provide more details for various settings and further intuition in Appendix~\ref{sec:appendix_guarantees}.

\textbf{Interventions on fewer variables.}
It is straightforward to extend \OurApproach{} to support interventions on fewer variables.
Normally, in the graph fitting stage, we sample one intervention at a time.
We can, thus, simply restrict the sampling only to the interventions that are possible (or provided in the dataset). 
In this case, we update the orientation parameters $\theta_{ij}$ of only those edges that connect to an intervened variable, either $X_i$ or $X_j$, as before.
For all other orientation parameters, we extend the gradient estimator to include interventions on all variables.
Although this estimate is more noisy and does not have convergence guarantees, it can still be informative about the edge orientations.

\textbf{Enforcing acyclicity} 
When the conditions are violated, e.g. by limited data, cycles can occur in the prediction. 
Since ENCO learns the orientations as a separate parameter, we can remove cycles by finding the global order of variables $O\in S_N$, with $S_N$ being the set of permutations, that maximizes the pairwise orientation probabilities: $\arg\max_{O} \prod_{i=1}^{N} \prod_{j=i+1}^{N} \sigma(\theta_{O_i,O_j})$.
This utilizes the learned ancestor-descendant relations, making the algorithm more robust to noise in single interventions.

\subsection{Handling latent confounders}
\label{sec:method_latent_variables}

So far, we have assumed that all variables of the graph are observable and can be intervened on. 
A common issue in causal discovery is the existence of latent confounders, \emph{i.e.}, an unobserved common cause of two or more variables introducing dependencies between each other. %
In the presence of latent confounders, structure learning methods may predict false positive edges. 
Interestingly, in the context of \OurApproach{} latent confounders for two variables $X_i,X_j$ cause a unique pattern of learned parameters.
When intervening on $X_i$ or $X_j$, having an edge between the two variables is disadvantageous, as in the intervened graph $X_i$ and $X_j$ are (conditionally) independent. 
For interventions on all other variables, however, an edge can be beneficial as $X_i$ and $X_j$ are correlated.

Exploiting this, we extend \OurApproach{} to detect latent confounders.
We focus on latent confounders between two variables that do not have any direct edges with each other, and assume that the confounder is not a child of any other observed variable.
For all other edges besides between $X_i$ and $X_j$, we can still rely on the guarantees in \Secref{sec:method_guarantees} since \Eqref{eqn:methodology_guarantees_edge_assumption} already includes the possibility of additional edges in such cases.
After convergence, we score every pair of variables on how likely they share a latent confounder using a function $\text{lc}(\cdot)$ that is maximized in the scenario mentioned above. 
For this, we define $\gamma_{ij}=\gamma^{(I)}_{ij}+\gamma^{(O)}_{ij}$ where $\gamma^{(I)}_{ij}$ is only updated with gradients from \Eqref{eqn:methodology_edge_gradients} under interventions on $X_i$, and $\gamma^{(O)}_{ij}$ on all others.
With this separation, we define the following score function which is maximized by latent confounders: %
\begin{equation}
    \label{eqn:methodology_latent_confounder_score} \text{lc}(X_i,X_j)=\sigma\left(\gamma^{(O)}_{ij}\right)\cdot\sigma\left(\gamma^{(O)}_{ji}\right)\cdot\left(1-\sigma\left(\gamma^{(I)}_{ij}\right)\right)\cdot\left(1-\sigma\left(\gamma^{(I)}_{ji}\right)\right)
\end{equation}
To converge to the mentioned values, especially of $\gamma^{(O)}_{ij}$, we need a similar condition as in \Eqref{eqn:methodology_guarantees_edge_assumption}: the improvement on the log-likelihood estimate gained by the edge $X_i\to X_j$ and conditioned on all other parents of $X_j$ needs to be larger than $\Lsparse$ on interventional data excluding $X_i$ and $X_j$.
If this is not the case, the sparsity regularizer will instead remove the edge between $X_i$ and $X_j$ preventing any false predictions among observed variables.
For all other pairs of variables, at least one of the terms in \Eqref{eqn:methodology_latent_confounder_score} converges to zero. 
Thus, we can detect latent confounders by checking whether the score function $\text{lc}(X_i,X_j)$ is greater than a threshold hyperparameter $\tau\in(0.0,1.0)$.
We discuss possible guarantees in Appendix~\ref{sec:appendix_guarantees}, and experimentally verify this approach in \Secref{sec:experiments_latent_variables}.

\section{Experiments}
\label{sec:experiments}

We evaluate \OurApproach{} on structure learning on synthetic datasets for systematic comparisons and real-world datasets for benchmarking against other methods in the literature. 
The experiments focus on graphs with categorical variables, and experiments on continuous data are included in Appendix~\ref{sec:appendix_continuous_data}.
\ifexport
Our code is publicly available at \url{https://github.com/phlippe/ENCO}.
\fi

\subsection{Experimental setup}
\label{sec:experimental_setup}

\textbf{Graphs and datasets.} 
Given a ground-truth causal graphical model, all methods are tasked to recover the original DAG from a set of observational and interventional data points for each variable.
In case of synthetic graphs, we follow the setup of \citet{Ke2019LearningInterventions} and create the conditional distributions from neural networks.
These networks take as input the categorical values of its variable's parents, and are initialized orthogonally to output a non-trivial distribution.

\textbf{Baselines.} We compare \OurApproach{} to GIES \cite{Hauser2012Characterization} and IGSP \cite{Wang2017Permutation, Yang2018Characterizing} as greedy score-based approaches, and DCDI \cite{Brouillard2020Differentiable} and SDI \cite{Ke2019LearningInterventions} as continuous optimization methods.
Further, as a common observational baseline, we apply GES \cite{Chickering2002Optimal} on the observational data to obtain a graph skeleton, and orient each edge by learning the skeleton on the corresponding interventional distribution.
We perform a separate hyperparameter search for all baselines, and use the same neural network setup for SDI, DCDI, and \OurApproach{}.
Appendix~\ref{sec:appendix_hyperparameters} provides a detailed overview of the hyperparameters for all experiments.

\subsection{Causal structure learning on common graph structures}
\label{sec:experiment_synthetic}
\label{sec:experiments_common_graphs}

\begin{table}[t!]
    \vspace{-1mm}
    \centering
    \addtolength{\leftskip}{-4cm}
    \addtolength{\rightskip}{-4cm}
    \small
    \caption{Comparing structure learning methods in terms of structural hamming distance (SHD) on common graph structures (lower is better), averaged over 25 graphs each. ENCO outperforms all baselines, and by enforcing acyclicity after training, can recover most graphs with minimal errors.}
    \label{tab:experiments_simulated_results}
    \vspace{-1mm}
    \resizebox{\textwidth}{!}{
    \begin{tabular}{l|cccccc}
        \toprule
		\textbf{Graph type} & \textbf{bidiag} & \textbf{chain} & \textbf{collider} & \textbf{full} & \textbf{jungle} & \textbf{random}\\
		\midrule
		GIES %
		& $33.6$ \footnotesize{($\pm7.5$)} & $17.5$ \footnotesize{($\pm7.3$)} & $24.0$ \footnotesize{($\pm2.9$)} & $216.5$ \footnotesize{($\pm15.2$)} & $33.1$ \footnotesize{($\pm2.9$)} & $57.5$ \footnotesize{($\pm14.2$)}\\
		IGSP %
		& $32.7$ \footnotesize{($\pm5.1$)} & $14.6$ \footnotesize{($\pm2.3$)} & $23.7$ \footnotesize{($\pm2.3$)} & $253.8$ \footnotesize{($\pm12.6$)} & $35.9$ \footnotesize{($\pm5.2$)} & $65.4$ \footnotesize{($\pm8.0$)}\\
		GES + Orientating & $14.8$ \footnotesize{($\pm2.6$)} & $0.5$ \footnotesize{($\pm0.7$)} & $20.8$ \footnotesize{($\pm 2.4$)} & $282.8$ \footnotesize{($\pm 4.2$)} & $14.7$ \footnotesize{($\pm 3.1$)} & $60.1$ \footnotesize{($\pm 8.9$)}\\
		SDI & $9.0$ ($\pm 2.6$) & $3.9$ ($\pm 2.0$) & $16.1$ ($\pm 2.4$) & $153.9$ ($\pm 10.3$) & $6.9$ ($\pm 2.3$) & $10.8$ ($\pm 3.9$) \\
        DCDI & $16.9$ ($\pm 2.0$) & $10.1$ ($\pm 1.1$) & $10.9$ ($\pm 3.6$) & $21.0$ ($\pm 4.8$) & $17.9$ ($\pm 4.1$) & $7.7$ ($\pm 3.2$) \\
        \midrule
        ENCO (ours) & $2.2$ ($\pm 1.4$) & $1.7$ ($\pm 1.3$) & $\bm{1.6}$ ($\pm 1.6$) & $9.2$ ($\pm 3.4$) & $1.7$ ($\pm 1.3$) & $4.6$ ($\pm 1.9$) \\
        ENCO-acyclic (ours) & $\bm{0.0}$ ($\pm 0.0$) & $\bm{0.0}$ ($\pm 0.0$) & $\bm{1.6}$ ($\pm 1.6$) & $\bm{5.3}$ ($\pm 2.3$) & $\bm{0.6}$ ($\pm 1.1$) & $\bm{0.2}$ ($\pm 0.5$) \\
        \bottomrule
    \end{tabular}
    }
    \vspace{-1mm}
\end{table}

We first experiment on synthetic graphs.
We pick six common graph structures and sample 5,000 observational data points and 200 per intervention. %
The graphs \texttt{chain} and \texttt{full} represent the minimally- and maximally-connected DAGs. 
The graph \texttt{bidiag} is a chain with 2-hop connections, and \texttt{jungle} is a tree-like graph. 
In the \texttt{collider} graph, one node has all other nodes as parents.
Finally, \texttt{random} has a randomly sampled graph structure with a likelihood of $0.3$ of two nodes being connected by a direct edge.
For each graph structure, we generate 25 graphs with 25 nodes each, on which we report the average performance and standard deviation.
Following common practice, we use structural hamming distance (SHD) as evaluation metric.
SHD counts the number of edges that need to be removed, added, or flipped in order to obtain the ground truth graph. 

\Tabref{tab:experiments_simulated_results} shows that the continuous optimization methods outperform the greedy search approaches on categorical variables.
SDI works reasonably well on sparse graphs, but struggles with nodes that have many parents. 
DCDI can recover the collider and full graph to a better degree, yet degrades for sparse graphs.
ENCO performs well on all graph structures, outperforming all baselines.
For sparse graphs, cycles can occur due to limited sample size.
However, with enforcing acyclicity, ENCO-acyclic is able to recover four out of six graphs with less than one error on average.
We further include experiments with various sample sizes in Appendix~\ref{sec:appendix_sparse_data}.
While other methods do not reliably recover the causal graph even for large sample sizes, \OurApproach{} attains low errors even with smaller sample sizes.

\subsection{Scalability}
\label{sec:experiments_scalability}

\begin{wrapfigure}{r}{6.9cm}
    \centering
    \vspace{-4mm}
    \includegraphics[width=0.48\textwidth]{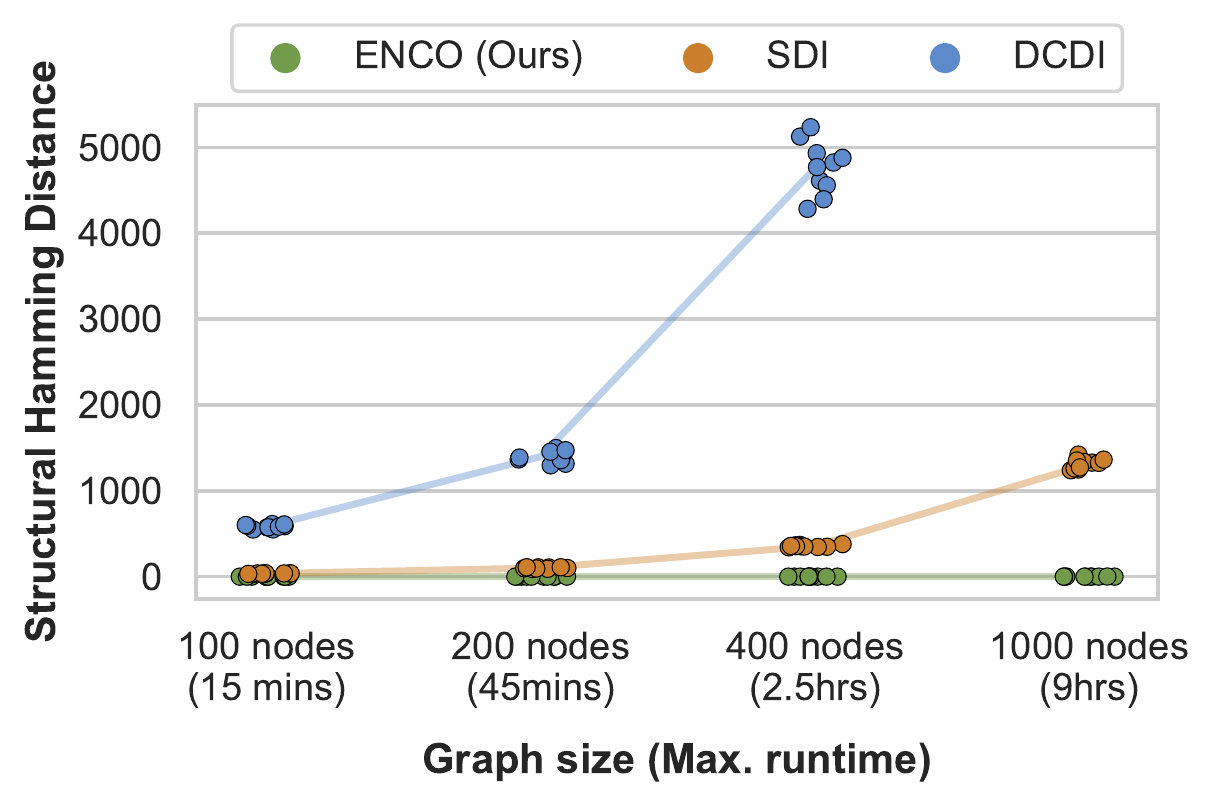}
    \vspace{-3mm}
    \caption{
    Evaluating SDI, DCDI, and \OurApproach{} on large graphs in terms of SHD (lower is better). 
    Dots represent single experiments, lines connect the averages. 
    DCDI ran OOM for 1000 nodes.
    }
    \vspace{-3mm}
    \label{fig:experiments_scalability_results}
\end{wrapfigure}

Next, we test \OurApproach{} on graphs with large sets of variables.
We create \texttt{random} graphs ranging from $N=100$ to $N=1$,$000$ nodes with larger sample sizes.
Every node has on average 8 edges and a maximum of 10 parents.
The challenge of large graphs is that the number of possible edges grows quadratically and the number of DAGs super-exponentially, requiring  efficient methods.

We compare \OurApproach{} to the two best performing baselines from \Tabref{tab:experiments_simulated_results}, SDI and DCDI.
All methods were given the same setup of neural networks and a maximum runtime which corresponds to 30 epochs for \OurApproach{}.
We plot the SHD over graph size and runtime in Figure~\ref{fig:experiments_scalability_results}.
\OurApproach{} recovers the causal graphs perfectly with no errors except for the $1$,$000$-node graph, for which it misses only one out of 1 million edges in 2 out of 10 experiments. 
SDI and DCDI achieve considerably worse performance.
This shows that \OurApproach{} can efficiently be applied to $1$,$000$ variables while maintaining its convergence guarantees, underlining the benefit of its low-variance gradient estimators.

\subsection{Interventions on fewer variables}
\label{sec:experiments_fewer_interventions}

\begin{figure}[t!]
    \vspace{-1mm}
    \centering
    \includegraphics[width=\textwidth]{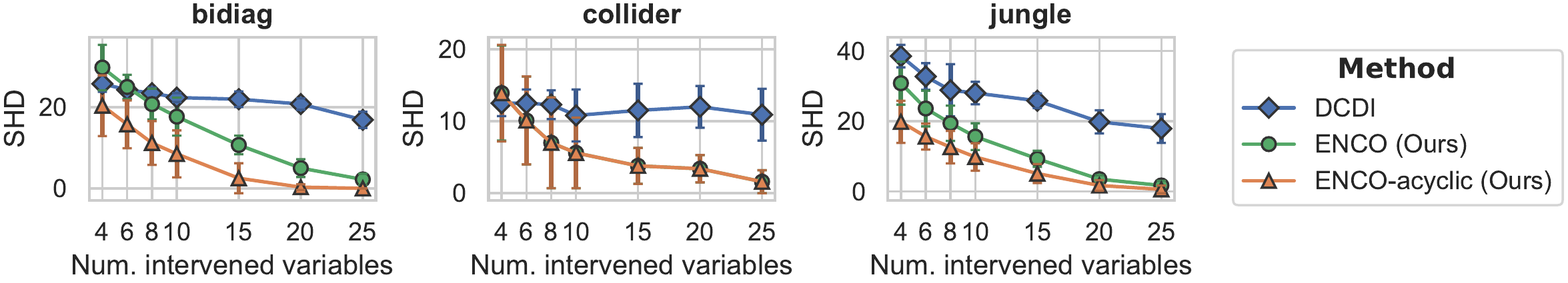}
    \caption{Experiments on graphs with interventions on fewer variables. Additional graphs are shown in Appendix~\ref{sec:appendix_partial_interventions}. ENCO outperforms DCDI on bidiag and jungle, even for very few interventions.}
    \label{fig:partial_interventions_results_plot}
    \vspace{-3mm}
\end{figure}

We perform experiments on the same datasets as in \Secref{sec:experiment_synthetic}, but provide interventional data only for a randomly sampled subset of the 25 variables of each graph.
We compare ENCO to DCDI, which supports partial intervention sets, and plot the SHD over the number of intervened variables in \Figref{fig:partial_interventions_results_plot}.
Despite ENCO's guarantees only holding for full interventions, it is still competitive and outperforms DCDI in most settings.
Importantly, enforcing acyclicity has an even greater impact on fewer interventions as more orientations are trained on non-adjacent interventions (see Appendix~\ref{sec:appendix_convergence_partial_interventions} for detailed discussion).
We conclude that ENCO works competitively with partial interventions too.

\subsection{Detecting latent confounders}
\label{sec:experiments_latent_variables}

\begin{wraptable}{r}{6cm}
    \vspace{-4mm}
    \centering
    \small
    \caption{Results of \OurApproach{} on detecting latent confounders. %
    The missed confounders do not affect other edge predictions.
    }
    \vspace{-3mm}
    \label{tab:experiments_latent_results}
    \begin{tabular}{l|c}
        \toprule
		Metrics & \OurApproach{} \\
		\midrule
		SHD & $0.0$ \footnotesize{($\pm 0.0$)} \\
		Confounder recall & $96.8\%$  \footnotesize{($\pm 9.5\%$)} \\
		Confounder precision & $100.0\%$ \footnotesize{($\pm 0.0\%$)} \\
        \bottomrule
    \end{tabular}
    \vspace{-3mm}
\end{wraptable}

To test the detection of latent confounders, we create a set of 25 \texttt{random} graphs with 5 additional latent confounders.
The dataset is generated in the same way as before, except that we remove the latent variable from the input data and increase the observational and interventional sample size (see Appendix~\ref{sec:appendix_hyperparameters_latent_confounders} for ablation studies).
After training, we predict the existence of a latent confounder on any pair of variables $X_i$ and $X_j$ if $\text{lc}(X_i,X_j)$ is greater than $\tau$.
We choose $\tau=0.4$ but verify in Appendix~\ref{sec:appendix_hyperparameters_latent_confounders} that the method is not sensitive to the specific value of $\tau$. %
As shown in \Tabref{tab:experiments_latent_results}, \OurApproach{} detects more than $95\%$ of the latent confounders without any false positives.
What is more, the few mistakes do not affect the detection of all other edges, which are recovered perfectly.

\subsection{Real-world inspired data}
\label{sec:experiments_real_world}

\begin{table}[t!]
    \centering
    \addtolength{\leftskip}{-2cm}
    \addtolength{\rightskip}{-2cm}
    \small
    \caption{Results on graphs from the BnLearn library measured in structural hamming distance (lower is better). Results are averaged over 5 seeds with standard deviations listed in Appendix~\ref{sec:appendix_hyperparameters_real_world}.
    Despite deterministic variables and rare events, \OurApproach{} can recover all graphs with almost no errors.
    }
    \vspace{-1mm}
    \label{tab:experiments_real_world_results}
    \resizebox{\textwidth}{!}{
    \begin{tabular}{l|ccccccc}
        \toprule
		Dataset & cancer %
		& asia %
		& sachs %
		& child %
		& alarm %
		& diabetes %
		& pigs %
		\\
		 & \footnotesize{(5 nodes)} & \footnotesize{(8 nodes)} & \footnotesize{(11 nodes)} & \footnotesize{(20 nodes)} & \footnotesize{(37 nodes)} & \footnotesize{(413 nodes)} & \footnotesize{(441 nodes)}\\
		\midrule
		SDI %
		& $3.0$ & $4.0$ & $7.0$ & $11.2$ & $24.4$ & $422.4$ & $18.0$ \\
		DCDI & $4.0$ & $5.0$ & $5.4$ & $8.4$ & $30.0$ & - & -\\
		\midrule
		\OurApproach{} (Ours) & $\mathbf{0.0}$ & $\mathbf{0.0}$ & $\mathbf{0.0}$ & $\mathbf{0.0}$ & $\mathbf{1.0}$ & $\mathbf{2.0}$ & $\mathbf{0.0}$\\
        \bottomrule
    \end{tabular}
    }
    \vspace{-2mm}
\end{table}

Finally, we evaluate \OurApproach{} on causal graphs from the Bayesian Network Repository (BnLearn) \cite{Scutari2010Learning}.
The repository contains graphs inspired by real-world applications that are used as benchmarks in literature.
In comparison to the synthetic graphs, the real-world graphs are sparser with a maximum of 6 parents per node and contain nodes with strongly peaked marginal distributions.
They also include deterministic variables, making the task challenging even for small graphs.

We evaluate \OurApproach{}, SDI, and DCDI on 7 graphs with increasing sizes, see \Tabref{tab:experiments_real_world_results}. 
We observe that \OurApproach{} recovers almost all real-world causal graphs without errors, independent of their size.
In contrast, SDI suffers from more mistakes as the graphs become larger. 
An exception is \texttt{pigs} \cite{Scutari2010Learning}, which has a maximum of 2 parents per node, and hence is easier to learn.
The most challenging graph is \texttt{diabetes} \cite{Andreassen1991Model} due to its large size and many deterministic variables.
\OurApproach{} makes only two mistakes, showing that it can handle deterministic variables well.
We discuss results on small sample sizes in Appendix~\ref{sec:appendix_hyperparameters_real_world}, observing similar trends.
We conclude that \OurApproach{} can reliably perform structure learning on a wide variety of settings, including deterministic variables.

\section{Conclusion}
\label{sec:conclusion}
\label{sec:limitations}

We propose \OurApproach{}, an efficient causal structure learning method leveraging observational and interventional data.
Compared to previous work, \OurApproach{} models the edge orientations as separate parameters and uses an objective unconstrained with respect to acyclicity.
This allows for easier optimization and low-variance gradient estimators while having convergence guarantees.
As a consequence, the algorithm can efficiently scale to graphs that are at least one order of magnitude larger graphs than what was possible.
Experiments corroborate the capabilities of \OurApproach{} compared to the state-of-the-art on an extensive array of settings on graph sizes, sizes of observational and interventional data, latent confounding, as well as on both partial and full intervention sets.

\textbf{Limitations.} The convergence guarantees of \OurApproach{} require interventions on all variables, although experiments on fewer interventions have shown promising results.
Future work includes investigating guarantee extensions of \OurApproach{} to this setting.
A second limitation is that the orientations are missing transitivity: if $X_1 \succ X_2$ and $X_2 \succ X_3$, then $X_1 \succ X_3$ must hold. 
A potential direction is incorporating transitive relations for improving convergence speed and results on fewer interventions.

\section*{Ethics Statement}
\label{sec:appendix_broader_impact}

Causal structure learning algorithms such as the proposed method are mainly used to uncover and understand causal mechanisms from data.
The knowledge of the underlying causal mechanisms can then be applied to decide on specific actions that influence variables or factors in a desired way.
For instance, by knowing that the environmental pollution in a city has an impact on the risk of cancer of its residents, one can try to reduce the pollution to decrease the risk of cancer.
The applications of causal structure learning are ranging across many scientific disciplines, including computational biology \cite{Friedman2000Using,Sachs2005CausalPN,OpgenRhein2007Correlation}, epidemiology \cite{Robins2000Marginal,Vandenbroucke2016Causality}, and economics \cite{Pearl2009Causality,Hicks1980Causality}.
We envision that our work can have positive impacts on those fields.
One example we want to highlight is the field of genomics.
Recent advances have enabled to perform gene knockdown experiments in a large scale, providing large amounts of interventional data \cite{Dixit2016PerturbSeq,Macosko2015Highly}.
Gaining insights into how specific genes and diseases interact can lead to the development of novel pharmaceutic methods for treating current diseases.
Since the number of variables in those experiments is tremendous, efficient causal structure learning algorithms are needed.
The proposed method constitutes a first step towards this goal, and our work can foster future work for creating algorithms scaling beyond $10$,$000$ variables.

Since the possible applications are fairly wide-ranging, there might be potential impacts we cannot forecast at the current time.
This includes misuses of the method for unethical purposes.
For instance, the method can be used to justify gender and race as causes for irrelevant variables if the output is misinterpreted, initial assumptions of the model are ignored, or the input data has been manipulated. 
Hence, the obligation to use this method in a correct way within ethical boundaries lies on the user. 
We emphasize this responsibility of the user in the license of our code. %

\section*{Reproducibility Statement}
\label{sec:reproducibility_statement}

\ifexport
To ensure reproducibility, we have published the source code of the proposed method \OurApproach{} at \url{https://github.com/phlippe/ENCO}.
\else
To ensure reproducibility, we have published the anonymized source code of the proposed method \OurApproach{} at \url{https://github.com/anony192847385/ICLR2022_Paper657}.
\fi
The code includes instructions on how to download the datasets, and reproduce the experiments in \Secref{sec:experiments} and additional experiments in Appendix~\ref{sec:appendix_additional_experiments}.
Further, for all experiments of \Secref{sec:experiments}, we have included a detailed overview in Appendix~\ref{sec:appendix_hyperparameters} of (a) the used data and its generation process, (b) all hyperparameters used for all methods, and (c) additional details on the results.
All experiments have been repeated with 5 to 25 seeds to obtain stable, reproducible results.
Appendix~\ref{sec:appendix_hyperparameters_baselines_and_hyper} outlines the packages that have been used for running the baselines.

The computation resources deployed for all experiments are a 24-core CPU with a single NVIDIA RTX3090 GPU.
All experiments can be reproduced on a computer with a single GPU, and only the experiments on graphs larger than 100 variables require a GPU memory of about 24GB.
The other experiments can be performed on smaller GPUs as well.

\ifexport
\subsubsection*{Acknowledgements}
This work is financially supported by Qualcomm Technologies Inc., the University of Amsterdam and the allowance Top consortia for Knowledge and Innovation (TKIs) from the Netherlands Ministry of Economic Affairs and Climate Policy.
We thank Pascal Mettes, Christina Winkler, and Sara Magliacane for their valuable comments and feedback on an initial draft of this work.
We thank the anonymous reviewers for the reviews, suggestions, and engaging discussion which helped to improve this work further.
Finally, we thank SURFsara for the support in using the Lisa Compute Cluster.

\fi

\medskip

\bibliography{references}
\bibliographystyle{iclr2022_conference}

\newpage
\appendix
\begin{center}
    \Large{\sc Supplementary material\\Efficient Neural Causal Discovery without Acyclicity Constraints}\\
\end{center}
\vskip 8mm
\startcontents[sections]\vbox{\sc\Large Table of Contents}\vspace{4mm}\hrule height .5pt
\printcontents[sections]{l}{1}{\setcounter{tocdepth}{2}}
\vskip 4mm
\hrule height .5pt
\vskip 10mm

\section{Gradient estimators}
\label{sec:appendix_gradient_estimators}

The following section describes in detail the derivation of the gradient estimators discussed in \Secref{sec:method_gradient_estimators}.
We consider the problem of causal structure learning where we parameterize the graph by edge existence parameters $\bm{\gamma}$ and orientation parameters $\bm{\theta}$. 
Our objective is to optimize $\bm{\gamma}$ and $\bm{\theta}$ such that we maximize the probability of interventional data, \ie, data generated from the true graphs under (arbitrary) interventions on single variables.
Thereby, the likelihood estimates have been trained on observational data only.
Additionally, we want to ensure that the graph is as sparse as possible to prevent unnecessary connections. 
Thus, an $\lone$ regularizer is added on top of the edge probabilities. 
The full objective can be written as follows: 

\begin{equation}
    \label{eqn:appendix_original_objective}
    \tilde{\mathcal{L}} = \E_{\hat{I}\sim p_I(I)}\E_{\tilde{p}_{\hat{I}}(\bm{X})}\E_{p_{\bm{\gamma},\bm{\theta}}(C)}\left[\sum_{i=1}^{N}\mathcal{L}_{C}(X_i)\right] + \Lsparse \sum_{i=1}^{N}\sum_{j=1}^{N}\sigma(\gamma_{ij})\sigma(\theta_{ij})
\end{equation}

where:
\begin{itemize}
    \item $N$ is the number of variables in the causal graph ($X_1,...,X_N$);
    \item $p_{I}(I)$ is the distribution over interventions that are performed. 
    This distribution can be set as a hyperparameter to weight certain interventions higher than others. 
    In our experiments, we assume it to be uniform across interventions on variables; 
    \item $\tilde{p}_{\hat{I}}(\bm{X})$ is the joint distribution of all variables under the intervention $\hat{I}$;
    \item $p_{\bm{\gamma},\bm{\theta}}(C)$ is the distribution over adjacency matrices $C$, which we model as a product of independent edge probabilities: $p_{\bm{\gamma},\bm{\theta}}(C)=\prod_{i=1}^{N}\prod_{j=1,j\neq i}^{N}\sigma(\gamma_{ij})\cdot\sigma(\theta_{ij})$;
    \item $\mathcal{L}_{C}(X_i)$ is the negative log-likelihood estimate of variable $X_i$ under sampled adjacency matrix $C$: $\mathcal{L}_{C}(X_i)=-\log f_{\phi_i}(X_i;C_{\cdot,i}\odot \bm{X}_{-i})$;
    \item $\Lsparse$ is a hyperparameter representing the regularization weight.
\end{itemize}

Based on this objective, we derive the gradient estimators for optimizing both edge existence and orientation parameters.

\subsection{Low-variance gradient estimator for edge parameters}
\label{sec:appendix_gradient_estimators_edges}

In order to optimize the edge parameters via SGD, we need to determine the gradient $\frac{\partial}{\partial \gamma_{ij}}\tilde{\mathcal{L}}$. 
Since $\mathcal{L}$ consists of a sum of two terms, \ie, the log-likelihood estimate and the regularization, we can look at both parts separately.
To prevent any confusion of index variables, we will use $k,l$ as indices for the parameter $\gamma_{kl}$ for which we determine the gradient, \ie, $\frac{\partial}{\partial \gamma_{kl}}\tilde{\mathcal{L}}$, and $i,j$ as indices for sums.%

As a first step, we determine the gradients for the regularization term.
Those can be found by taking the derivative of the sigmoid:
\begin{equation}
    \frac{\partial}{\partial \gamma_{kl}}\Lsparse \sum_{i=1}^{N}\sum_{j=1}^{N}\sigma(\gamma_{ij})\sigma(\theta_{ij}) = \sigma(\gamma_{kl})\cdot(1-\sigma(\gamma_{kl}))\cdot\sigma(\theta_{kl})\Lsparse
\end{equation}
Thus, it is straight-forward to calculate for any edge parameter.
In the following, we use $\sigma'(...)$ to abbreviate the derivate of the sigmoid: $\sigma'(\gamma_{kl})=\sigma(\gamma_{kl})(1-\sigma(\gamma_{kl}))$.

For the log-likelihood term, we start by reorganizing the expectations to simplify the gradient expression.
The derivate term $\frac{\partial}{\partial \gamma_{kl}}$ can be moved inside the two expectations over interventional data since those are independent of the graph parameters.
Thus, we can write:
\begin{equation}
    \label{eqn:start_likelihood_gradients_edges}
    \frac{\partial}{\partial \gamma_{kl}}\tilde{\mathcal{L}}' = \E_{\hat{I}\sim p_I(I)}\E_{\tilde{p}_{\hat{I}}(\bm{X})}\frac{\partial}{\partial \gamma_{kl}}\E_{p_{\bm{\gamma},\bm{\theta}}(C)}\left[\sum_{i=1}^{N}\mathcal{L}_{C}(X_i)\right]
\end{equation}
For readability, we denote $\tilde{\mathcal{L}}'$ to be the objective in Equation~\ref{eqn:appendix_original_objective} without the regularizer. 

Next, we take a closer look at the derivate of the expectation over adjacency matrices.
Note that we have defined the adjacency matrix distribution as $p_{\bm{\gamma},\bm{\theta}}(C)=\prod_{i=1}^{N}\prod_{j=1,j\neq i}^{N}\sigma(\gamma_{ij})\sigma(\theta_{ij})$, with $C_{ij}=1$ representing the edge $X_i\to X_j$. 
Since a parameter $\gamma_{ij}$ only influences the likelihood of the edge $X_i\to X_j$ and no other edges, we can reduce the expectation to a single binary variable over which we need to differentiate the expectation:
\begin{equation}
    \frac{\partial}{\partial \gamma_{kl}}\E_{p_{\bm{\gamma},\bm{\theta}}(C)}\left[\sum_{i=1}^{N}\mathcal{L}_{C}(X_i)\right] = \E_{p_{\bm{\gamma},\bm{\theta}}(C_{-kl})}\left[\frac{\partial}{\partial \gamma_{kl}}\E_{p_{\gamma,\theta}(C_{kl})}\left[\sum_{i=1}^{N}\mathcal{L}_{C}(X_i)\right]\right]
\end{equation}
where $p_{\gamma,\theta}(C_{kl})=\sigma(\gamma_{kl})\cdot\sigma(\theta_{kl})$.
The first expectation over $p_{\bm{\gamma},\bm{\theta}}(C_{-kl})$ is independent of $\gamma_{kl}$ as we have defined the adjacency matrix distribution to be a product of independent edge probabilities. 

The log-likelihood estimate of a variable, $\mathcal{L}_{C}(X_i)$, depends on the adjacency matrix column $C_{\cdot,i}$ which represents the input connections to the node $X_i$.
All other edges have no influence on the log-likelihood estimate of $X_i$.
Hence, the parameter $\gamma_{kl}$ only influences $\mathcal{L}_{C}(X_l)$, and thus we can reduce the sum inside the expectation to:
\begin{equation}
    \frac{\partial}{\partial \gamma_{kl}}\E_{p_{\gamma,\theta}(C_{kl})}\left[\sum_{i=1}^{N}\mathcal{L}_{C}(X_i)\right] = \frac{\partial}{\partial \gamma_{kl}}\E_{p_{\gamma,\theta}(C_{kl})}\left[\mathcal{L}_{C}(X_l)\right]
\end{equation}

\begin{figure}[t!]
    \centering
    \includegraphics[width=\textwidth]{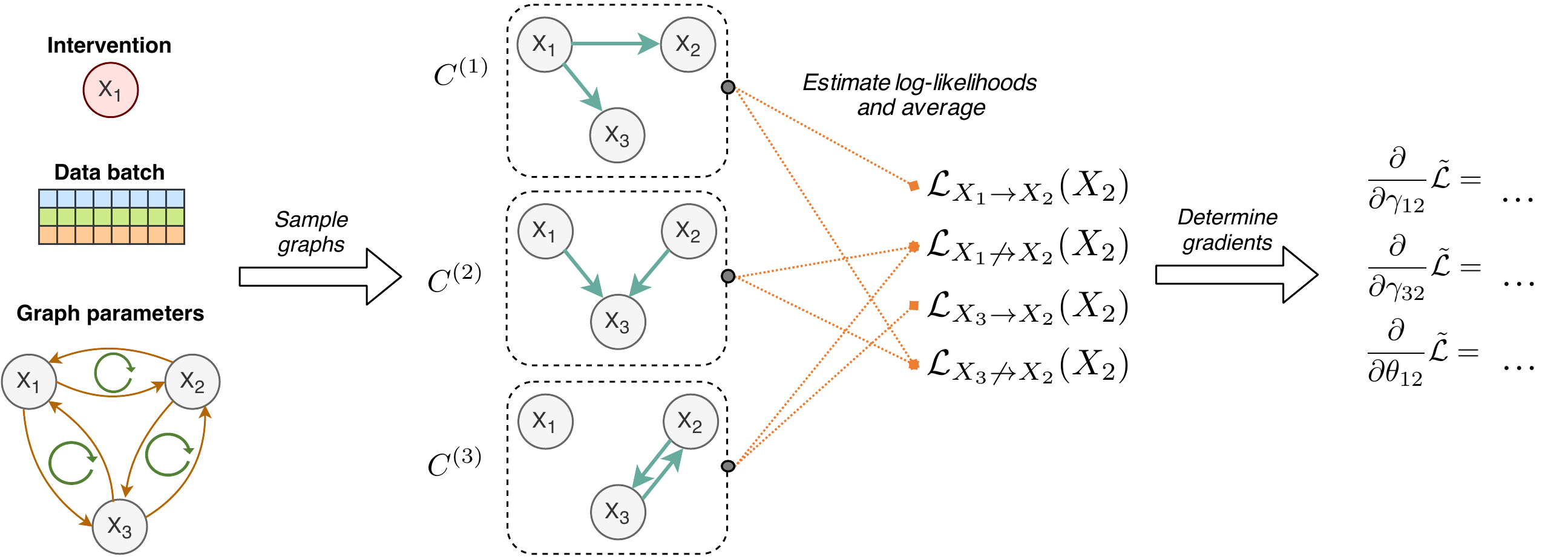}
    \caption{Visualizing the gradient calculation for the incoming edges of $X_2$ in an example graph with three variables. The intervention is being performed on $X_1$, and the data is used to calculate the log-likelihood estimates under the three randomly sampled graphs: $\mathcal{L}_{C^{1}}(X_2),\mathcal{L}_{C^{2}}(X_2)$ and $\mathcal{L}_{C^{3}}(X_2)$. Those terms are assigned to the Monte-Carlo estimators for $\mathcal{L}_{X_i\to X_2}(X_2)$ and $\mathcal{L}_{X_i\not\to X_2}(X_2)$, and finally used to determine the gradients for $\bm{\gamma}$ and $\bm{\theta}$. The same process is performed for $X_3$ as well.}
    \label{fig:appendix_methodology_gradient_calculation}
\end{figure}

The REINFORCE trick is a simple method to move the derivative of a discrete distribution inside the expectation.
Applied to our situation, we obtain:
\begin{equation}
    \label{eqn:appendix_edge_gradient_prelast_step}
    \frac{\partial}{\partial \gamma_{kl}}\E_{p_{\gamma,\theta}(C_{kl})}\left[\mathcal{L}_{C}(X_l)\right] = \E_{p_{\gamma,\theta}(C_{kl})}\left[\mathcal{L}_{C}(X_l)\frac{\partial \log p_{\gamma,\theta}(C_{kl})}{\partial \gamma_{kl}}\right]
\end{equation}
This leaves us with two cases in the expectation: $C_{kl}=0$ and $C_{kl}=1$.
In other words, we need to distinguish between samples of $C$ where we have the edge $X_k\to X_l$, and where we do not have such an edge ($X_k\not\to X_l$).
Thus, we can also write the expectation as a weighted sum of those two cases:
\begin{equation}
    \begin{split}
        \E_{p_{\gamma,\theta}(C)}\left[\mathcal{L}_{C}(X_l)\frac{\partial \log p_{\gamma,\theta}(C_{kl})}{\partial \gamma_{kl}}\right] & =\sigma(\gamma_{kl})\cdot\sigma(\theta_{kl})\cdot\mathcal{L}_{X_k\to X_l}(X_l)\cdot\frac{\partial \log \sigma(\gamma_{kl})\cdot\sigma(\theta_{kl})}{\partial \gamma_{kl}} + \\
        & \hspace{5mm}(1-\sigma(\gamma_{kl})\cdot\sigma(\theta_{kl}))\cdot\mathcal{L}_{X_l\not\to X_k}(X_k)\cdot\frac{\partial \log \left(1-\sigma(\gamma_{kl})\cdot\sigma(\theta_{kl})\right)}{\partial \gamma_{kl}}
    \end{split}
\end{equation}
We use $\mathcal{L}_{X_k\to X_l}(X_l)$ to denote the (expected) negative log likelihood for $X_l$ under adjacency matrices where we have an edge from $X_k$ to $X_l$:
\begin{align}
    \mathcal{L}_{X_k\to X_l}(X_l) & =\E_{p_{\gamma,\theta}(C_{-kl}),C_{kl}=1}\left[\mathcal{L}_{C}(X_l)\right]\\
    \mathcal{L}_{X_k\not\to X_l}(X_l) & =\E_{p_{\gamma,\theta}(C_{-kl}),C_{kl}=0}\left[\mathcal{L}_{C}(X_l)\right]
\end{align}
The final step is to solve the two derivative terms in \Eqref{eqn:appendix_edge_gradient_prelast_step}.
This is done as follows: 
\begin{align}
    \frac{\partial \log \sigma(\gamma_{kl})\cdot\sigma(\theta_{kl})}{\partial \gamma_{kl}} & = \frac{\partial \log \sigma(\gamma_{kl})+\log \sigma(\theta_{kl})}{\partial \gamma_{kl}} = 1-\sigma(\gamma_{kl})\\
    \frac{\partial \log \left(1-\sigma(\gamma_{kl})\cdot\sigma(\theta_{kl})\right)}{\partial \gamma_{kl}} & = -\frac{\sigma(\gamma_{kl})\cdot(1-\sigma(\gamma_{kl}))\cdot\sigma(\theta_{kl})}{1-\sigma(\gamma_{kl})\cdot\sigma(\theta_{kl})}
\end{align}
Putting these results back in the original equation and adding the sparsity regularizer, we get:
\begin{equation}
    \label{eqn:appendix_edge_final_gradients}
    \frac{\partial}{\partial \gamma_{ij}}\tilde{\mathcal{L}}=\sigma(\gamma_{ij})\cdot(1-\sigma(\gamma_{ij}))\cdot\sigma(\theta_{ij})\cdot\E_{\bm{X},C_{-ij}}\left[\mathcal{L}_{X_i\to X_j}(X_j)-\mathcal{L}_{X_i\not\to X_j}(X_j)+\lambda_{\text{sparse}}\right]
\end{equation}
To align the result with the gradient in \Secref{sec:method_gradient_estimators}, we switch the index notation from $k,l$ to $i,j$ again.
The expectation $\E_{\bm{X},C_{-ij}}$ is a short form for the expectations $\E_{\hat{I}\sim p_I(I)}\E_{\tilde{p}_{\hat{I}}(\bm{X})}\E_{p_{\bm{\gamma},\bm{\theta}}(C_{-ij})}$.
From this expression, we can see that the gradients of $\gamma_{ij}$ are proportional to the difference of the expected negative log-likelihood of $X_j$ with having an edge between $X_i\to X_j$, and the cases where $X_i\not\to X_j$. 
The sparsity regularizer thereby biases the difference towards the no-edge case. 
The value of $\gamma_{ij}$ and $\theta_{ij}$ only scale the gradient, but do not influence its direction. 

In order to train this objective on a dataset of interventional data, we can use Monte-Carlo sampling to obtain an unbiased gradient estimator. 
Note that the adjacency matrix samples to estimate $\mathcal{L}_{X_i\to X_j}(X_j)$ and $\mathcal{L}_{X_i\not\to X_j}(X_j)$ are not required to be the same. 
For efficiency, we instead sample $K$ adjacency matrices from $p_{\bm{\gamma},\bm{\theta}}(C)$, evaluate the likelihood of a batch $\bm{X}$ under all these graphs.
Afterwards, we assign the evaluated samples to one of the two cases, depending on $C_{ij}$ being zero or one. 
This way, we can reuse the same graph samples for all edge parameters $\bm{\gamma}$. 
We visualize the gradient calculation in \Figref{fig:appendix_methodology_gradient_calculation}.
In the cases where we perform an intervention on $X_i$, we do not optimize $\gamma_{ij}$ for this step and set the gradients to zero.
The same holds for gradient steps where we do not have any samples for one of the two log-likelihood estimates. 

\subsubsection{Comparison to previous gradient estimators}

\begin{figure}
    \centering
    \includegraphics[width=\textwidth]{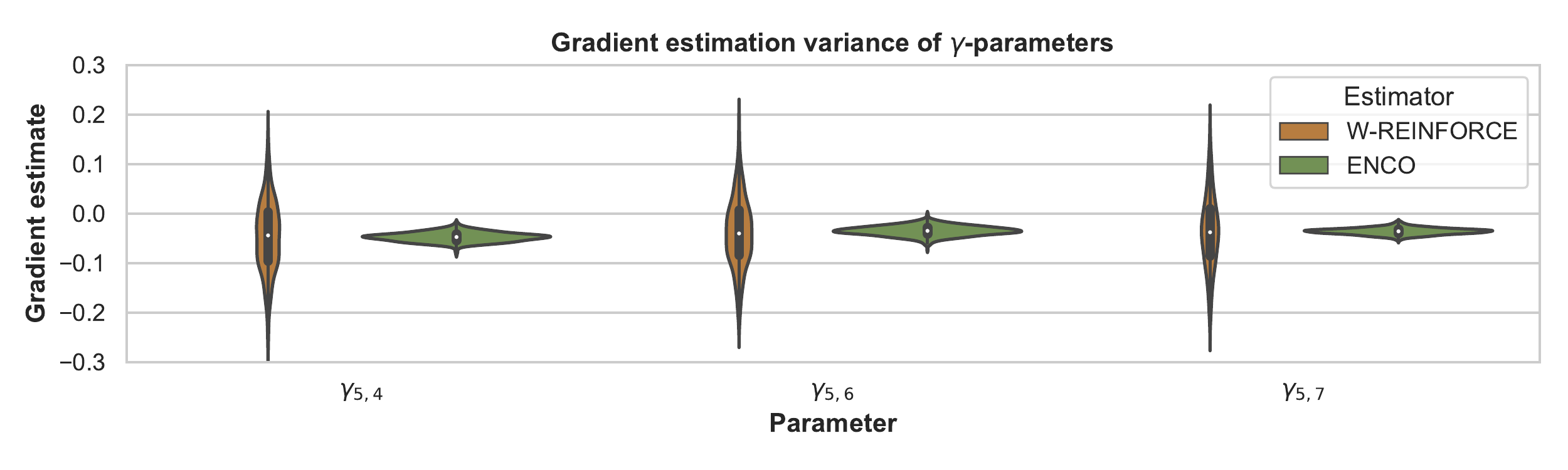}
    \caption{Plotting the gradient estimate variance of \OurApproach{} for three variables of $\gamma$ compared to previous REINFORCE-like approach by \citet{Bengio2019MetaTransfer} on an example graph with $K=100$. The gradients have been scaled to match in terms of averages. We can see a clear reduction in variance with the gradient estimator of \OurApproach{} allowing us to use lower sample sizes.}
    \label{fig:appendix_gradient_violin}
\end{figure}

As discussed in \Secref{eqn:methodology_edge_gradients}, previous work on similar structure learning methods \cite{Bengio2019MetaTransfer,Ke2019LearningInterventions} relied on a different estimator.
In terms of derivation, the main difference is the continuation from \Eqref{eqn:appendix_edge_gradient_prelast_step} on. 
In our proposed method, we write the expectation as the sum of two terms that can independently be approximated via Monte-Carlo sampling.
In comparison, \citet{Bengio2019MetaTransfer} proposed to directly apply a Monte-Carlo sampler to \Eqref{eqn:appendix_edge_gradient_prelast_step}, and apply an importance sampling weight to reduce the variance.
This estimator is also used in the method SDI \cite{Ke2019LearningInterventions} to which we have experimentally compared our method.

\Figref{fig:method_gradient_variance} compared the gradient estimator in terms of standard deviation.
The gradient estimator of \OurApproach{} achieves a 10 times lower standard deviation compared to \citet{Bengio2019MetaTransfer} making it much more efficient.
Since the estimator by \citet{Bengio2019MetaTransfer} is biased and has a different mean, we have scaled both estimators to have the same mean.
Specifically, we have applied \OurApproach{} to \texttt{random} graphs from our experiments on synthetic graphs (see \Secref{sec:experiments_common_graphs}), and evaluated $64,000$ sampled adjacency matrices in terms of log-likelihood estimates.
These $64,000$ samples are grouped into sets of $K$ samples which we have used to estimate the gradients of $\bm{\gamma}$.
We evaluated different values of $K$, from $K=20$ to $K=4000$, and plotted the standard deviation of those estimates in \Figref{fig:method_gradient_variance}.
We have also visualized three examples as violin plots in \Figref{fig:appendix_gradient_violin} that demonstrate that despite both estimators having a similar mean, the variance of gradient estimates is much higher for \citet{Bengio2019MetaTransfer}.

To verify that the improvement of \OurApproach{} is not just because of the gradient estimators, we have performed an ablation study with \OurApproach{} deploying the gradient estimator of \citet{Bengio2019MetaTransfer} in Appendix~\ref{sec:appendix_ablation_gradient_estimator}.

\subsection{Low-variance gradient estimator for orientation parameters}
\label{sec:appendix_gradient_estimators_order}

To derive the gradients for the orientation parameters $\bm{\theta}$, we can mostly follow the same approach as for the edge existence parameters $\bm{\gamma}$. 
However, we have to keep in mind the constraint $\theta_{kl}=-\theta_{lk}$ which ensures that the orientation probability sums to one: $\sigma(\theta_{kl})+\sigma(\theta_{lk})=1$. 

To determine the gradient of the likelihood term, we can separate the two gradients of $\theta_{kl}$ and $\theta_{lk}$. 
This is because $\theta_{kl}$ only influences the expectation over $\mathcal{L}_{C}(X_l)$, while $\theta_{lk}$ concerns $\mathcal{L}_{C}(X_k)$. We can follow \Eqref{eqn:start_likelihood_gradients_edges} to \Eqref{eqn:appendix_edge_final_gradients} of Section~\ref{sec:appendix_gradient_estimators_edges} by swapping $\theta_{kl}$ and $\gamma_{kl}$.
For the derivative through the expectation, we obtain the following gradient:

\begin{equation}
    \label{eqn:orientation_likelihood_gradient}
    \frac{\partial}{\partial \theta_{kl}}\E_{p_{\bm{\gamma},\bm{\theta}}(C)}\left[\mathcal{L}_{C}(X_l)\right] = \sigma'(\theta_{kl})\cdot\sigma(\gamma_{kl})\cdot\E_{p_{\bm{\gamma},\bm{\theta}}(C_{-kl})}\left[\mathcal{L}_{X_k\to X_l}(X_l)-\mathcal{L}_{X_k\not\to X_l}(X_l)\right]
\end{equation}

Since we have the condition that $\theta_{kl}=-\theta_{lk}$, the full gradient for $\theta_{kl}$ would therefore consist of the gradient above minus the gradient of \Eqref{eqn:orientation_likelihood_gradient} with respect to $\theta_{ji}$.
However, as discussed in Section~\ref{sec:method_gradient_estimators}, the orientation of an edge cannot be learned from observational data in this framework. 
Hence, we only want to use the gradients of $\theta_{kl}$ if we intervene on node $X_k$, which gives us the following gradient expression:

\begin{equation}
    \label{eqn:orientation_final_gradients}
    \begin{split}
        \frac{\partial}{\partial \theta_{ij}}\tilde{\mathcal{L}} = \sigma'(\theta_{ij})\Big(&p(I_{X_i})\cdot\sigma(\gamma_{ij})\cdot \E_{I_{X_i},\bm{X},C_{-ij}}\left[\mathcal{L}_{X_i\to X_j}(X_j)-\mathcal{L}_{X_i\not\to X_j}(X_j)\right]-\\
        & p(I_{X_j})\cdot\sigma(\gamma_{ji})\cdot\E_{I_{X_j},\bm{X},C_{-ij}}\left[\mathcal{L}_{X_j\to X_i}(X_i)-\mathcal{L}_{X_j\not\to X_i}(X_i)\right]\Big)
    \end{split}
\end{equation}

To align the equation with the one in \Secref{sec:method_gradient_estimators}, we swap the indices $k,l$ with $i,j$ again.
The first line represents cases where we have an intervention on the variable $X_i$, while we have it over interventions on the variable $X_j$ in the second line. 
The two terms are weighted based on the edge existence likelihood $\sigma(\gamma_{ij})$ and $\sigma(\gamma_{ji})$ respectively, and the likelihood of performing an intervention on $X_i$ or $X_j$.
In our experiments, we use a uniform probability across interventions on variables, but emphasize that this is not strictly required.
Moreover, one could design heuristics that selects the intervention to update the parameters on with the aim of increasing computational efficiency.
The gradient estimators presented in \Eqref{eqn:orientation_final_gradients} would still be valid in such a case.

We clarify that we do not consider the gradients of $\theta_{ij}$ with respect to the edge regularizer.
This is done for two reasons.
Firstly, the orientation parameter models only the direction of the edge, not whether it exists or not. 
The regularizer would increase $\theta_{ij}$ if the edge existence for the opposite direction would be greater than for the direction from $X_i$ to $X_j$, \ie $\gamma_{ij}<\gamma_{ji}$, and decrease $\theta_{ij}$ if we have $\gamma_{ij}>\gamma_{ji}$.
However, the orientation should only model the causal direction of an edge.
Hence, we do not gain any value from a causal perspective when adding the regularizer to the gradient.
Secondly, the regularizer would require us to take additional assumptions for guaranteeing the discovery of the true graph upon convergence.
In experiments with using a regularizer in the $\theta$-gradient, we did not observe any difference to the experiments without the regularizer.

We note that the orientation parameters are considered to be pairwise independent.
In other words, $\theta_{ij}$ and $\theta_{kl}$ are considered independent parameters if $i\neq k,l$ and $j\neq k,l$.
Global order distributions such as Plackett-Luce \cite{Plackett1975Probability,Luce1959Individual} can be used to also incorporate transitive relations.
However, those require high variance gradient estimators and struggled with chains in early experiments. 
The pairwise orientation parameters provide much easier optimization while still providing convergence guarantees for the full intervention setting.

\subsection{Training loop}
\label{sec:appendix_gradient_estimators_training_loop}

Finally, we give an overview over the full training loop in Algorithm~\ref{alg:appendix_training_loop}.
The distribution over interventions $p(I)$ is set to a uniform distribution for all our experiments.
However, the distribution can also be replaced by a heuristic which selects interventions to increase computational efficiency.
To keep the convergence guarantees, $p(I)$ would have to guarantee a non-zero probability to pick any variable.
In experiments, we experienced that the Adam optimizer \cite{Kingma2015Adam} speeds up the convergence of the parameters $\bm{\gamma}$ and $\bm{\theta}$ while not interfering with the convergence guarantees in practice.

\newcommand\mycommfont[1]{\footnotesize\ttfamily\textcolor{blue}{#1}}
\SetCommentSty{mycommfont}
\begin{algorithm}[H]
\SetAlgoLined
    \KwData{$N$ variables $\{X_1,...,X_N\}$; observational dataset $D_{\text{obs}}$; interventional datasets $D_{\text{int}}(\hat{I})$ for sparse, perfect interventions on all variables; distribution over interventions $p(I)$}
    \KwResult{A graph structure corresponding to the cuasal relations among variables}
    Initialize $\bm{\gamma}=\bm{0}, \bm{\theta}=\bm{0}$\\
    \For{number of epochs}{
        \tcc{Distribution fitting}
        \For{$F$ iterations}{
            $\bm{X}\sim D_{\text{obs}}$\\
            \For{$i\leftarrow 1$ to $N$}{
                $M_l\sim\text{Ber}(\sigma(\gamma_{li})\cdot\sigma(\theta_{li}))$\\
                $L=-\log f_{\phi_i}(X_i;\bm{M}_{-i}\odot\bm{X}_{-i})$\\
                $\phi_i\leftarrow \text{Adam}(\phi_i, \nabla_{\phi_i}L)$\\
            }
        }
        \tcc{Graph fitting}
        \For{$G$ iterations}{
            \tcc{Sample an intervention}
            $\hat{I}\sim p(I)$ with intervention target $X_{t}$\\
            $\bm{X}\sim D_{\text{int}}(\hat{I})$\\
            \tcc{Evaluate multiple graph samples for gradient estimator}
            \For{$k=1$ to $K$}{
                $C^{(k)}\sim\text{Ber}(\sigma(\bm{\gamma})\cdot\sigma(\bm{\theta}))$\\
                \For{$i=1$ to $N$}{
                    $\mathcal{L}_{C^{(k)}}(X_i)\leftarrow -\log f_{\phi_i}(X_i;C^{(k)}_{:,i}\odot\bm{X}_{-i})$
                }
            }
            \tcc{Update parameters}
            \For{$i=1$ to $N$}{
                \For{$j=1$ to $N$ where $j\neq i$ and $j\neq t$}{
                    \tcc{Considering edge $X_i\to X_j$}
                    $\mathcal{L}_{X_i\to X_j}(X_j)\leftarrow \frac{\sum_{k=1}^{K}C^{(k)}_{ij}\cdot\mathcal{L}_{C^{(k)}}(X_j)}{\sum_{k=1}^{K}C^{(k)}_{ij}}$\\
                    $\mathcal{L}_{X_i\not\to X_j}(X_j)\leftarrow \frac{\sum_{k=1}^{K}(1-C^{(k)}_{ij})\cdot\mathcal{L}_{C^{(k)}}(X_j)}{\sum_{k=1}^{K}(1-C^{(k)}_{ij})}$\\
                    $\gamma_{ij}\leftarrow \gamma_{ij} - \sigma'(\gamma_{ij})\cdot\sigma(\theta_{ij})\cdot\left[\mathcal{L}_{X_i\to X_j}(X_j)-\mathcal{L}_{X_i\not\to X_j}(X_j)+\lambda_{\text{sparse}}\right]$\\
                    \If{$i==t$}{
                        $\theta_{ij}\leftarrow \theta_{ij} - \sigma'(\theta_{ij})\cdot\sigma(\gamma_{ij})\cdot \left[\mathcal{L}_{X_i\to X_j}(X_j)-\mathcal{L}_{X_i\not\to X_j}(X_j)\right]$\\
                        $\theta_{ji}\leftarrow -\theta_{ij}$\\
                    }
                }
            }
        }
    }
    \Return $G=(V,E)$ with $V=\bm{X}$ and $E=\{(X_i,X_j)\mid i,j\in[1,N]\text{ and }i\neq j\text{ and }\sigma(\gamma_{ij})>0.5\text{ and } \sigma(\theta_{ij})>0.5\}$\\
    \caption{Learning algorithm of \OurApproach{}}
    \label{alg:appendix_training_loop}
\end{algorithm}
\newpage
\section{Conditions for converging to the true causal graph}
\label{sec:appendix_guarantees}

The following section gives an overview and proves the conditions under which \OurApproach{} converges to the correct causal graph given sufficient time and data. 
We emphasize that we provide conditions here for which no local optima exist, meaning that if \OurApproach{} converges, it returns the correct causal graph.
This is a stronger statement than showing that the global optimum corresponds to the true graph, since a gradient-based algorithm can get stuck in a local optimum.
We will discuss the conditions for the global optimum in Appendix~\ref{sec:appendix_guarantees_global_optimum}.

To make the proof more accessible, we will first discuss the assumptions that are needed for the guarantee, and then give a sketch of the proof.
The proof will first assume that we work in the data limit, \ie{} have given sufficient data, such that we can derive conditions that solely depend on the causal graphical model.
In Appendix~\ref{sec:appendix_guarantees_limited_data_regime}, we extend the proof to the limited data setting.

\subsection{Assumptions}

\paragraph{Assumption 1} We are given a dataset of observational data from the joint distribution $p(\bm{X})$. Additionally, we have $N$ interventional datasets for $N$ variables where in each intervention a different node is intervened on (the intervention size for each dataset is therefore 1).

\paragraph{Assumption 2} A common assumption in causal structure learning is that the data distribution over all variables $p(\bm{X})$ is Markovian and faithful with respect to the causal graph we are trying to model. 
This means that the graph represents the (conditional) independence relations between variables in the data, and (conditional) independence relations in the data reflect the edges in the graph. 
For \OurApproach{}, faithfulness is not strictly required. 
This is because we work with interventional data.
Instead, we rely on the Markov property and assume that for all variables, the parent set $\text{pa}(X_i)$ reflects the inputs to the causal generation mechanism of $X_i$.
This allows us to also handle deterministic variables.

\paragraph{Assumption 3} For this proof, we assume that all variables of the graph are known and observable, and no latent confounders exist. Latent confounders can introduce dependencies between variables which are not reflected by the ground truth graph solely on the observed variables. We discuss the extension of latent confounders in \Secref{sec:method_latent_variables} and Appendix~\ref{sec:appendix_guarantees_latent_confounder}.

\paragraph{Assumption 4} \OurApproach{} relies on neural networks to determine the conditional data distributions $p(X_i|...)$. Hence, for providing a guarantee, we assume that in the graph learning step the neural networks have been sufficiently trained such that they accurately model all possible conditional distribution $p(X_i|...)$. In practice, the neural networks might have a slight error. However, as long as enough data, network complexity, and training time is provided, it is fair to assume that the difference between the modeled distribution and the true conditional is smaller than an arbitrary constant $\epsilon$, based on the universal approximation theorem \cite{Hornik1989Multilayer}.
For the limited data setting, see Appendix~\ref{sec:appendix_guarantees_limited_data_regime}.

\paragraph{Assumption 5} We are given a sufficiently large interventional dataset such that sampling data points from it models the exact interventional distribution under the true causal graph. This can be achieved by, for example, sampling directly from the causal graph, or having an infinitely large dataset. 
For the limited data setting, see Appendix~\ref{sec:appendix_guarantees_limited_data_regime}.

\subsection{Convergence conditions}
\label{sec:appendix_convergence_conditions}

The proof of the convergence conditions consists of the following three main steps:
\begin{description}
    \item[Step 1] We show under which conditions the orientation parameters $\theta_{ij}$ will converge to $+\infty$, i.e. $\sigma(\theta_{ij})\to 1$, if $X_i$ is an ancestor of $X_j$. Similarly, if $X_i$ is a descendant of $X_j$, the parameter $\theta_{ij}$ will converge to $-\infty$, i.e. $\sigma(\theta_{ij})\to 0$.
    \item[Step 2] Under the assumption that the orientation parameters have converged as in Step 1, we show that for edges in the true graph, $\gamma_{ij}$ will converge to 1. Note that we need to take additional assumptions/conditions with respect to $\Lsparse$ here. 
    \item[Step 3] Once the parameters $\gamma_{ij}$ and $\theta_{ij}$ have converged for the edges in the ground truth graph, we show that all other edges will be removed by the sparsity regularizer.
\end{description}
The following paragraphs provide more details for each step.
Note that causal graphs that do not fulfill all parts of the convergence guarantee can still eventually be recovered.
The reason is that the conditions listed in the theorems below ensure that there exists no local minima for $\bm{\theta}$ and $\bm{\gamma}$ to converge in.
Although local minima exist, the optimization process might converge to the global minimum of the true causal graph.

\begin{theorem}
    \label{theo:appendix_guarantees_theta}
    Consider the edge $X_i\to X_j$ in the true causal graph. 
    The orientation parameter $\theta_{ij}$ converges to $\sigma(\theta_{ij})=1$ if the following two conditions are fulfilled:
    \begin{enumerate}[label=(\arabic*)]
        \item for all possible sets of parents of $X_j$ excluding $X_i$, adding $X_i$ improves the log-likelihood estimate of $X_j$ under the intervention on $X_i$, or leaves it unchanged: 
    \end{enumerate}
    \begin{equation}
        \label{eqn:appendix_proof_guarantees_orientation_assumption_forall}
        \forall \widehat{\text{pa}}(X_j)\subseteq X_{-i,j}:  \E_{I_{X_i},\bm{X}}\left[\log p(X_j|\widehat{\text{pa}}(X_j),X_i) - \log p(X_j|\widehat{\text{pa}}(X_j))\right] \geq 0
    \end{equation}
    \begin{enumerate}[label=(\arabic*)]
        \setcounter{enumi}{1}
        \item there exists a set of nodes $\widehat{\text{pa}}(X_j)$, for which the probability to be sampled as parents of $X_j$ is greater than 0, and the following condition holds:
    \end{enumerate} 
    \begin{equation}
        \label{eqn:appendix_proof_orientation_assumption_exists}
        \exists \widehat{\text{pa}}(X_j)\subseteq X_{-i,j}:  \E_{I_{X_i},\bm{X}}\left[\log p(X_j|\widehat{\text{pa}}(X_j),X_i) - \log p(X_j|\widehat{\text{pa}}(X_j))\right] > 0
    \end{equation}
\end{theorem}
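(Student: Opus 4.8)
The plan is to read off the sign of the gradient $\frac{\partial}{\partial\theta_{ij}}\tilde{\mathcal{L}}$ from \eqref{eqn:orientation_final_gradients} and show it is strictly negative under the stated conditions, so that the gradient-descent update drives $\theta_{ij}$ monotonically to $+\infty$, i.e.\ $\sigma(\theta_{ij})\to 1$. Writing the gradient as $\sigma'(\theta_{ij})\big(p(I_{X_i})\sigma(\gamma_{ij})\,A-p(I_{X_j})\sigma(\gamma_{ji})\,B\big)$, where $A=\E_{I_{X_i},\bm{X},C_{-ij}}[\mathcal{L}_{X_i\to X_j}(X_j)-\mathcal{L}_{X_i\not\to X_j}(X_j)]$ and $B=\E_{I_{X_j},\bm{X},C_{-ij}}[\mathcal{L}_{X_j\to X_i}(X_i)-\mathcal{L}_{X_j\not\to X_i}(X_i)]$, I first note that $\sigma'(\theta_{ij})>0$ and that both prefactors are nonnegative, with $p(I_{X_i})\sigma(\gamma_{ij})>0$ for any finite parameters. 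Hence it suffices to establish $A<0$ and $B\geq 0$.

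For $A<0$, I would rewrite $\mathcal{L}_{X_i\to X_j}(X_j)-\mathcal{L}_{X_i\not\to X_j}(X_j)$ as the negative log-likelihood improvement $-\big(\log p(X_j\mid\widehat{\text{pa}}(X_j),X_i)-\log p(X_j\mid\widehat{\text{pa}}(X_j))\big)$, so that $A$ is minus an average of such improvements over the parent sets $\widehat{\text{pa}}(X_j)$ induced by sampling $C_{-ij}$. Condition \eqref{eqn:appendix_proof_guarantees_orientation_assumption_forall} makes every summand of this average nonnegative, giving $A\leq 0$, and condition \eqref{eqn:appendix_proof_orientation_assumption_exists} supplies at least one parent set of strictly positive sampling probability whose improvement is strictly positive; together these force the average to be strictly positive, hence $A<0$.

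The main obstacle is $B\geq 0$, which is the genuinely causal part: I must show that under an intervention on the descendant $X_j$, adding $X_j$ as an input cannot improve the (observationally trained) estimate of the ancestor $X_i$. Here I would use Assumption~4, that the networks model the observational conditionals exactly, together with the Markov property. For each fixed conditioning set $\widehat{\text{pa}}(X_i)$, I would apply Gibbs' inequality (non-negativity of $\KL$) pointwise in the intervened value of $X_j$: the cross-entropy of the true conditional of $X_i$ against the model $p(X_i\mid\widehat{\text{pa}}(X_i),X_j)$ is at least its entropy, and averaging over the intervention distribution yields $\E[\log p(X_i\mid\widehat{\text{pa}}(X_i),X_j)]\leq\E[\log p(X_i\mid\widehat{\text{pa}}(X_i))]$, i.e.\ $B\geq 0$. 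The delicate point, which I expect to require the most care, is that the sampled $\widehat{\text{pa}}(X_i)$ may contain common descendants of $X_i$ and $X_j$: conditioning on such a collider can re-open a dependence between $X_i$ and $X_j$ even after the mechanism of $X_j$ is replaced, so the clean interventional independence $X_i\perp X_j\mid\widehat{\text{pa}}(X_i)$ must be argued to hold for the relevant parent-set distribution (e.g.\ by invoking that the orientations between $X_i$ and its descendants suppress their sampling as parents of $X_i$).

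Finally, combining $A<0$ with $B\geq 0$ gives $\frac{\partial}{\partial\theta_{ij}}\tilde{\mathcal{L}}<0$ for all finite $(\bm{\gamma},\bm{\theta})$, since $\gamma$ and $\theta$ only rescale the gradient without flipping its sign. Thus there is no stationary point at finite $\theta_{ij}$: the parameter increases at every step, so $\theta_{ij}\to+\infty$ and $\sigma(\theta_{ij})\to 1$, establishing the claim. I would close with the remark that the symmetric conclusion $\sigma(\theta_{ij})\to 0$ for a descendant $X_i$ of $X_j$ follows immediately by exchanging the roles of $i$ and $j$.
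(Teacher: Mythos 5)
Your proposal follows essentially the same route as the paper's proof: the paper defines $T(X_k,X_l)=\E_{I_{X_k},\bm{X},C_{-kl}}\left[\mathcal{L}_{X_k\to X_l}(X_l)-\mathcal{L}_{X_k\not\to X_l}(X_l)\right]$, which coincides with your $A$ and $B$, derives $T(X_i,X_j)<0$ from conditions (1) and (2) exactly as you do (including the role of the positive-probability parent set), and establishes $T(X_j,X_i)\geq 0$ by the same interventional-independence plus cross-entropy (Gibbs) argument for observationally trained conditionals. The one step you flag but leave open --- a common collider $X_k$ of $X_i$ and $X_j$ being sampled into $\widehat{\text{pa}}(X_i)$ and re-opening a dependence under interventions on $X_j$ --- is resolved in the paper by a recursive argument: for this scenario to persist, $\theta_{ki}$ would have to receive gradients favoring the reversed edge $X_k\to X_i$ under interventions on $X_k$, which is impossible because $X_i$ and $X_k$ are then conditionally independent and observationally learned correlations cannot transfer; if $X_k$ and $X_i$ share a further collider, the same argument is applied recursively, and the recursion terminates by finiteness of the node set and acyclicity of the true graph. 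Your parenthetical hint (that the orientation parameters toward descendants suppress their sampling) points in precisely this direction, but the recursion is what makes it a proof rather than a hope. One minor caution: your claim that $\theta_{ij}$ ``increases at every step'' with ``no stationary point at finite $\theta_{ij}$'' is slightly too strong --- at finite training time collider edges are sampled with non-vanishing probability (e.g.\ $\sigma(\theta_{ki})=\tfrac{1}{2}$ at initialization), so $B\geq 0$ is guaranteed only once the interlocking orientation dynamics suppress such edges; the paper accordingly phrases the guarantee as the absence of local optima for the coupled system of orientation parameters rather than per-step monotonicity of each individual $\theta_{ij}$.
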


\begin{proof}
    Based on the conditions in \twoEqref{eqn:appendix_proof_guarantees_orientation_assumption_forall}{eqn:appendix_proof_orientation_assumption_exists}, we need to show that the gradient of $\theta_{ij}$ is negative in expectation, independent of other values of $\bm{\gamma}$ and $\bm{\theta}$.
    For readability, we define the following function:
    \begin{equation}
        T(X_k,X_l)= \E_{I_{X_k},\bm{X},C_{-kl}}\left[\mathcal{L}_{X_k\to X_l}(X_l)-\mathcal{L}_{X_k\not\to X_l}(X_l)\right]
    \end{equation}
    Hence, the gradient of $\theta_{ij}$ can be written as:
    \begin{equation}
        \label{eqn:appendix_orientation_gradients_shortened}
        \frac{\partial}{\partial \theta_{ij}}\tilde{\mathcal{L}} = \sigma'(\theta_{ij})\cdot
        \Big(p(I_{X_i})\cdot\sigma(\gamma_{ij})\cdot T(X_i,X_j)-p(I_{X_j})\cdot\sigma(\gamma_{ji})\cdot T(X_j,X_i)\Big)
    \end{equation}
    Looking at the gradient of $\theta_{ij}$ in \Eqref{eqn:appendix_orientation_gradients_shortened}, the conditions correspond to $T(X_i,X_j)$ being smaller or equals to zero.
    Note that the sign is flipped because in $T(X_i,X_j)$, we have negative log-likelihoods represented by $\mathcal{L}_{X_k\to X_l}(X_l)$, while in \twoEqref{eqn:appendix_proof_guarantees_orientation_assumption_forall}{eqn:appendix_proof_orientation_assumption_exists}, we have log-likelihoods.
    Further, the other factors of $\sigma'(\theta_{ij})$, $\sigma(\gamma_{ij})$ and $p(I_X)$ are all limited in the range of $(0,1)$ meaning that the sign of the gradient is solely determined by $T(X_i,X_j)$ and $T(X_j,X_i)$.
    If $T(X_i,X_j)-T(X_j,X_i)$ is smaller than zero, then the gradient of $\theta_{ij}$ is negative, \ie increasing $\theta_{ij}$.
    
    First, we look at when $T(X_i,X_j)<0$.
    The condition in \Eqref{eqn:appendix_proof_guarantees_orientation_assumption_forall} ensures that conditioning $X_j$ on a true parent $X_i$ when intervening on $X_i$ does not lead to a worse log-likelihood estimate than without. 
    While this condition might seem natural, there are special cases where this condition does not hold for all variables (see \Secref{sec:appendix_guarantees_violating_graphs}).
    The second condition, \Eqref{eqn:appendix_proof_orientation_assumption_exists}, guarantees that there is at least one parent set for which $T(X_i,X_j)$ is negative.
    Therefore, in expectation over all possible adjacency matrices $p_{\bm{\gamma},\bm{\theta}}(C)$, $T(X_i,X_j)$ is smaller than zero if the two conditions hold.
    
    To guarantee that the whole gradient of $\theta_{ij}$ is negative, we also need to show that for interventions on $X_j$, $T(X_j,X_i)$ can only be positive.
    When intervening on $X_j$, $X_i$ and $X_j$ become independent as the edge $X_i\to X_j$ is removed in the intervened graph.
    A distribution $p(X_i|X_j,...)$ relying on correlations between $X_i$ and $X_j$ from observational data cannot achieve a better estimate than the same distribution when removing $X_j$.
    This is because the cross entropy is minimized when the sampled distribution, in this case $p(X_i)$, is equal to the log-likelihood estimator \cite{Cover2005Elements}:
    \begin{equation}
        -\sum_{x_i,x_j}p(X_i=x_i)\log p(X_i=x_i)\leq \sum_{x_i,x_j}p(X_i=x_i)\log q(X_i=x_i|X_j=x_j)
    \end{equation}
    The only situation where $X_i$ and $X_j$ can become conditionally dependent under interventions on $X_j$ is if $X_i$ and $X_j$ share a collider $X_k$, and $X_i$ is being conditioned on the collider $X_k$ and $X_j$.
    However, this requires that $\theta_{ki}$ has negative gradients, \textit{i.e.} $\theta_{ki}$ increasing, when intervening on $X_k$. 
    This cannot be the case since under interventions on $X_k$, $X_i$ and $X_k$ become conditionally independent, and the correlations learned from observational data cannot be transferred to the interventional setting. 
    If $X_k$ and $X_i$ again share a collider, we can apply this argumentation recursively until a node $X_n$ does not share a collider with $X_i$.
    The recursion will always come to an end as we have a finite set of nodes, and the causal graph is assumed to be acyclic.
    
    Thus, if the conditions in \twoEqref{eqn:appendix_proof_guarantees_orientation_assumption_forall}{eqn:appendix_proof_orientation_assumption_exists} hold for an edge $X_i\to X_j$ in the causal graph, we can guarantee that with sufficient time and data, the corresponding orientation parameter $\theta_{ij}$ will converge to $\sigma(\theta_{ij})=1$.
\end{proof}

\begin{theorem}
    \label{theo:appendix_guarantees_theta_ancestor}
    Consider a pair of variables $X_i,X_j$ for which $X_i$ is an ancestor of $X_j$ without direct edge in the true causal graph. 
    Then, the orientation parameter of the edge $X_i\to X_j$ converges to $\sigma(\theta_{ij})=1$ if the same conditions as in Theorem~\ref{theo:appendix_guarantees_theta} hold for the pair of $X_i,X_j$.
\end{theorem}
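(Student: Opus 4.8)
The plan is to reduce this statement to Theorem~\ref{theo:appendix_guarantees_theta} rather than re-derive the convergence from scratch. The gradient of $\theta_{ij}$ for the ancestor pair is governed by exactly the same expression as in \Eqref{eqn:appendix_orientation_gradients_shortened}, namely $\sigma'(\theta_{ij})\big(p(I_{X_i})\sigma(\gamma_{ij})T(X_i,X_j) - p(I_{X_j})\sigma(\gamma_{ji})T(X_j,X_i)\big)$, with $T$ as defined in the proof of Theorem~\ref{theo:appendix_guarantees_theta}. Since the prefactors $\sigma'(\theta_{ij})$, $\sigma(\gamma_{ij})$, $\sigma(\gamma_{ji})$ and the intervention probabilities all lie in $(0,1)$, the sign of the gradient is again controlled entirely by $T(X_i,X_j)$ and $T(X_j,X_i)$. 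It therefore suffices to re-establish the two facts $T(X_i,X_j)<0$ and $T(X_j,X_i)\geq 0$ in the ancestor setting; negativity of the net gradient, and hence $\sigma(\theta_{ij})\to 1$, then follows verbatim.

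For the first fact I would observe that the hypotheses in \twoEqref{eqn:appendix_proof_guarantees_orientation_assumption_forall}{eqn:appendix_proof_orientation_assumption_exists} are assumed here directly for the pair $X_i,X_j$, and that the argument in Theorem~\ref{theo:appendix_guarantees_theta} establishing $T(X_i,X_j)<0$ is purely distributional: it uses only that adding $X_i$ to every candidate parent set of $X_j$ does not hurt the log-likelihood, and strictly helps for at least one sampled set, under the intervention on $X_i$. Nowhere does that argument use that $X_i\to X_j$ is a \emph{direct} edge. Consequently this part of the proof transfers unchanged, and taking the expectation over adjacency matrices $p_{\bm{\gamma},\bm{\theta}}(C)$ gives $T(X_i,X_j)<0$.

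The main obstacle is the second fact, $T(X_j,X_i)\geq 0$, because the original justification (``intervening on $X_j$ removes the edge $X_i\to X_j$'') no longer literally applies when there is no direct edge. Here I would instead use that a perfect intervention on $X_j$ deletes \emph{all} incoming edges of $X_j$; since $X_i$ is an ancestor of $X_j$, every directed path from $X_i$ to $X_j$ passes through a parent of $X_j$ and is therefore cut, so $X_i$ and $X_j$ are independent in the post-intervention distribution. A conditional estimate $p(X_i\mid X_j,\ldots)$ built from observational correlations can then do no better than the marginal, which is precisely the cross-entropy inequality used in the proof of Theorem~\ref{theo:appendix_guarantees_theta}, yielding $T(X_j,X_i)\geq 0$. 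The one loophole is the collider case: if $X_i$ and $X_j$ share a common descendant $X_k$ that is conditioned on, dependence can be reintroduced. I expect to dispatch this exactly as before, by noting that such spurious conditioning would require some orientation parameter $\theta_{ki}$ to have converged against its own sign condition under the intervention on $X_k$---impossible for the same reason---and then recursing along the finite, acyclic graph until the chain of shared colliders terminates.

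Putting the two facts together, $T(X_i,X_j)-T(X_j,X_i)<0$ independently of the remaining entries of $\bm{\gamma}$ and $\bm{\theta}$, so the expected gradient of $\theta_{ij}$ is strictly negative and $\theta_{ij}$ is driven to $+\infty$, i.e. $\sigma(\theta_{ij})\to 1$. The only genuinely new content relative to Theorem~\ref{theo:appendix_guarantees_theta} is the ancestor-path severing argument for $T(X_j,X_i)\geq 0$; everything else is inherited.
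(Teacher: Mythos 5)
Your proposal is correct and follows essentially the same route as the paper's proof: the same sign analysis of the gradient via $T(X_i,X_j)$ and $T(X_j,X_i)$, with conditions (1) and (2) assumed directly for the pair giving $T(X_i,X_j)<0$, independence of $X_i$ and $X_j$ under perfect interventions on $X_j$ combined with the cross-entropy inequality giving $T(X_j,X_i)\geq 0$, and the same recursive, acyclicity-terminated argument to rule out spurious dependence through shared colliders. The paper phrases the $T(X_j,X_i)$ step as a case split on whether the sampled adjacency matrix renders $X_i$ conditionally independent of $X_j$ (zero difference) or not (strictly worse transferred correlations, reversed-edge scenarios excluded by Theorem~\ref{theo:appendix_guarantees_theta} plus recursion), but this is only an organizational difference from your path-severing formulation, not a different argument.
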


\begin{proof}
    To show this theorem, we need to consider two cases for a pair of variables $X_i$ and $X_j$: $X_i$ and $X_j$ are conditionally independent under a sampled adjacency matrix $C$, or $X_i$ and $X_j$ are not independent. 
    Both cases need to be considered for an intervention on $X_i$ with the log-likelihood estimate of $X_j$, and an intervention on $X_j$ with the log-likelihood estimate of $X_i$.
    
    First, we discuss interventions on $X_i$. 
    If under the sampled adjacency matrix $C$, $X_j$ is conditionally independent of $X_i$, the difference in the log-likelihood estimates $T(X_i,X_j)$ is zero in expectation.
    The variables can be independent if, for example, the parents of $X_j$ are all parents of the true causal graph.
    If $X_j$ is not conditionally independent of $X_i$, the conditions in \twoEqref{eqn:appendix_proof_guarantees_orientation_assumption_forall}{eqn:appendix_proof_orientation_assumption_exists} from Theorem~\ref{theo:appendix_guarantees_theta} ensure that $X_i$ has, in expectation, a positive effect on the log-likelihood estimate of $X_j$. 
    Thus, under interventions on $X_i$, the gradient of $\theta_{ij}$ is smaller or equals to zero, \ie increases $\theta_{ij}$.
    
    Next, we consider interventions on $X_j$.
    If under the sampled adjacency matrix $X_i$ is conditionally independent of $X_j$, the difference in the log-likelihood estimates $T(X_j,X_i)$ is zero.
    The variables can be independent if $X_i$ is conditioned on variables that d-separate $X_i$ and $X_j$ in the true causal graph. 
    For instance, having the children of $X_i$ as parents of $X_i$ creates this scenario.
    However, for this scenario to take place, one or more orientation parameters of parent-child or ancestor-descendant pairs must be incorrectly converged.
    In case of a parent-child pair $X_i,X_k$, Theorem~\ref{theo:appendix_guarantees_theta} shows that $\sigma(\theta_{ik})$ will converge to one removing any possibility of a reversed edge to be sampled.
    In case of an ancestor-descendant pair $X_i,X_l$, we can apply a recursive argument: as $X_l$ d-separates $X_i$ and $X_j$, $X_l$ must come before $X_j$ in the causal order.
    If for the gradient $\theta_{il}$, we have a similar scenario with $X_i$ being conditionally independent of $X_j$, the same argument applies.
    This can be recursively applied until no more variables except direct children of $X_i$ can d-separate $X_i$ and $X_j$.
    In that case, $\sigma(\theta_{ik})$ will converge to one, which leads to all other orientation parameters to converge to one as well.
    If $X_i$ is not conditionally independent of $X_j$, we can rely back on the argumentation of Theorem~\ref{theo:appendix_guarantees_theta} when we have an edge $X_i\to X_j$: as in the intervened causal graph, $X_i$ and $X_j$ are independent, any correlation learned from observational data can only lead to a worse log-likelihood estimate. 
    In cases of colliders, we can rely on the recursive argument from before.
    Thus, under interventions on $X_j$, the gradient of $\theta_{ij}$ must be smaller or equals to zero in expectation, \ie increases $\theta_{ij}$.
    
    Therefore, we can conclude that $\sigma(\theta_{ij})$ converges to one for any ancestor-descendant pairs $X_i,X_j$ under the conditions in Theorem~\ref{theo:appendix_guarantees_theta}.
\end{proof}

\begin{theorem}
    \label{theo:appendix_guarantees_gamma}
    Consider an edge $X_i\to X_j$ in the true causal graph. 
    The parameter $\gamma_{ij}$ converges to $\sigma(\gamma_{ij})=1$ if the following condition holds:
    \begin{equation}
        \label{eqn:appendix_guarantees_edge_assumption}
        \min_{\hat{\text{pa}}\subseteq \text{gpa}_i(X_j)}\E_{\hat{I}\sim p_{I_{-j}}(I)}\E_{\tilde{p}_{\hat{I}}(\bm{X})}\big[\log p(X_j|\hat{\text{pa}},X_i) - \log p(X_j|\hat{\text{pa}})\big] > \Lsparse
    \end{equation}
    where $\text{gpa}_i(X_j)$ is the set of nodes excluding $X_i$ which, according to the ground truth graph, could have an edge to $X_j$ without introducing a cycle, and $p_{I_{-j}}(I)$ refers to the distribution over interventions $p_I(I)$ excluding the intervention on variable $X_{j}$.
\end{theorem}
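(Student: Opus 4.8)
The plan is to show that, once the orientation parameters have converged as guaranteed by Theorems~\ref{theo:appendix_guarantees_theta} and~\ref{theo:appendix_guarantees_theta_ancestor}, the expected gradient of $\gamma_{ij}$ is strictly negative independent of all other parameter values, so that $\gamma_{ij}$ keeps increasing until $\sigma(\gamma_{ij})\to 1$. I would start from the gradient in Equation~\ref{eqn:appendix_edge_final_gradients},
\begin{equation}
    \frac{\partial}{\partial \gamma_{ij}}\tilde{\mathcal{L}}=\sigma'(\gamma_{ij})\cdot\sigma(\theta_{ij})\cdot\E_{\bm{X},C_{-ij}}\left[\mathcal{L}_{X_i\to X_j}(X_j)-\mathcal{L}_{X_i\not\to X_j}(X_j)+\Lsparse\right],
\end{equation}
and observe that the prefactors $\sigma'(\gamma_{ij})$ and $\sigma(\theta_{ij})$ are strictly positive, so the sign is fixed by the bracketed expectation. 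Rewriting the negative log-likelihood difference as a log-likelihood improvement, a gradient-descent step increases $\gamma_{ij}$ exactly when
\begin{equation}
    \E_{\bm{X},C_{-ij}}\left[\log p(X_j\mid\hat{\text{pa}},X_i)-\log p(X_j\mid\hat{\text{pa}})\right]>\Lsparse,
\end{equation}
where $\hat{\text{pa}}$ denotes the parent set of $X_j$ induced by the sampled matrix $C_{-ij}$.

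First, I would pin down which parent sets $\hat{\text{pa}}$ receive nonzero sampling probability. Since the update for $\gamma_{ij}$ is skipped whenever the intervention target is $X_j$ (see Algorithm~\ref{alg:appendix_training_loop}), the effective intervention distribution contributing to this gradient is $p_{I_{-j}}(I)$, matching the hypothesis. More importantly, by Theorems~\ref{theo:appendix_guarantees_theta} and~\ref{theo:appendix_guarantees_theta_ancestor} every orientation parameter $\theta_{kj}$ for a descendant $X_k$ of $X_j$ satisfies $\sigma(\theta_{kj})\to 0$, so such $X_k$ can never be sampled as a parent of $X_j$. Consequently every $\hat{\text{pa}}$ with positive probability is a subset of $\text{gpa}_i(X_j)$, exactly the candidate set appearing in Equation~\ref{eqn:appendix_guarantees_edge_assumption}.

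Because the adjacency-matrix samples $C_{-ij}$ are drawn independently of the intervention and the data, I would swap the order of expectations and condition on the sampled parent set: for each fixed $\hat{\text{pa}}\subseteq\text{gpa}_i(X_j)$, the inner expectation $\E_{\hat{I}\sim p_{I_{-j}}(I)}\E_{\tilde{p}_{\hat{I}}(\bm{X})}[\log p(X_j\mid\hat{\text{pa}},X_i)-\log p(X_j\mid\hat{\text{pa}})]$ is bounded below by the minimum over all admissible parent sets, which by Equation~\ref{eqn:appendix_guarantees_edge_assumption} strictly exceeds $\Lsparse$. The full expectation over $C_{-ij}$ is a convex combination of these conditional expectations and hence also strictly exceeds $\Lsparse$. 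The bracketed term is therefore negative in expectation for every value of $\bm{\gamma}$ and $\bm{\theta}$, so $\gamma_{ij}$ increases monotonically in expectation and $\sigma(\gamma_{ij})\to 1$.

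I expect the main obstacle to be the middle step: rigorously establishing that after orientation convergence the induced distribution over parent sets is supported only on subsets of $\text{gpa}_i(X_j)$, and ruling out transient regimes (before the orientations have fully converged) in which a descendant is momentarily sampled as a parent and could, through a collider path, make the conditional log-likelihood difference smaller than $\Lsparse$. The argument must lean on the recursive collider and acyclicity reasoning already used in the proof of Theorem~\ref{theo:appendix_guarantees_theta}, together with the fact that $\text{gpa}_i(X_j)$ is by definition the cycle-free candidate parent set, so that the minimum in Equation~\ref{eqn:appendix_guarantees_edge_assumption} already accounts for every admissible conditioning set rather than only for the true parents.
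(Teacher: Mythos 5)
Your proposal is correct and follows essentially the same route as the paper's proof: assume the orientation parameters have converged per Theorems~\ref{theo:appendix_guarantees_theta} and~\ref{theo:appendix_guarantees_theta_ancestor}, observe that this restricts the sampled parent sets of $X_j$ to subsets of $\text{gpa}_i(X_j)$, and use the worst-case (minimum) condition in \Eqref{eqn:appendix_guarantees_edge_assumption} to conclude that the bracketed term in the gradient of $\gamma_{ij}$ from \Eqref{eqn:appendix_edge_final_gradients} is negative in expectation independently of the remaining values of $\bm{\gamma}$, so $\sigma(\gamma_{ij})\to 1$. Your explicit convex-combination step over conditional parent sets and the note that skipping updates under interventions on $X_j$ yields $p_{I_{-j}}(I)$ merely spell out details the paper leaves implicit, and the ``transient regime'' concern you raise is handled in the paper by the sequential structure of the argument (orientation convergence is established first, independently of $\bm{\gamma}$).
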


\begin{proof}
    To show this convergence, we assume that the orientation parameters have converged corresponding to Theorem~\ref{theo:appendix_guarantees_theta} and \ref{theo:appendix_guarantees_theta_ancestor}. 
    The parameter $\gamma_{ij}$ converges to $\sigma(\gamma_{ij})=1$ if its gradient, $\frac{\partial}{\partial \gamma_{ij}}\mathcal{\tilde{L}}$, is negative independent of other values of $\bm{\gamma}$ and orientation parameters $\bm{\theta}$ that are not included in Theorem~\ref{theo:appendix_guarantees_theta} and \ref{theo:appendix_guarantees_theta_ancestor}.
    The gradient of $\gamma_{ij}$ includes an expectation over adjacency matrices $p_{\bm{\gamma},\bm{\theta}}(C)$.
    Based on the converged $\bm{\theta}$-values, we only need to consider sets of nodes as parents for $X_j$ that contain parents, ancestors, or (conditionally) independent nodes according to the ground truth graph.
    This sets of parents is represented by $\text{gpa}_i(X_j)$.
    Among those remaining parent sets, we need to ensure that for any such set, the gradient is negative.
    The condition in \Eqref{eqn:appendix_guarantees_edge_assumption} corresponds to the inequality $\frac{\partial}{\partial \gamma_{ij}}\mathcal{\tilde{L}}<0$ since the term on the left represents the log-likelihood difference $\mathcal{L}_{X_i\to X_j}(X_j)-\mathcal{L}_{X_i\not\to X_j}(X_j)$ in the gradients of $\gamma_{ij}$ in \Eqref{eqn:appendix_edge_final_gradients} with a flipped sign.
    For readability and better interpretation, $\Lsparse$ has been moved on the right site of the inequality.
    This is possible as $\Lsparse$ is independent of the two expectations in \Eqref{eqn:appendix_guarantees_edge_assumption}.
    If the inequality holds for all parent sets $\hat{\text{pa}})$, the gradient of $\gamma_{ij}$ can be guaranteed to be negative in expectation, independent of the other values of $\bm{\gamma}$.
    Since the distribution over parent sets $\hat{\text{pa}})$ depends on other values of $\bm{\gamma}$, the condition in \Eqref{eqn:appendix_guarantees_edge_assumption} ensures that even for the parent set with the lowest log-likelihood difference, it is still larger than $\Lsparse$.
    If this condition holds, then the gradient of $\frac{\partial}{\partial \gamma_{ij}}\mathcal{\tilde{L}}$ will be smaller than zero independent of other values of $\bm{\gamma}$.
\end{proof}
    
The condition in \Eqref{eqn:appendix_guarantees_edge_assumption} introduces a dependency between convergence guarantees and the regularizer parameter $\Lsparse$.
The lower we set the regularization weight $\Lsparse$, the more edges we can guarantee to recover.
If the regularization weight is set too high, we can eventually obtain false negative edge predictions.
If the regularization weight is set very low, we take a longer time to converge as it requires lower gradient variance or more update steps, and is more sensitive in a limited data regime.
Nonetheless, if sufficient computational resources and data is provided, any value of $\Lsparse>0$ can be used.

\begin{theorem}
    \label{theo:appendix_guarantees_lambda}
    Assume for all edges $X_i\to X_j$ in the true causal graph, $\sigma(\theta_{ij})$ and $\sigma(\gamma_{ij})$ have converged to one. Then, the likelihood of all other edges, \ie $\sigma(\theta_{lk})\cdot \sigma(\theta_{lk})$, will converge to zero under the condition that $\Lsparse>0$.
\end{theorem}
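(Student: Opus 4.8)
The plan is to show that for every ordered pair $(l,k)$ that is \emph{not} an edge of $\mathcal{G}$, the edge probability $\sigma(\gamma_{lk})\cdot\sigma(\theta_{lk})$ is driven to zero by analysing the gradient of $\gamma_{lk}$ from \Eqref{eqn:appendix_edge_final_gradients}. That gradient equals $\sigma'(\gamma_{lk})\cdot\sigma(\theta_{lk})\cdot\E_{\bm{X},C_{-lk}}\left[\mathcal{L}_{X_l\to X_k}(X_k)-\mathcal{L}_{X_l\not\to X_k}(X_k)+\Lsparse\right]$, so it suffices to show that the bracketed expectation is non-negative, and strictly positive whenever $\sigma(\theta_{lk})>0$. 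Since the algorithm performs gradient \emph{descent}, a positive gradient forces $\gamma_{lk}\to-\infty$, hence $\sigma(\gamma_{lk})\to 0$, which is what we want.

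First I would split the non-edges into two groups using the already-converged orientation parameters (Theorems~\ref{theo:appendix_guarantees_theta} and~\ref{theo:appendix_guarantees_theta_ancestor}). If $X_l$ is a descendant of $X_k$ (in particular if $(k,l)$ is a true edge), then $\sigma(\theta_{kl})=1$, and the antisymmetry $\theta_{lk}=-\theta_{kl}$ gives $\sigma(\theta_{lk})=0$; the product $\sigma(\gamma_{lk})\cdot\sigma(\theta_{lk})$ is then identically zero and nothing remains to prove. For all remaining non-edges $X_l$ is a non-descendant of $X_k$, and here I would argue that the likelihood difference inside the bracket vanishes, leaving only the $+\Lsparse$ term.

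The core step is a conditional-independence argument. Because the orientations have converged, the only columns that can be switched on as parents of $X_k$ in a sampled $C_{-lk}$ are non-descendants of $X_k$, while the true parents $\pa{X_k}$ are always present (their $\gamma,\theta$ have converged to one). Writing $S$ for the sampled parent set, I would invoke the local Markov property $X_k \perp \big(\text{nd}(X_k)\setminus\pa{X_k}\big)\mid \pa{X_k}$ together with the graphoid axioms (decomposition followed by weak union) to conclude $X_k\perp X_l\mid S$ for every such $S$ and every non-descendant $X_l\notin S$. Under Assumptions~4 and~5 the networks recover the true conditionals, so $\mathcal{L}_{X_l\to X_k}(X_k)=\mathcal{L}_{X_l\not\to X_k}(X_k)$ for each sampled $S$, and the bracket collapses to $\Lsparse$. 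The gradient of $\gamma_{lk}$ is then $\sigma'(\gamma_{lk})\cdot\sigma(\theta_{lk})\cdot\Lsparse\ge 0$, strictly positive whenever $\sigma(\theta_{lk})>0$, yielding $\sigma(\gamma_{lk})\to 0$ and hence $\sigma(\gamma_{lk})\cdot\sigma(\theta_{lk})\to 0$ in every case.

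I expect the main obstacle to be making this conditional-independence step airtight rather than the gradient bookkeeping. Two points need care: (i) the independence must hold not merely for $S=\pa{X_k}$ but for \emph{every} superset $S$ of the parents that the sampler can produce, which is exactly where the restriction that $S$ contains only non-descendants (a consequence of the converged orientations) combined with weak union prevents a collider from being opened; and (ii) the expectation also ranges over interventions $\hat{I}\sim p_I(I)$ on variables other than $X_k$ (the update skips $\hat{I}=X_k$), so I would note that cutting the incoming edges at the intervention target only removes descendant relations and leaves $\pa{X_k}$ unchanged, so the same local Markov and weak-union argument applies verbatim in each mutilated graph. Once these are settled, the implication $\Lsparse>0\Rightarrow\sigma(\gamma_{lk})\cdot\sigma(\theta_{lk})\to 0$ for all non-edges is immediate, completing Step~3.
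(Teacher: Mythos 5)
Your proposal is correct and follows essentially the same route as the paper's proof: descendant-to-ancestor candidates are killed by the converged orientation parameters (antisymmetry of $\bm{\theta}$ via Theorems~\ref{theo:appendix_guarantees_theta} and~\ref{theo:appendix_guarantees_theta_ancestor}), and for the remaining non-edges the conditional independence given the always-sampled true parents makes $\mathcal{L}_{X_l\to X_k}(X_k)-\mathcal{L}_{X_l\not\to X_k}(X_k)$ vanish, leaving the strictly positive gradient $\sigma'(\gamma_{lk})\cdot\sigma(\theta_{lk})\cdot\Lsparse$ that drives $\sigma(\gamma_{lk})\to 0$. You merely make explicit what the paper asserts tersely --- the weak-union/decomposition step for arbitrary sampled parent supersets, the verbatim validity in each mutilated graph under the interventional expectation, and the ancestor-without-direct-edge case being screened off by the converged parent set --- which is a welcome tightening rather than a different argument.
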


\begin{proof}
    If all edges in the ground truth graph have converged, all other pairs of variables $X_l,X_k$ are (conditionally) independent in the graph.
    This statement follows from the Markov property of the graph and excludes ancestor-descendant pairs $X_i,X_j$.
    The possibility of having edges from descendants to ancestors has been removed by the fact that the orientation parameters $\theta_{ij}$ have converged according to Theorem~\ref{theo:appendix_guarantees_theta_ancestor}.
    Thus, for those cases, we already have the guarantee that $\sigma(\theta_{ij})\cdot \sigma(\theta_{ij})$ converges to zero.
    
    For a conditionally independent pair $X_l,X_k$, the difference of the log-likelihood estimate in the gradient of $\gamma_{lk}$, \ie $\mathcal{L}_{X_l\to X_k}(X_k)-\mathcal{L}_{X_l\not\to X_k}(X_k)$, is zero in expectation since independent nodes do not share any information.
    Thus, the gradient remaining is:
    \begin{equation}
        \frac{\partial}{\partial \gamma_{lk}}\tilde{\mathcal{L}}=\sigma'(\gamma_{lk})\cdot\sigma(\theta_{lk})\cdot\Lsparse
    \end{equation}
    Since the gradient is positive independent of the values of $\gamma_{lk}$ and $\theta_{lk}$, $\gamma_{lk}$ will decrease until it converges to $\sigma(\gamma_{lk})=0$.
    
    Hence, if $\gamma_{lk}$ decreases for all pairs of (conditionally) independent variables $X_l,X_k$ in the ground truth graph, and $\sigma(\theta_{lk})$ converged to zero for children and descendants, the product $\sigma(\gamma_{lk})\cdot\sigma(\theta_{lk})$ will converge to zero for all edges not existing in the ground truth graph.
\end{proof}

For graphs that fulfill all conditions in the Theorems~\ref{theo:appendix_guarantees_theta} and \ref{theo:appendix_guarantees_lambda}, \OurApproach{} is guaranteed to converge given sufficient data and time.
The conditions in the theorems ensure that there exist no local minima or saddle points in the loss surface of the objective in \Eqref{eqn:methodology_original_objective} with respect to $\bm{\gamma}$ and $\bm{\theta}$. 

\paragraph{Summary} We can summarize the conditions discussed above as follows.
Given a causal graph $\mathcal{G}$ with variables $X_{1},...,X_{N}$ and sparse interventions on all variables, the proposed method ENCO will converge to the true, causal graph $\mathcal{G}$, if the following three conditions hold for all edges $X_i\to X_j$ in the true causal graph $\mathcal{G}$:
\begin{enumerate}
    \item For all possible sets of parents of $X_j$ excluding $X_i$, adding $X_i$ improves the log-likelihood estimate of $X_j$ under the intervention on $X_i$, or leaves it unchanged:
    \begin{equation}
        \forall \widehat{\text{pa}}(X_j)\subseteq X_{-i,j}: \E_{I_{X_i},\mathbf{X}}\left[\log p(X_j|\widehat{\text{pa}}(X_j),X_i) - \log p(X_j|\widehat{\text{pa}}(X_j))\right] \geq 0
    \end{equation}
    \item There exists a set of nodes $\widehat{\text{pa}}(X_j)$, for which the probability to be sampled as parents of $X_j$ is greater than 0, and the following condition holds: 
    \begin{equation}
        \exists \widehat{\text{pa}}(X_j)\subseteq X_{-i,j}: \E_{I_{X_i},\mathbf{X}}\left[\log p(X_j|\widehat{\text{pa}}(X_j),X_i) - \log p(X_j|\widehat{\text{pa}}(X_j))\right] > 0
    \end{equation}
    \item The effect of $X_i$ on $X_j$ cannot be described by other variables up to $\lambda_{\text{sparse}}$: 
    \begin{equation}
        \min_{\hat{\text{pa}}\subseteq \text{gpa}_i(X_j)}\E_{\hat{I}\sim p_{I_{-j}}(I)}\E_{\tilde{p}_{\hat{I}}(\mathbf{X})}\big[\log p(X_j|\hat{\text{pa}},X_i) - \log p(X_j|\hat{\text{pa}})\big] > \lambda_{\text{sparse}}
    \end{equation}
    where $\text{gpa}_i(X_j)$ is the set of nodes excluding $X_i$ which, according to the ground truth graph, could have an edge to $X_j$ without introducing a cycle.
\end{enumerate}
Further, for all other pairs $X_i,X_j$ for which $X_j$ is a descendant of $X_i$, conditions (1) and (2) need to hold as well.

\subsubsection{Example for checking convergence conditions}
\label{sec:appendix_example_checking_convergence_conditions}

In the following, we will provide a walkthrough for how the conditions above can be checked on a simple example graph.
For further details on the precise calculations, we provide a Jupyter Notebook that contains all calculations in this example\footnote{The calculations can be found in the notebook called \texttt{convergence\_guarantees\_ENCO.ipynb}, see
\ifexport
\url{https://github.com/phlippe/ENCO/blob/main/convergence_guarantees_ENCO.ipynb}
\else 
\url{https://github.com/anony192847385/ICLR2022_Paper657/blob/main/convergence_guarantees_ENCO.ipynb}
\fi 
.}.

Suppose we have a graph with 3 binary variables, $X_1$, $X_2$, $X_3$, with the causal graph being $X_1\to X_2\to X_3$, i.e., a small chain.
For simplicity, let us assume that the true, conditional distributions are the following: 
\begin{eqnarray}
    \label{eq:example_graph_convergence_dist_X1}
    p(X_1)&=&\text{Bern}(0.7)\\
    \label{eq:example_graph_convergence_dist_X2}
    p(X_2|X_1)&=&\begin{cases}X_1&\text{with prob. } 0.6\\X_1\oplus 1&\text{with prob. } 0.4\end{cases}\\
    \label{eq:example_graph_convergence_dist_X3}
    p(X_3|X_2)&=&\begin{cases}X_2&\text{with prob. } 0.2\\X_2\oplus 1&\text{with prob. } 0.8\end{cases}
\end{eqnarray}
In other words, $X_2$ is equals to the value of $X_1$ with a probability of $0.6$, and the opposite binary value otherwise. 
Similarly, $X_3$ is equals to the value of $X_2$ with a probability of $0.2$, and the opposite binary value with a probability of $0.8$.
Therefore, the sample with the highest probability in this joint distribution would be $X_1=1,X_2=1,X_3=0$.
Further, we assume that all interventions replace the respective conditional distribution by a uniform distribution, i.e., $p_{I_{X_i}}(X_i)=\text{Bern}(0.5)$.
Next, we will check the conditions for the edges in $\mathcal{G}$, i.e., $X_1\to X_2$ and $X_2\to X_3$, and the remaining ancestor-descendant pair $X_1,X_3$.

\begin{figure}[t!]
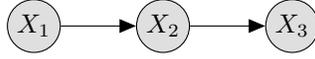

    \centering
	\tikz{ %
		\node[obs] (X1) {$X_1$} ; %
		\node[obs, right=of X1] (X2) {$X_2$} ; %
		\node[obs, right=of X2] (X3) {$X_3$} ; %
		
	    \edge{X1}{X2}
	    \edge{X2}{X3}
	}
	\caption{Example graph for which we check the convergence conditions described in Appendix~\ref{sec:appendix_convergence_conditions}. The conditional distributions are given in \Eqref{eq:example_graph_convergence_dist_X1} to \ref{eq:example_graph_convergence_dist_X3}.}
	\label{fig:appendix_example_checking_convergence_graph}
\end{figure}

\textbf{Edge }$X_1\to X_2$: 
\begin{itemize}
    \item Condition 1: the possible parent sets that exclude $X_1$ and $X_2$ are $X_3$ and the empty set. For the empty set, we get: 
    $$\E_{I_{X_1},\mathbf{X}}\left[\log p(X_2|X_1) - \log p(X_2)\right]\approx 0.023 \geq 0$$
    For conditioning on $X_3$, we obtain:
    $$\E_{I_{X_1},\mathbf{X}}\left[\log p(X_2|X_1,X_3) - \log p(X_2|X_3)\right]\approx 0.015 \geq 0$$
    Since both values are greater than zero, condition 1 is fulfilled for $X_1\to X_2$.
    \item Condition 2: is already fulfilled by the equations in condition 1 since all parent sets have a difference greater than zero.
    \item Condition 3: the set $\text{gpa}_1(X_2)$ is the empty set since $X_3$ is a descendant of $X_2$, and no other nodes exist in the graph. Thus, the parent set minimizing the expression on the left can only be the empty set, and we can calculate it as follows:
    $$\E_{\hat{I}\sim p_{I_{-2}}(I)}\E_{\tilde{p}_{\hat{I}}(\mathbf{X})}\big[\log p(X_2|X_1) - \log p(X_2)\big] \approx \underbrace{1/2\cdot 0.023}_{I_{X_1}} + \underbrace{1/2\cdot 0.017}_{I_{X_3}} = 0.020 > \lambda_{\text{sparse}}$$
    with assuming $p_I(I)$ being the uniform distribution, and excluding $I_{X_2}$ since we do not update $\gamma_{12}$ in this case.
    Hence, as long as $\lambda_{\text{sparse}}$ is smaller than $0.02$, the condition is fulfilled.
\end{itemize}

\textbf{Edge }$X_2\to X_3$: 
\begin{itemize}
    \item Condition 1: the possible parent sets that exclude $X_2$ and $X_3$ are $X_1$ and the empty set. For the empty set, we get: 
    $$\E_{I_{X_2},\mathbf{X}}\left[\log p(X_3|X_2) - \log p(X_3)\right]\approx 0.194 \geq 0$$
    For conditioning on $X_1$, we obtain:
    $$\E_{I_{X_2},\mathbf{X}}\left[\log p(X_3|X_2,X_1) - \log p(X_3|X_1)\right]\approx 0.200 \geq 0$$
    Since both values are greater than zero, condition 1 is fulfilled for $X_2\to X_3$.
    \item Condition 2: is already fulfilled by the equations in condition 1 since all parent sets have a difference greater than zero.
    \item Condition 3: the set $\text{gpa}_2(X_3)$ contains the variable $X_1$ since we can introduce an edge $X_1\to X_3$ without introducing acyclicity in the true, causal graph. Thus, we need to compare two parent sets for finding the minimum of the left-side term: $X_1$ and the empty set. First, we consider the empty set:
    $$\E_{\hat{I}\sim p_{I_{-3}}(I)}\E_{\tilde{p}_{\hat{I}}(\mathbf{X})}\big[\log p(X_3|X_2) - \log p(X_3)\big] \approx \underbrace{1/2\cdot 0.194}_{I_{X_1}} + \underbrace{1/2\cdot 0.194}_{I_{X_2}} = 0.194 > \lambda_{\text{sparse}}$$
    Again, we exclude $I_{X_3}$ since we do not update $\gamma_{23}$ in this case.
    The second case considers $X_1$ as additional parent set $\hat{\text{pa}}$:
    \begin{equation*}
        \begin{split}
            \E_{\hat{I}\sim p_{I_{-3}}(I)}\E_{\tilde{p}_{\hat{I}}(\mathbf{X})}\big[\log p(X_3|X_2,X_1) - \log p(X_3|X_1)\big] & \approx \underbrace{1/2\cdot 0.186}_{I_{X_1}} + \underbrace{1/2\cdot 0.200}_{I_{X_2}} \\
             & = 0.193 > \Lsparse \\
        \end{split}
    \end{equation*}
    The minimum of both values is $0.193$. Hence, the edge $X_2\to X_3$ can be recovered if $0.193 > \lambda_{\text{sparse}}$.
\end{itemize}

\textbf{Ancestor-descendant pair }$X_1,X_3$:
\begin{itemize}
    \item Condition 1: the possible parent sets that exclude $X_1$ and $X_3$ are $X_2$ and the empty set. For the empty set, we get:
    $$\E_{I_{X_1},\mathbf{X}}\left[\log p(X_3|X_1) - \log p(X_3)\right]\approx 0.008 \geq 0$$
    For conditioning on $X_2$, we obtain:
    $$\E_{I_{X_1},\mathbf{X}}\left[\log p(X_3|X_1,X_2) - \log p(X_3|X_2)\right]= 0 \geq 0$$
    The difference is zero because $X_3$ is independent of $X_1$ when conditioned on $X_2$: $p(X_3|X_1,X_2)=p(X_3|X_2)$
    Since both values are greater or equals to zero, condition 1 is fulfilled for the pair $X_1,X_3$.
    \item Condition 2: from condition 1, we can see that the parent set of $\widehat{\text{pa}}(X_3)$ being the empty set is the only option that fulfills the condition being greater than zero. Since we start the optimization process with an initialization that assigns a non-zero probability to all possible parent sets, it follows that $\widehat{\text{pa}}(X_3)$ being the empty set has a probability greater than zero throughout the optimization process. Hence, condition 2 is fulfilled as well.
\end{itemize}

\textbf{Summary}: in conclusion, for the discussed example, we can guarantee that ENCO converges to the correct causal graph if $\lambda_{\text{sparse}}<0.02$. 
To experimentally verify this results, we applied ENCO on this graph with two hyperparameter settings for the sparsity regularizer: $\lambda_{\text{sparse}}=0.019$ and $\lambda_{\text{sparse}}=0.021$. 
We considered a very large sample size, more specifically 10k per intervention and 100k observational samples, to simulate the data limit regime.
For $\lambda_{\text{sparse}}=0.019$, ENCO was able to recover the graph without errors while for $\lambda_{\text{sparse}}=0.021$, the edge $X_1\to X_2$ was, as expected, missed. 
This verifies the theoretical result above with respect to $\lambda_{\text{sparse}}$. 
Note that if the condition is not fulfilled by selecting a too large sparsity regularizer, this does not necessarily mean that ENCO will not be able to recover the graph.
This is because we consider the 'worst-case' parent set in condition 3, while this case might not be in the true causal graph to which the other edges converge.

\subsubsection{Intuition behind Condition 1 and 2}

As mentioned in the text, condition 1 and 2 of Theorem~\ref{theo:main_text_convergence} ensure that the orientation probabilities cannot converge to any local optima.
Since the conditions explicitly involve the data distributions and implicitly the gradient estimators, we provide below an assumption from a data generation mechanism perspective as an alternative, that ensures condition 1 and 2 to be satisfied. 

Firstly, we assume that ancestors and descendants are not independent under interventions on the ancestors.
Note that there can exist graphs where the ancestors are independent of descendants, for instance in a linear Gaussian setting when the ancestor has a weight of zero on the descendant.
However, those graphs, violating faithfulness, are impossible to find for any causal discovery method since the variables are independent under any setting.
In terms of condition 1 and 2, it would imply that the inequality is always zero.

Next, we show that under the previous assumption, local optima of the orientation probabilities can only occur in the following structure: for an edge $X_i\to X_j$, there exist one or more parent(s) of $X_j$ sharing a common confounder $X_k$ with $X_i$, where $X_k$ is not a direct parent of $X_j$.
An example of this structure is the following: $X_1\to X_2,X_3$; $X_2,X_3\to X_4$ where the orientations of the edges $X_2\to X_4$ and $X_3\to X_4$ could have a local optimum. 
This statement can be proven as follows by using the do-calculus \cite{Pearl2009Causality}.
Suppose a graph that includes the three variables $X_1, X_2, X_3$ with $X_1\to X_2$, $X_3\to X_2$, and $X_2$ having no parents besides $X_1$ and $X_3$.
If $X_1$ and $X_3$ do not share a confounder, then, from do-calculus, we know that $p(X_2|\text{do}(X_1=x_1))=p(X_2|X_1=x_1)$ and $p(X_2|\text{do}(X_3=x_3))=p(X_2|X_3=x_3)$.
Furthermore, since the conditional entropy of a variable can only be smaller or equals to the marginal, i.e. $H(X)\geq H(X|Y)$, estimating $X_2$ under interventions on $X_1$ can only be improved by conditioning on $X_1$, and similarly for $X_3$.
Thus, condition 1 is strictly fulfilled when parents do not share a confounder under the previous assumption of no independence in all possible settings.
Now, consider the situation where $X_1$ and $X_3$ share a common confounder.
Then, from do-calculus, we can state that there can exist a parameterization of the conditional distributions for which $p(X_2|\text{do}(X_1=x_1))\neq p(X_2|X_1=x_1)$.
Under this setting, we cannot guarantee that condition 1 is always fulfilled.
However, whether this parent-confounder structure above actually leads to a local optimum or not depends on the distributions, which condition 1 models.
Intuitively, this requires the mutual information between the two or more parents to be very high, and the initial edge probabilities of those edges to be very low.
Further, as the results show, this combination of events is not very common in practice, meaning that as long as the ancestor and descendant are not independent under the interventions, we usually converge to a graph with the correct orientation.

Besides, if the confounder $X_j$ of $X_1$ and $X_3$ is a parent of $X_2$, then the local optimum would disappear with learning that edge since $p(X_2|\text{do}(X_1=x_1),X_j)= p(X_2|X_1=x_1,X_j)$.
In conclusion, for many of the graph structures like chain, bidiag, collider, full and jungle, this shows that there does not exist any local optima for the orientation probabilities.
Only for the certain structures of confounded parents, there may exist local optima that depend on the specific distribution parameterization.

\subsubsection{Limited data regime}
\label{sec:appendix_guarantees_limited_data_regime}

Assumption (3) and (4) are taken with respect to the data limit such that the conditions derived in the next section solely depend on the given causal graphical model.
However, in practice, we often have a limited data set.
The proof presented for the data limit is straightforward to extend to this setting with the following modification:
\begin{itemize}
    \item The conditional distributions $p(X|...)$ are replaced by the conditional distributions that follow from the given, observational data. 
    \item The expectations over the interventional data $\tilde{p}_{\hat{I}}(\bm{X})$ is replaced by the joint distribution over samples given for the intervention $\hat{I}$.
    \item Theorem~\ref{theo:appendix_guarantees_theta} and \ref{theo:appendix_guarantees_theta_ancestor} for the edge $X_i\to X_j$ are extended as follows:
    \begin{enumerate}[label=(\arabic*)]
        \setcounter{enumi}{2}
        \item \textit{For all possible sets of parents of $X_i$ excluding $X_j$, adding $X_j$ does not improve the log-likelihood estimate of $X_i$ under the intervention on $X_j$, or leaves it unchanged: }
    \end{enumerate}
    \begin{equation}
        \label{eqn:appendix_proof_guarantees_orientation_assumption_forall_limited_setting}
        \forall \widehat{\text{pa}}(X_i)\subseteq X_{-i,j}:  \E_{I_{X_j},\bm{X}}\left[\log p(X_i|\widehat{\text{pa}}(X_i),X_j) - \log p(X_i|\widehat{\text{pa}}(X_i))\right] \leq 0
    \end{equation}
    This condition is the inverse statement of \Eqref{eqn:appendix_proof_guarantees_orientation_assumption_forall}, in the sense that we consider interventions on the child/descendant $X_j$. In the data limit, this naturally follows from \Eqref{eqn:appendix_proof_guarantees_orientation_assumption_forall} and \Eqref{eqn:appendix_proof_orientation_assumption_exists}, but in the limited data regime, we might have violations of \Eqref{eqn:appendix_proof_guarantees_orientation_assumption_forall_limited_setting} due to biases in our samples.
    Violations of \Eqref{eqn:appendix_proof_guarantees_orientation_assumption_forall_limited_setting} are the cause of \OurApproach{} predicting cyclic graphs as seen in \Secref{sec:experiment_synthetic}.
    \item Finally, Theorem~\ref{theo:appendix_guarantees_lambda} does not necessarily hold anymore since noise in our data can lead to an overestimation of edges. Thus, we add the following condition:
    \begin{enumerate}[label=(\arabic*)]
        \item \textit{For all pairs of variables $X_i,X_j$ for which there exists no direct causal relation in the true causal graph, and $X_j$ not being the ancestor of $X_i$, the following condition has to hold:}
        \begin{equation}
            \hspace{-5mm}
            \min_{\hat{\text{pa}}\subseteq (\text{gpa}_i(X_j)\setminus \text{pa}(X_j))}\E_{\hat{I}\sim p_{I_{-j}}(I)}\E_{\tilde{p}_{\hat{I}}(\mathbf{X})}\big[\log p(X_j|\text{pa}(X_j),\hat{\text{pa}},X_i) - \log p(X_j|\text{pa}(X_j),\hat{\text{pa}})\big] < \lambda_{\text{sparse}}
        \end{equation}
        \textit{where $\text{gpa}_i(X_j)$ is the set of nodes excluding $X_i$ which, according to the ground truth graph, could have an edge to $X_j$ without introducing a cycle.}
    \end{enumerate}
    This condition ensures that no correlations due to sample biases introduce additional edges in the causal graphs.
\end{itemize}

If the conditions discussed above hold with respect to the given observational and interventional dataset, we can guarantee that \OurApproach{} will converge to the true causal graph given sufficient time.

\subsubsection{Graphs with limited guarantees}
\label{sec:appendix_guarantees_violating_graphs}

Most common causal graphs fulfill the conditions mentioned above, as long as a small enough value for $\Lsparse$ is chosen. 
Still, there are situations where we cannot guarantee that \OurApproach{} convergences to the correct causal graph independent of the chosen value of $\Lsparse$.
Here, we want to discuss two scenarios visualized in \Figref{fig:appendix_guarantees_fail_cases} under which the guarantees fail.
Still, we want to emphasize that despite graphs not fulfilling the conditions, \OurApproach{} might still converge to the correct DAG for those as the guarantee conditions assume the worst-case scenarios for $\bm{\theta}$ and $\bm{\gamma}$ in all situations.

\begin{figure}[t!]
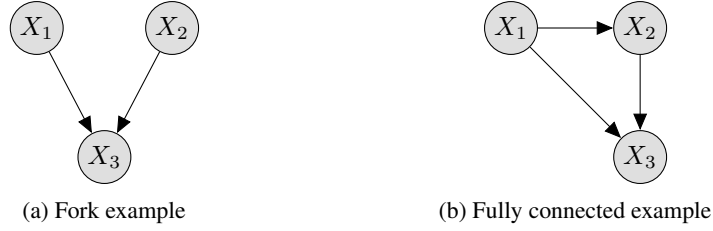

    \centering
    \begin{subfigure}[t]{0.4\textwidth}
    	\centering
    	\tikz{ %
    		\node[obs] (X3) {$X_3$} ; %
    		\node[obs, above=of X3, xshift=-.9cm] (X1) {$X_1$} ; %
    		\node[obs, above=of X3, xshift=.9cm] (X2) {$X_2$} ; %
    		
    	    \edge{X1}{X3}
    	    \edge{X2}{X3}
    	}
    	\caption{Fork example}
    	\label{fig:appendix_guarantees_fail_case_1}
    \end{subfigure}
    \hspace{5mm}
    \begin{subfigure}[t]{0.4\textwidth}
    	\centering
    	\tikz{ %
    		\node[obs] (X1) {$X_1$} ; %
    		\node[obs, right=of X1] (X2) {$X_2$} ; %
    		\node[obs, below=of X2] (X3) {$X_3$} ; %
    		
    	    \edge{X1}{X3}
    	    \edge{X1}{X2}
    	    \edge{X2}{X3}
    	}
    	\caption{Fully connected example}
    	\label{fig:appendix_guarantees_fail_case_2}
    \end{subfigure}
    \caption{Causal graph structures for which, under specific parameterization of the conditional distributions, the conditions for guaranteeing convergence can be violated.}
    \label{fig:appendix_guarantees_fail_cases}
\end{figure}

The first example we discuss is based on a fork structure where we have three binary variables, $\{X_1,X_2,X_3\}$, and the edges $X_1\to X_3$ and $X_2\to X_3$ (see \Figref{fig:appendix_guarantees_fail_case_1}).
The parameterization we look at is a (noisy) XOR-gate for $X_3$ with its two input variables $X_1,X_2$ being independent of each other and uniformly distributed. 
The conditional probability distribution $p(X_3|X_1,X_2)$ can be summarized in the following probability function:
\begin{equation}
    p(X_3=1|X_1,X_2) = \begin{cases}
        \epsilon & \text{if }X_1=0,X_2=0\\
        1-\epsilon & \text{if }X_1=1,X_2=0\\
        \epsilon & \text{if }X_1=0,X_2=1\\
        1-\epsilon & \text{if }X_1=1,X_2=1\\
    \end{cases}
\end{equation}

In other words, if $X_1\neq X_2$, $X_3$ is equals $1$ with a likelihood of $1-\epsilon$.
If $X_1=X_2$, $X_3$ is equals $1$ with a likelihood of $\epsilon$.
The issue that this probability table creates is the following.
Knowing only one out of the two variables does not improve the log likelihood estimate for the output. 
This is because $X_1$ and $X_2$ are independent of each other, and $p(X_3|X_1)=p(X_3)$ is a uniform distribution.
Hence, the worst-case parent set in \Eqref{eqn:methodology_guarantees_orientation_assumption} would be the empty set, and leads to an expected difference log-likelihood difference of zero.
As $\Lsparse$ is required to be greater than zero for Theorem~\ref{theo:appendix_guarantees_lambda}, we cannot fulfill the condition for that graph.
This means that an empty graph without any edges is a local minimum to which \OurApproach{} could converge.
Yet, when the edge probabilities are non-zero, we will sample adjacency matrices with both input variables being a parent of $X_3$ with a non-zero probability.
Hence, the log-likelihood difference for $X_1$ and $X_2$ to $X_3$ is unequal zero.
Further, this graph is still often correctly discovered despite \OurApproach{} not having a convergence guarantee for it.
We have conducted experiments on this graph with $\epsilon=\{0.1,0.2,0.3,0.4,0.45\}$ using a sparsity regularizer of $\Lsparse=1$e-$4$, and in all cases, \OurApproach{} converged to the correct, acyclic graph.
Note that values close to 0.5 for $\epsilon$ are most challenging, because the difference between the true conditional and marginal distribution goes against zero.

The second example we want to discuss aims at graphs that violate the condition in Theorem~\ref{theo:appendix_guarantees_theta}, more specifically \Eqref{eqn:appendix_proof_guarantees_orientation_assumption_forall}.
The graph we consider is a fully connected graph with three variables $X_1,X_2,X_3$ (see \Figref{fig:appendix_guarantees_fail_case_2}).
The scenario can be described as follows: if knowing $X_2$ informs the log-likelihood estimate of $X_3$ more about $X_1$ than about $X_2$ itself, an intervention on $X_2$ and a sampled graph with the edge $X_2\to X_3$ could lead to a worse likelihood estimate of $X_3$ than without the edge.
For this scenario to happen, $p(X_2|X_1)$ must be close to deterministic.
Additionally, $p(X_3|X_1,X_2)$ must be much less reliant on $X_2$ than on $X_1$, such as in the following probability density:
\begin{equation}
    p(X_3=1|X_1,X_2) = \begin{cases}
        \epsilon_1 & \text{if }X_1=0,X_2=0\\
        1-\epsilon_1 & \text{if }X_1=1,X_2=0\\
        \epsilon_2 & \text{if }X_1=0,X_2=1\\
        1-\epsilon_2 & \text{if }X_1=1,X_2=1\\
    \end{cases}
\end{equation}

The two variables $\epsilon_1,\epsilon_2$ represent small constants close to zero.
In this case, the graph can violate the condition in \Eqref{eqn:appendix_proof_guarantees_orientation_assumption_forall} since intervening on $X_2$ breaks the dependency between $X_1$ and $X_2$.
The conditional distribution $p(X_3|X_2)$ learned from observational data relies on the dependency between $X_1$ and $X_2$ which can make it to a worse estimate than $p(X_3)$.
Note that if the edge $X_1\to X_3$ is learned by \OurApproach{} though, this will not constitute a problem anymore since with conditioning on $X_1$, i.e. $p(X_3|X_2,X_1)$, the edge $X_2\to X_3$ will gain a gradient towards the correct graph.
Thus, when $\bm{\gamma}$ and $\bm{\theta}$ are not initialized with the worst-case values, the graph with both $X_1$ and $X_2$ as parents of $X_3$ can be sampled and provides gradients in the correct direction.
Further, we did not observe any of these situations in the synthetic and real-world graphs we experimented on.

\subsubsection{Conditions for the global optimum}
\label{sec:appendix_guarantees_global_optimum}

So far, the discussion focused on proving that the optimization space does not contain any local optima with respect to the graph parameters $\bm{\gamma}$ and $\bm{\theta}$ besides the global optimum. If these conditions are violated, \OurApproach{} might still converge to the correct solution, since we are not guaranteed to find and get stuck in one of these local optima. Thus, in this section, we provide conditions under which the ground truth graph is the global optimum of the objective in \Eqref{eqn:methodology_original_objective}. Graphs that fulfill these conditions are very likely to be correctly identified by \OurApproach{}, but with a suboptimal choice of hyperparameters, initial starting conditions etc., we could return an incorrect graph. 

The conditions and proof follow a similar structure to those as before for the local optima. We first discuss when we can guarantee that the global optima has the same orientation of edges, and then when we also find the correct parent set of the remaining variables.

\begin{theorem}
    \label{theo:appendix_guarantees_global_optimum_orientation_parent}
    For every pair of variables $X_i,X_j$ where $X_i$ is a parent of $X_j$, the graph $\hat{G}$ that optimizes objective in \Eqref{eqn:methodology_original_objective} models the orientation $X_i\to X_j$, if there exists an edge between $X_i$ and $X_j$ in $\hat{G}$, under the following conditions:
    \begin{itemize}
        \item $X_i$ and $X_j$ are not independent under observational data.
        \item Under interventions on $X_i$, $X_i$ and $X_j$ are not independent given the true parent set of $X_j$.
    \end{itemize}
\end{theorem}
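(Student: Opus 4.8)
The plan is to show that any optimizer $\hat G$ of \Eqref{eqn:methodology_original_objective} that contains an edge between $X_i$ and $X_j$ must orient it as $X_i\to X_j$, by comparing $\hat G$ against the graph obtained by flipping only that one edge. Write $G_{\to}$ and $G_{\leftarrow}$ for the two graphs agreeing everywhere except on the orientation of this edge. Since the objective carries no acyclicity constraint, both are feasible, and since they differ in a single edge the $\ell_1$ term is identical (the edge count is unchanged) and the only summands of $\sum_k\mathcal{L}_C(X_k)$ that change are those of $X_i$ and $X_j$ (flipping the $X_i$--$X_j$ edge leaves every other node's parent set untouched). Letting $S_j$ and $S_i$ denote the remaining fixed parents of $X_j$ and $X_i$ in $\hat G$, the comparison reduces to
\[
\tilde{\mathcal{L}}(G_{\to}) - \tilde{\mathcal{L}}(G_{\leftarrow}) = \Delta_i - \Delta_j ,
\]
where $\Delta_j = \E_{\hat I\sim p_I(I)}\E_{\tilde p_{\hat I}(\bm X)}\big[\log p(X_j\mid S_j,X_i) - \log p(X_j\mid S_j)\big]$ is the expected gain in the observationally trained log-likelihood of $X_j$ from adding $X_i$ as a parent, and $\Delta_i$ is the analogous gain for $X_i$ from adding $X_j$. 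It therefore suffices to prove the strict inequality $\Delta_j > \Delta_i$.

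Second, I would split both gains over the intervention distribution $p_I(I)$ and invoke the invariance of causal mechanisms, mirroring the proofs of Theorems~\ref{theo:appendix_guarantees_theta} and \ref{theo:appendix_guarantees_theta_ancestor}. The intervention $\text{do}(X_i)$ leaves the mechanism of $X_j$ intact while letting $X_i$ vary according to the intervention distribution; by the second hypothesis $X_i$ and $X_j$ remain dependent given the true parents of $X_j$, so this intervention contributes \emph{strictly positively} to $\Delta_j$. Symmetrically, the intervention $\text{do}(X_j)$ renders the ancestor $X_i$ independent of $X_j$, so the anti-causal conditional $p(X_i\mid S_i,X_j)$ — which, by the first hypothesis, genuinely exploits the observational correlation between $X_i$ and $X_j$ — now mispredicts on interventional data, and by the cross-entropy inequality used in the proof of Theorem~\ref{theo:appendix_guarantees_theta} this intervention contributes \emph{strictly negatively} to $\Delta_i$. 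These two decisive interventions open a strict surplus in favour of the causal orientation.

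The hard part is verifying that the remaining intervention terms cannot close this surplus. Under $\text{do}(X_i)$ the descendant $X_j$ stays correlated with $X_i$, so conditioning on it may still raise $\Delta_i$; and under interventions $\text{do}(X_k)$ with $k\neq i,j$ both mechanisms persist and $X_i,X_j$ remain dependent, so the anti-causal edge can again contribute positively to $\Delta_i$ while the causal edge feeds $\Delta_j$. To control these I would once more use mechanism invariance: the forward conditional $p(X_j\mid S_j,X_i)$ transports across every intervention not targeting $X_j$, so each of its per-intervention gains is a genuine (non-negative) conditional-information term, whereas the reverse conditional is anti-causal and, via the same cross-entropy decomposition, splits into a conditional-information term minus a strictly positive transport (KL) penalty. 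Bounding this penalty — while also absorbing the fact that the co-parent sets $S_i,S_j$ inherited from $\hat G$ need not be the true parents — so that the guaranteed gap from $\text{do}(X_i)$ and $\text{do}(X_j)$ dominates the aggregate is, I expect, the crux of the argument. It is also exactly where the two stated hypotheses do the real work: without observational dependence (hypothesis~1) the reverse edge would be vacuous and its $\text{do}(X_j)$ penalty would vanish, and without interventional dependence (hypothesis~2) the forward edge would give no strict gain under $\text{do}(X_i)$; with both in force, $\Delta_j>\Delta_i$, hence $\tilde{\mathcal{L}}(G_{\to})<\tilde{\mathcal{L}}(G_{\leftarrow})$, contradicting optimality of any reverse-oriented $\hat G$.
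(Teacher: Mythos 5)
Your reduction to the flipped-edge comparison is sound (the $\ell_1$ term cancels, only $\mathcal{L}_C(X_i)$ and $\mathcal{L}_C(X_j)$ change), and your two decisive observations are exactly the two the paper uses: mechanism invariance makes the forward conditional strictly better under $\text{do}(X_i)$, and the cross-entropy argument of Theorem~\ref{theo:appendix_guarantees_theta} makes the anti-causal conditional strictly worse under $\text{do}(X_j)$. But there is a genuine gap, and it sits precisely where you flag it. You aggregate $\Delta_i$ and $\Delta_j$ over the \emph{full} intervention distribution $p_I(I)$ and defer the step of showing that the surplus from the two adjacent interventions dominates the remaining $N-2$ terms. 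That step is not deferred bookkeeping --- it cannot be carried out from the two stated hypotheses. The non-adjacent terms carry weight $(N-2)/N$, the hypotheses say nothing quantitative about them, and, as you yourself note, the anti-causal edge can contribute \emph{strictly positively} even under $\text{do}(X_i)$: a child remains correlated with its randomized parent, so $p(X_i\mid S_i,X_j)$ can predict the intervened $X_i$ better than $p(X_i\mid S_i)$ does. In the extreme, take $X_j$ a near-deterministic copy of $X_i$ with a single edge: both hypotheses hold, yet the forward and reverse orientations score identically under the full objective by symmetry (each orientation predicts the downstream-of-intervention variable perfectly and mispredicts the randomized one equally), so the reverse-oriented graph is also a global optimum of the literal objective and your targeted strict inequality $\Delta_j>\Delta_i$ fails.

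The paper's proof avoids this entirely by invoking a structural feature of ENCO that your proposal does not use: the orientation parameter $\theta_{ij}$ is, by construction (\Eqref{eqn:methodology_orientation_gradients}), scored only on interventions on the two adjacent variables $X_i$ and $X_j$, and under $\text{do}(X_i)$ only through the likelihood of $X_j$ (not of $X_i$), with the symmetric restriction under $\text{do}(X_j)$. This collapses the orientation comparison to exactly your two decisive terms --- strict gain for $X_j$ under $\text{do}(X_i)$ by hypothesis~2, strict loss for $X_i$ under $\text{do}(X_j)$ by hypothesis~1 --- and the conclusion follows immediately, with no residual terms to bound. (The paper additionally disposes of the observational-independence case by noting the sparsity regularizer then removes the edge, which is why the theorem is conditioned on an edge existing in $\hat{G}$.) So the fix is not a sharper bound but a reformulation: interpret ``optimizes the objective'' through ENCO's restricted orientation updates rather than the raw \Eqref{eqn:methodology_original_objective}; under your literal reading the statement is false as the copy example shows.
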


\begin{proof}
    If $X_i$ and $X_j$ are independent under observational data, the observational distributions would not identify any correlation among those two variables. Hence, transferring them for any graph to interventional data would have $p(X_i|...)=p(X_i|X_j,...)$, thus making the objective invariant to the orientation of the edge, and removing any edge between $X_i$ and $X_j$ for sparsity.
    
    If $X_i$ and $X_j$ are dependent, we can prove the statement by showing that modeling the orientation $X_j\to X_i$ will strictly lead to a worse estimate under the intervention on $X_j$ since the orientation parameters are optimized by comparing the interventions of the two adjacent variables. Under interventions on $X_i$, the causal mechanism $p(X_j|\text{pa}(X_j))$, with $\text{pa}(X_j)$ being the parent set of the ground truth graph including $X_i$, remains invariant under interventions on $X_i$, and is strictly better than $p(X_j|\text{pa}(X_j)\setminus X_i)$ for estimating $X_j$ due to the direct causal relation. Under interventions on $X_j$, the causal mechanism $p(X_i|X_j,...)$ leads to a strictly worse estimate as discussed in Theorem~\ref{theo:appendix_guarantees_theta}, since the dependency between $X_i$ and $X_j$ does not exist in the interventional regime. Hence, the inverse orientation of the edge $X_i\to X_j$, i.e. $X_j\to X_i$ cannot be part of the global optimum.
\end{proof}

\begin{theorem}
    \label{theo:appendix_guarantees_global_optimum_orientation_ancestor}
    For every pair of variables $X_i,X_j$ where $X_i$ is an ancestor but not direct parents of $X_j$, the graph $\hat{G}$ that optimizes objective in \Eqref{eqn:methodology_original_objective} does not include the edge $X_j\to X_i$ if the conditions in Theorem~\ref{theo:appendix_guarantees_global_optimum_orientation_parent} hold.
\end{theorem}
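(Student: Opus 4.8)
The plan is to reduce the claim to a statement about the orientation parameter alone, and then reuse the two-intervention comparison from Theorem~\ref{theo:appendix_guarantees_global_optimum_orientation_parent}. The backward edge $X_j \to X_i$ can be read off from $\hat{G}$ only when $\sigma(\gamma_{ji})\cdot\sigma(\theta_{ji}) > 0.5$, and in particular only when $\sigma(\theta_{ji}) > 0.5$. Because the orientation parameter is, by the construction of \OurApproach{} and exactly as used in the proof of Theorem~\ref{theo:appendix_guarantees_global_optimum_orientation_parent}, driven purely by the comparison of interventions on the two \emph{adjacent} variables $X_i$ and $X_j$, it suffices to show that both of these interventions favour the forward orientation, giving $\sigma(\theta_{ij}) > 0.5$ and thereby closing the gate on the backward edge. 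The key point I would stress is that this separation makes the argument immune to the complication that the descendant $X_j$ is informative about the ancestor $X_i$ under interventions on \emph{third} variables: such interventions may inflate the existence parameter $\gamma_{ji}$, but they never enter the orientation update, so they cannot produce a predicted backward edge.

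First I would treat the intervention on the ancestor $X_i$. Here the conditions inherited from Theorem~\ref{theo:appendix_guarantees_global_optimum_orientation_parent}, now read for the ancestor–descendant pair (namely $X_i,X_j$ dependent observationally and still dependent under $\text{do}(X_i)$), guarantee that adding the direct edge $X_i \to X_j$ strictly lowers the negative log-likelihood of $X_j$: the interventional signal on the ancestor propagates to $X_j$ along the causal path and is by assumption not screened off. This mirrors the ``forward edge is invariant and strictly better'' step of the parent proof, applied to the induced dependence rather than to a direct mechanism.

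Next I would treat the intervention on the descendant $X_j$. A perfect intervention on $X_j$ severs all of its incoming edges, and since every influence of $X_i$ on $X_j$ travelled through those edges, $X_j$ becomes independent of $X_i$ in the mutilated graph. Hence the observationally-trained conditional $p(X_i \mid X_j,\ldots)$ encodes a correlation that is absent in the interventional regime, so by the cross-entropy inequality used in Theorem~\ref{theo:appendix_guarantees_theta} the backward edge $X_j \to X_i$ strictly \emph{raises} the negative log-likelihood of $X_i$. Combining the two cases, the forward orientation is strictly better than the backward one under the comparison that determines $\bm{\theta}$, so $\sigma(\theta_{ij}) > 0.5$ and any $\hat{G}$ containing $X_j \to X_i$ is strictly dominated by the same graph with that edge flipped or removed; it therefore cannot be the global optimum.

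The hard part will be making the descendant-independence step robust to the other, possibly incorrect, edges already present in $\hat{G}$: conditioning $X_i$ on a collider that is a shared descendant of $X_i$ and $X_j$ could in principle reopen a dependence path under $\text{do}(X_j)$. I expect to close this exactly as in Theorem~\ref{theo:appendix_guarantees_theta_ancestor}, by an induction along the causal order, since any such reactivating collider would require an already-misoriented ancestor edge, which the theorem rules out for all pairs earlier in the order, and the recursion terminates on the finite acyclic graph. A secondary subtlety is the precise reading of the Theorem~\ref{theo:appendix_guarantees_global_optimum_orientation_parent} conditions for a non-parent: the phrase ``not independent given the true parent set of $X_j$'' must be replaced by plain interventional dependence of $X_i$ and $X_j$, because the true parents of $X_j$ would $d$-separate a non-adjacent ancestor and make the literal condition vacuous.
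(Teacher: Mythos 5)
Your overall route is the same as the paper's: the forward conditional $p(X_j|X_i,\ldots)$ transfers from observational data to $\text{do}(X_i)$ because the mechanism of $X_j$ is untouched, the backward conditional $p(X_i|X_j,\ldots)$ cannot transfer under $\text{do}(X_j)$ by the cross-entropy inequality of Theorem~\ref{theo:appendix_guarantees_theta}, and collider-reactivated dependence is excluded by the same recursion along the causal order as in Theorem~\ref{theo:appendix_guarantees_theta_ancestor}. However, one step of your argument is false as stated: you claim that adding $X_i\to X_j$ \emph{strictly} lowers the negative log-likelihood of $X_j$ under $\text{do}(X_i)$. For an ancestor that is not a direct parent this fails in precisely the regime that matters at a global optimum: any conditioning set containing the true parents $\text{pa}(X_j)$ (or any other $d$-separating set on the directed path) screens off $X_i$, so the forward edge leaves the likelihood unchanged and is in fact penalized by $\Lsparse$. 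Your proposed repair at the end — reading the condition of Theorem~\ref{theo:appendix_guarantees_global_optimum_orientation_parent} as plain interventional dependence of $X_i$ and $X_j$ — does not rescue strictness, since marginal dependence under $\text{do}(X_i)$ does not imply conditional dependence given the learned parent set. The paper never asserts a strict forward improvement for ancestor pairs; its proof only needs ``transfers,'' i.e.\ no worse.

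Relatedly, you drop the independence case that the paper's proof makes explicit and that your strictness claim forces you to need: when $X_i$ and $X_j$ are (conditionally) independent — observationally given some set, or under interventions on $X_i$ or $X_j$ — the likelihood comparison you use to conclude $\sigma(\theta_{ij})>0.5$ is uninformative because both differences vanish, and the conclusion is instead delivered by the sparsity term: an edge between independent variables strictly increases the objective by $\Lsparse>0$ and is therefore absent from $\hat{G}$ (as in Theorem~\ref{theo:appendix_guarantees_lambda}). Since the statement only asserts the \emph{absence} of $X_j\to X_i$, not the presence of $X_i\to X_j$, your argument is repaired by (i) weakening the $\text{do}(X_i)$ step to ``improves or leaves unchanged'' and (ii) adding the case split the paper uses: if $X_i\not\independent X_j$ under $\text{do}(X_j)$, the backward edge strictly worsens the estimate; if $X_i\independent X_j$, the edge is removed by the regularizer. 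With those two fixes your proof coincides with the paper's.
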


\begin{proof}
    To show this statement, we need to consider different independence relations between $X_i$ and $X_j$. First, if $X_i$ and $X_j$ are independent in the observational dataset given any conditional set, the edge will be removed since any edge between two independent variables is removed for any $\Lsparse>0$. The same holds if $X_i$ and $X_j$ are independent for interventions on $X_i$ and $X_j$.
    
    If they are dependent, we can follow a similar argument as in Theorem~\ref{theo:appendix_guarantees_theta_ancestor}. The causal mechanism $p(X_j|X_i,...)$ transfers from observational to interventional data on $X_i$ since on interventions on $X_i$, the causal mechanism of $X_j$ is not changed. Further, when intervening on $X_j$, $X_i$ and $X_j$ become independent such that any mechanism $p(X_i|X_j,...)$ cannot transfer except if $X_i$ and $X_j$ are independent under interventions on $X_j$. In this case, the edge will be again removed by the sparsity regularizer. This shows that for any setting, the orientation of the edge $X_j\to X_i$ cannot lead to a better estimate than $X_i\to X_j$, and in case of independence, the edge $X_j\to X_i$ will be removed as well. 
\end{proof}

\begin{theorem}
    \label{theo:appendix_guarantees_global_optimum_gamma}
    The graph $\hat{G}$ that optimizes objective in \Eqref{eqn:methodology_original_objective} models the same parent sets for each variable under the following conditions:
    \begin{itemize}
        \item The conditions in Theorem~\ref{theo:appendix_guarantees_global_optimum_orientation_parent} hold.
        \item For any variable $X_i$ with its true parent set $\text{pa}(X_i)$, there does not exist a smaller parent set $\hat{\text{pa}}\subset \bm{X}\setminus X_i,\text{descendants}(X_i)$ which approximates the log-likelihood of $X_i$ up to $\lambda_\text{sparse}\cdot \left(|\text{pa}(X_i)|-|\hat{\text{pa}}|\right)$ on average.
        \item The regularization parameter $\lambda_\text{sparse}$ is greater than zero.
    \end{itemize}
    
\end{theorem}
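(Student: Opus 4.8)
The plan is to reduce the global-optimisation problem to an independent, per-variable parent-set selection problem, and then to show that the true parent set $\text{pa}(X_i)$ strictly minimises each variable's contribution to the objective. First I would invoke Theorems~\ref{theo:appendix_guarantees_global_optimum_orientation_parent} and~\ref{theo:appendix_guarantees_global_optimum_orientation_ancestor}: since their premises coincide with the first condition of the present statement, the optimal graph $\hat{G}$ orients every edge from ancestor to descendant and contains no descendant-to-ancestor edge. Consequently the parent set that $\hat{G}$ assigns to each $X_i$ is necessarily a subset of the non-descendants $\text{ND}(X_i)$. I would then note that, because $p_{\bm{\gamma},\bm{\theta}}(C)$ factorises into independent edge probabilities and $\tilde{\mathcal{L}}$ is multilinear in each $\sigma(\gamma_{ij})\sigma(\theta_{ij})$, the minimum over each coordinate is attained at a vertex of the unit cube; hence it suffices to compare deterministic graphs, whose score splits as $\sum_i J_i(\text{pa}_{\hat{G}}(X_i))$ with $J_i(S)=\E_{\hat{I}\sim p_{I_{-i}}(I)}\E_{\tilde{p}_{\hat{I}}(\bm{X})}[-\log p(X_i\mid S)] + \Lsparse|S|$ depending only on the incoming edges of $X_i$. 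Minimising $\tilde{\mathcal{L}}$ thus amounts to minimising each $J_i$ separately over $S\subseteq\text{ND}(X_i)$, and the claim reduces to $\text{pa}(X_i)=\argmin_{S\subseteq\text{ND}(X_i)}J_i(S)$.

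The core step is to bound the likelihood term. Writing the interventional expected negative log-likelihood as a conditional entropy plus a divergence,
\begin{equation*}
\E_{\tilde{p}}[-\log p(X_i\mid S)] = \E_{\tilde{p}(S)}\big[H_{\tilde{p}}(X_i\mid S)\big] + \E_{\tilde{p}(S)}\big[\KL(\tilde{p}(X_i\mid S)\,\|\,p(X_i\mid S))\big],
\end{equation*}
I would use that under any intervention not targeting $X_i$ the distribution is Markov with respect to the mutilated graph, so $X_i$ stays conditionally independent of $\text{ND}(X_i)\setminus\text{pa}(X_i)$ given $\text{pa}(X_i)$. This yields $H_{\tilde{p}}(X_i\mid S)\ge H_{\tilde{p}}(X_i\mid\text{pa}(X_i))$ for every admissible $S$, while for $S=\text{pa}(X_i)$ the invariance of the causal mechanism makes the divergence term vanish. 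Hence $\E_{\tilde{p}}[-\log p(X_i\mid S)]\ge\E_{\tilde{p}}[-\log p(X_i\mid\text{pa}(X_i))]$ for all $S\subseteq\text{ND}(X_i)$, so the true parents always minimise the likelihood term.

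It then remains to defeat the sparsity trade-off. For $S$ with $|S|\ge|\text{pa}(X_i)|$ the penalty $\Lsparse(|S|-|\text{pa}(X_i)|)\ge0$ already dominates: any proper superset of $\text{pa}(X_i)$ leaves the likelihood unchanged (by the Markov argument above) while strictly increasing the penalty, and any same-size non-superset omits an informative parent. The delicate case is $|S|<|\text{pa}(X_i)|$, where a smaller set saves $\Lsparse(|\text{pa}(X_i)|-|S|)>0$; this is exactly what the second condition forbids, since it asserts that the likelihood loss of any such cheaper set exceeds its sparsity saving, i.e. $J_i(S)>J_i(\text{pa}(X_i))$. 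Combining the cases with $\Lsparse>0$ gives $J_i(\text{pa}(X_i))<J_i(S)$ for every $S\ne\text{pa}(X_i)$, so $\hat{G}$ must recover $\text{pa}(X_i)$ at each variable. I expect the main obstacle to be the likelihood-bounding step: justifying that evaluating an \emph{observational} conditional on \emph{interventional} data still ranks the true parent set first requires the entropy-plus-divergence decomposition together with the mutilated-graph Markov property, and securing strictness (so that wrong sets of equal size cannot tie the true parents) is where an implicit faithfulness-type assumption, which the second condition encodes quantitatively through its sparsity threshold, is genuinely needed.
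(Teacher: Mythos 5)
Your proposal is correct and follows essentially the same route as the paper's proof: restrict each variable's candidate parents to non-descendants via Theorems~\ref{theo:appendix_guarantees_global_optimum_orientation_parent} and~\ref{theo:appendix_guarantees_global_optimum_orientation_ancestor}, observe that the Markov property and mechanism invariance make $p(X_i\mid\text{pa}(X_i))$ (weakly) optimal for the likelihood term, and invoke the second condition to rule out smaller parent sets whose sparsity saving $\Lsparse\cdot\left(|\text{pa}(X_i)|-|\hat{\text{pa}}|\right)$ could otherwise compensate. Your multilinearity/vertex reduction to deterministic graphs and the cross-entropy-equals-entropy-plus-$\KL$ decomposition simply formalize steps the paper asserts without proof, and your convention of scoring $X_i$ under $p_{I_{-i}}(I)$ matches the paper's intent (the training algorithm masks gradients of $\gamma_{\cdot i}$ under interventions on $X_i$), so these are elaborations rather than a different argument.
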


\begin{proof}
    If the orientations for all edges are according to the ground truth graph in the global optimum following Theorem~\ref{theo:appendix_guarantees_global_optimum_orientation_parent} and \ref{theo:appendix_guarantees_global_optimum_orientation_ancestor}, the parent set for a variable $X_i$ is limited to those variables which are not descendants of $X_i$. From the ground truth graph, we know that, conditioned on the true parent set $\text{pa}(X_i)$, $X_i$ is independent of all other non-descendant variables. Thus, the log-likelihood estimate of $X_i$, i.e. the left part of the objective in \Eqref{eqn:methodology_original_objective}, is optimized by $p(X_i|\text{pa}(X_i))$. To show that this is also the global optimum when combining with the regularizer, we need to consider those parent sets of $X_i$ which obtain a lower penalty, i.e. smaller parent sets. The difference between two parent sets $\text{pa}(X_i)$ and $\hat{\text{pa}}$ in terms of the regularizer corresponds to $\lambda_\text{sparse}\cdot \left(|\text{pa}(X_i)|-|\hat{\text{pa}}|\right)$. Thus, if there exists no parent set for which this difference is greater than the penalty for the worse log-likelihood estimate, the true parent set $\text{pa}(X_i)$ constitutes the global optimum.
\end{proof}

\subsection{Convergence conditions for latent confounder detection}
\label{sec:appendix_guarantees_latent_confounder}

\begin{figure}[t!]
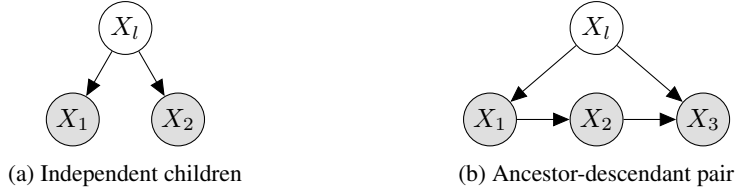

    \centering
    \begin{subfigure}[t]{0.4\textwidth}
    	\centering
    	\tikz{ %
    		\node[latent] (Xl) {$X_l$} ; %
    		\node[obs, below=of Xl, xshift=-.7cm, yshift=.5cm] (X1) {$X_1$} ; %
    		\node[obs, below=of Xl, xshift=.7cm, yshift=.5cm] (X2) {$X_2$} ; %
    		
    	    \edge{Xl}{X1}
    	    \edge{Xl}{X2}
    	}
    	\caption{Independent children}
    	\label{fig:appendix_latent_confounders_independent}
    \end{subfigure}
    \hspace{5mm}
    \begin{subfigure}[t]{0.4\textwidth}
    	\centering
    	\tikz{ %
    		\node[latent] (Xl) {$X_l$} ; %
    		\node[obs, below=of Xl, yshift=.5cm] (X2) {$X_2$} ; %
    		\node[obs, left=of X2, xshift=.3cm] (X1) {$X_1$} ; %
    		\node[obs, right=of X2, xshift=-.3cm] (X3) {$X_3$} ; %
    		
    	    \edge{Xl}{X1}
    	    \edge{Xl}{X3}
    	    \edge{X1}{X2}
    	    \edge{X2}{X3}
    	}
    	\caption{Ancestor-descendant pair}
    	\label{fig:appendix_latent_confounders_ancestor}
    \end{subfigure}
    \caption{Visualization of the different graph structures we need to consider in the guarantee discussion of latent confounder detection. Latent variables $X_l$ are shown in white, all other variables are observed. (a) The two children of $X_l$, $X_1$ and $X_2$, are independent of each other. (b) $X_1$ is an ancestor of $X_3$, and the two variables have a shared latent confounder $X_l$.}
    \label{fig:appendix_latent_confounders_cases}
\end{figure}

In \Secref{sec:method_latent_variables}, we have discussed that \OurApproach{} can be extended to graph with latent confounders.
For this, we have to record the gradients of $\gamma_{ij}$ for the interventional data on $X_i$ and all other interventional data separately.
We define $\gamma_{ij}=\gamma^{(I)}_{ij}+\gamma^{(O)}_{ij}$ where $\gamma^{(I)}_{ij}$ is only updated with gradients from \Eqref{eqn:methodology_edge_gradients} under interventions on $X_i$, and $\gamma^{(O)}_{ij}$ on all others.
The score to detect latent confounders is:
\begin{equation}
    \label{eqn:appendix_latent_confounder_score} \text{lc}(X_i,X_j)=\sigma\left(\gamma^{(O)}_{ij}\right)\cdot\sigma\left(\gamma^{(O)}_{ji}\right)\cdot\left(1-\sigma\left(\gamma^{(I)}_{ij}\right)\right)\cdot\left(1-\sigma\left(\gamma^{(I)}_{ji}\right)\right)
\end{equation}
In this section, we show under which conditions the score $\text{lc}(X_i,X_j)$ converges to one if $X_i$ and $X_j$ share a latent confounder. 
We restrict our discussion to latent confounders between two variables that do not have any direct edges with each other, and assume that the confounder is not a child of any other observed variable.
We assume that the causal graph based on the observed variable fulfills all conditions of Theorem~\ref{theo:appendix_guarantees_theta} to \ref{theo:appendix_guarantees_lambda} in \Secref{sec:appendix_convergence_conditions}, meaning that without the latent confounders, the graph could have been recovered without errors.
Under those conditions, we can also show that the graph among observed variables with latent confounders is also correctly recovered.
This is since the latent confounders only affect Theorem~\ref{theo:appendix_guarantees_lambda}: if $X_i$ and $X_j$ share a latent confounder, they are not conditionally independent given their observed parents.
Thus, we can rely on the fact that all edges in the true causal graph will be found according to Theorem~\ref{theo:appendix_guarantees_theta} to \ref{theo:appendix_guarantees_lambda}, and the edges with latent confounders do not fulfill Theorem~\ref{theo:appendix_guarantees_lambda}.

For all pairs of variables that do not share a latent confounder, $\text{lc}(X_i,X_j)$ converges to zero.
The edges that are removed in Theorem~\ref{theo:appendix_guarantees_lambda} converge to $\sigma(\gamma^{(O)}_{ij})=0$ which sets $\text{lc}(X_i,X_j)$ to zero.
For edges that have been recovered, we state in \Eqref{eqn:appendix_proof_orientation_assumption_exists} that the gradient for interventional data must be negative for interventions on the parent.
Hence, $\sigma(\gamma^{(I)}_{ji})$ converges to one which brings $\text{lc}(X_i,X_j)$ to zero again.

For variables that share a latent confounder, we distinguish between two cases that are visualized in \Figref{fig:appendix_latent_confounders_cases}.
In the first case, we assume that $X_i$ and $X_j$ are independent in the true causal graph excluding the latent confounder. 
This means that an intervention on $X_i$ does not cause any change in $X_j$, and vice versa.
The second case describes the situation where $X_i$ is an ancestor of $X_j$.
The case of $X_i$ being a parent of $X_j$ has been excluded in earlier assumptions as in those cases, we cannot separate the causal effect of $X_i$ on $X_j$ based on its causal relation and the latent confounder.

In case that the two children of the latent confounder are not an ancestor-descendant pair, we can provide a guarantee under the following conditions.
\begin{theorem}
    \label{theo:appendix_guarantees_latent_confounder}
    Consider a pair of variables $X_i,X_j$ that share a latent confounder $X_l$.
    Assume that $X_i$ and $X_j$ are conditionally independent given the latent confounder and their observed parents.
    Further, all other edges in the causal graph have been recovered under the conditions of Theorem~\ref{theo:appendix_guarantees_theta} to \ref{theo:appendix_guarantees_lambda}.
    The confounder score $\text{lc}(X_i,X_j)$ converges to one if the following two conditions hold:
    \begin{align}
        \label{eqn:appendix_guarantees_latent_confounder_edge_assumption}
        \E_{\hat{I}\sim p_{I_{\text{-}X_i}}(I)}\E_{\tilde{p}_{\hat{I}}(\bm{X})}\big[\log p(X_j|\text{pa}(X_j),X_i) - \log p(X_j|\text{pa}(X_j))\big] & > \Lsparse\\
        \E_{\hat{I}\sim p_{I_{\text{-}X_j}}(I)}\E_{\tilde{p}_{\hat{I}}(\bm{X})}\big[\log p(X_i|\text{pa}(X_i),X_j) - \log p(X_i|\text{pa}(X_i))\big] & > \Lsparse
    \end{align}
\end{theorem}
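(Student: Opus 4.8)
The plan is to reduce the statement to the convergence of the four factors in the confounder score: $\text{lc}(X_i,X_j)\to 1$ holds precisely when $\sigma(\gamma^{(O)}_{ij})\to 1$, $\sigma(\gamma^{(O)}_{ji})\to 1$, $\sigma(\gamma^{(I)}_{ij})\to 0$ and $\sigma(\gamma^{(I)}_{ji})\to 0$. I would first invoke the standing assumption that every other edge of the observed graph has already been recovered via Theorems~\ref{theo:appendix_guarantees_theta}--\ref{theo:appendix_guarantees_lambda}, so that the orientation parameters have converged and the adjacency matrices sampled for $X_j$ (resp.\ $X_i$) have stabilised to the true structure. Consequently the only parent set entering the gradient of $\gamma^{(\cdot)}_{ij}$ is the true observed set $\widehat{\text{pa}}(X_j)=\text{pa}(X_j)$, which is exactly why condition \Eqref{eqn:appendix_guarantees_latent_confounder_edge_assumption} is stated only for $\text{pa}(X_j)$ rather than as a minimum over parent sets as in Theorem~\ref{theo:appendix_guarantees_gamma}. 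By the $i\leftrightarrow j$ symmetry of the score and of the two hypotheses, it then suffices to establish $\sigma(\gamma^{(I)}_{ij})\to 0$ and $\sigma(\gamma^{(O)}_{ij})\to 1$.

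For $\sigma(\gamma^{(I)}_{ij})\to 0$: the gradient of $\gamma^{(I)}_{ij}$ is the edge-existence gradient \Eqref{eqn:appendix_edge_final_gradients} restricted to interventions on $X_i$. A perfect intervention on $X_i$ severs the incoming edge $X_l\to X_i$, so in the intervened graph $X_i$ is independent of $X_l$; since $X_i$ and $X_j$ have no direct edge and (by hypothesis) are not in an ancestor--descendant relation, this gives $X_i\perp X_j\mid \text{pa}(X_j)$. The conditional $\hat{p}(X_j\mid \text{pa}(X_j),X_i)$ learned on observational data exploits the confounder-induced correlation between $X_i$ and $X_j$ that is absent under $\text{do}(X_i)$, so evaluating it on interventional data cannot beat $\hat{p}(X_j\mid\text{pa}(X_j))$ --- this is exactly the mechanism-does-not-transfer argument used for $T(X_j,X_i)\ge 0$ in Theorem~\ref{theo:appendix_guarantees_theta}. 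Hence $\mathcal{L}_{X_i\to X_j}(X_j)-\mathcal{L}_{X_i\not\to X_j}(X_j)\ge 0$ in expectation, and adding $\Lsparse>0$ makes the bracket strictly positive; with the positive prefactor $\sigma'(\gamma_{ij})\sigma(\theta_{ij})$ the gradient has a consistent sign that drives $\gamma^{(I)}_{ij}\to-\infty$.

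For $\sigma(\gamma^{(O)}_{ij})\to 1$: the gradient of $\gamma^{(O)}_{ij}$ aggregates interventions on every variable other than $X_i$ (and other than the target $X_j$, which is never used to update its own incoming edges, matching the $p_{I_{-j}}$ convention of Theorem~\ref{theo:appendix_guarantees_gamma}). For all such interventions neither endpoint of the fork $X_i\leftarrow X_l\rightarrow X_j$ is hit, the confounding correlation persists, and the observationally learned $\hat{p}(X_j\mid\text{pa}(X_j),X_i)$ transfers. Condition \Eqref{eqn:appendix_guarantees_latent_confounder_edge_assumption} states precisely that the expected improvement exceeds $\Lsparse$, i.e.\ $\mathcal{L}_{X_i\to X_j}(X_j)-\mathcal{L}_{X_i\not\to X_j}(X_j)+\Lsparse<0$, and since only $\text{pa}(X_j)$ is sampled this single inequality suffices. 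The gradient therefore has a consistent negative sign, driving $\gamma^{(O)}_{ij}\to+\infty$. The symmetric argument with the second hypothesis yields $\sigma(\gamma^{(I)}_{ji})\to 0$ and $\sigma(\gamma^{(O)}_{ji})\to 1$, and multiplying the four limits gives $\text{lc}(X_i,X_j)\to 1$.

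The hard part will be the independence step in the second paragraph: rigorously certifying $X_i\perp X_j\mid\text{pa}(X_j)$ under $\text{do}(X_i)$ for \emph{every} parent set that still carries nonzero sampling probability, and arguing that the marginalised conditional $\hat{p}(X_j\mid\text{pa}(X_j))$ --- trained observationally --- itself transfers to the interventional regime (which needs the distribution of the latent $X_l$ given the observed parents to be invariant under $\text{do}(X_i)$, using that $X_l$ is a root confounder and not a child of any observed variable). I would dispatch the possible collider concern by appealing to the already-converged orientations: no descendant of $X_i$ can be a sampled parent of $X_j$, so conditioning opens no collider path between $X_i$ and $X_j$, after which the cross-entropy inequality of Theorem~\ref{theo:appendix_guarantees_theta} applies verbatim.
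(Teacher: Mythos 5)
Your proposal follows essentially the same route as the paper's proof: you decompose $\text{lc}(X_i,X_j)$ into its four factors, use the already-recovered rest of the graph to fix the sampled parent set to the true $\text{pa}(X_j)$ (which is exactly why the stated conditions replace the minimum over parent sets from Theorem~\ref{theo:appendix_guarantees_gamma} and drive $\sigma(\gamma^{(O)}_{ij}),\sigma(\gamma^{(O)}_{ji})\to 1$), and use the independence of $X_i$ and $X_j$ under perfect interventions on either node plus the sparsity term $\Lsparse>0$ to force $\sigma(\gamma^{(I)}_{ij}),\sigma(\gamma^{(I)}_{ji})\to 0$. The extra care you flag at the end (transfer of the marginal estimator, collider paths, excluding interventions on the target variable) is consistent with, and somewhat more explicit than, the paper's own argument, which handles these points implicitly via the convergence of the orientation parameters and the cross-entropy inequality from Theorem~\ref{theo:appendix_guarantees_theta}.
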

\begin{proof}
    We need to show that under the two conditions above, $\sigma(\gamma^{(O)}_{ij})$ and $\sigma(\gamma^{(O)}_{ji})$ are guaranteed to converge to one while $\sigma(\gamma^{(I)}_{ij})$ and $\sigma(\gamma^{(I)}_{ji})$ converge to zero.
    The distribution $p_{I_{\text{-}X_k}}(I)$ represents the distribution over interventions excluding the ones performed on the variable $X_k$.
    The two conditions resemble \Eqref{eqn:appendix_guarantees_edge_assumption} with the difference that the intervention on the potential parent variable is excluded, and the parent set is the true parent set of the correct causal graph.
    This is because all other edges have been correctly recovered, and the two conditions are concerning $\sigma(\gamma^{(O)}_{ij})$.
    If the condition in \Eqref{eqn:appendix_guarantees_latent_confounder_edge_assumption} holds, it corresponds to a negative gradient in $\gamma^{(O)}_{ij}$ following the argumentation in Theorem~\ref{theo:appendix_guarantees_gamma}.
    The same holds for $\gamma^{(O)}_{ji}$.
    Therefore, $\sigma(\gamma^{(O)}_{ij})$ and $\sigma(\gamma^{(O)}_{ji})$ are guaranteed to converge to one under the conditions given in Theorem~\ref{theo:appendix_guarantees_latent_confounder}.
    
    For the interventional parameters $\gamma^{(I)}_{ij}$ and $\gamma^{(I)}_{ji}$, we show that the gradient can only be positive, \ie decreasing $\gamma^{(I)}_{ij}$ and $\gamma^{(I)}_{ji}$.
    Under the intervention on $X_i$, $X_i$ and $X_j$ become independent since we assume perfect interventions. 
    In this case, the log-likelihood estimate of $X_j$ cannot be improved by conditioning on $X_i$.
    Hence, the difference $\mathcal{L}_{X_i\to X_j}(X_j)-\mathcal{L}_{X_i\not\to X_j}(X_j)$ is greater or equal to zero.
    When further considering the sparsity regularizer $\Lsparse$, the gradient of $\gamma_{ij}$ under interventions on $X_i$ can only be positive, \ie decreasing $\gamma^{(I)}_{ij}$.
    The same argument holds for $\gamma^{(I)}_{ji}$.
    Thus, we can conclude that $\sigma(\gamma^{(I)}_{ij})$ and $\sigma(\gamma^{(I)}_{ji})$ converge to zero.
\end{proof}
If the conditions of Theorem~\ref{theo:appendix_guarantees_latent_confounder} are not fulfilled, $\sigma(\gamma^{(O)}_{ij})$ and $\sigma(\gamma^{(O)}_{ji})$ might converge to zero. 
This results in the score $\text{lc}(X_i,X_j)$ being zero, but also $\sigma(\gamma_{ij})$ converging to zero.
Hence, we do not get any false positive edge predictions as we have seen in the experiments of \Secref{sec:experiments_latent_variables}.

For the second case where $X_i$ is an ancestor of $X_j$, we cannot give such a guarantee because of Theorem~\ref{theo:appendix_guarantees_theta_ancestor}.
Theorem~\ref{theo:appendix_guarantees_theta_ancestor} states that $\sigma(\theta_{ij})$ converges to one for ancestor-descendant pairs.
However, $\sigma(\theta_{ji})$ is a factor in the gradients of $\gamma_{ji}$.
This means that if $\sigma(\theta_{ji})$ converges to zero according to Theorem~\ref{theo:appendix_guarantees_theta_ancestor}, we cannot guarantee that $\gamma_{ji}$ converges to the desired value since its gradient becomes zero.
Nevertheless, $59.2\%$ of the latent confounders in our experiments of \Secref{sec:experiments_latent_variables} were on ancestor-descendant pairs.
\OurApproach{} detects a majority of those confounders, showing that \OurApproach{} still works on such confounders despite not having guarantees.
Further, we show in \Secref{sec:appendix_hyperparameters_latent_confounders} that the confounder scores $\text{lc}(X_i,X_j)$ indeed converge to one for detected confounders, and zero for all other edges.

\subsection{Convergence for partial intervention sets}
\label{sec:appendix_convergence_partial_interventions}

In \Secref{sec:experiments_fewer_interventions}, we have experimentally shown that \OurApproach{} works on partial intervention sets as well.
Here, we will discuss convergence guarantees in the situation when interventions are not provided on all variables.

We start with discussing the case where, for a graph with $N$ variables, we are given samples of interventions on $N-1$ variables.
In this case, we can rely on the previous convergence guarantees discussed in Appendix~\ref{sec:appendix_convergence_conditions} with minor modifications.
Specifically, for the variable $X_i$ on which we do not have interventions, the orientation parameters $\theta_{i\cdot}$ are only updated by interventions on other variables. Hence, for this variable $X_i$, the following conditions need to hold instead of Theorem~\ref{theo:appendix_guarantees_theta}:
\begin{itemize}
    \item \textit{For all possible sets of parents of $X_i$ excluding $X_j$, adding $X_j$ does not improve the log-likelihood estimate of $X_i$ under the intervention on $X_j$, or leaves it unchanged: }
    \begin{equation}
        \forall \widehat{\text{pa}}(X_i)\subseteq X_{-i,j}:  \E_{I_{X_j},\bm{X}}\left[\log p(X_i|\widehat{\text{pa}}(X_i),X_j) - \log p(X_i|\widehat{\text{pa}}(X_i))\right] \leq 0
    \end{equation}
    \textit{For at least one parent set $\widehat{\text{pa}}(X_i)$, which has a probability greater than zero to be sampled, this inequality is strictly smaller than zero.}
\end{itemize}
This condition ensures that $\theta_{ij}$ converges to the correct values, where $X_i$ is the parent or ancestor of $X_j$.
Thus, in conclusion, we can provide convergence guarantees if $N-1$ interventions are provided.

Next, we can consider the case of having $N-2$ interventions. With the conditions above, we can ensure that the all orientation parameters are learned, excluding $\theta_{ij}$ where $X_i$ and $X_j$ are the variables for which we have not obtained interventions.
In this case, we cannot give strict convergence guarantees for the edge $X_i\leftrightarrow X_j$, especially when $X_i$ and $X_j$ have a direct causal relationship.
If $X_j$ is the child of $X_i$, we might obtain the edge $X_j\to X_i$ which violates the assumptions in the second and third step of the proof.
Therefore, we cannot give guarantees of correctness for incoming/outgoing edges of $X_i$ and $X_j$, and might make incorrect predictions of edges between these two variables.

When taking the next step to having $M$ interventions provided, \OurApproach{} can create more incorrect predictions.
For the variables for which interventions are provided, we can use the same convergence guarantees (Theorem~\ref{theo:appendix_guarantees_theta}-\ref{theo:appendix_guarantees_lambda}) since all conditions are independent across variables. 
For variables without interventions, we cannot rely on those. 
While we have observed that learning the missing $\theta$'s from other interventions give reasonable results, we see a degradation of performance the further the distance is between a node and the closest intervened variable.
As an example, suppose we have a chain with 5 variables, i.e. $X_1 \to X_2 \to X_3 \to X_4 \to X_5$, and we are provided with an intervention on $X_1$ only.
This allows us to learn the orientation between $X_1$ and $X_2$. 
The orientation between $X_2$ and $X_3$ is often learned correctly as well because adding the edge $X_2\to X_3$ instead of $X_3\to X_2$ gives a greater decrease in overall log-likelihood, since part of the information from $X_3$ to predict $X_2$ is already included in $X_1$.
However, the further we go away from $X_1$, the less information is shared between the child and the intervened variable.
Moreover, the likelihood of a mistake occurring due to limited data further increases.
This is why the orientation of the edge $X_4\to X_5$ is not always learned correctly, which can also cause false positive edges.

Many scenarios of predicting false positive edges can, in theory, be solved by providing an undirected skeleton of the graph, for example, obtained from observational data. 
Still, one of the cornerstones of ENCO is that it does not assume faithfulness. 
Without faithfulness or any other assumption on the functional form of the causal mechanisms, the correct undirected graph cannot be recovered by any method. 
One of the future directions will be to include faithfulness in ENCO to solve the scenarios mentioned above, although this would imply that we might not be able to recover edges of deterministic variables anymore.

\subsection{Example for non-faithful graphs}
\label{sec:appendix_faithfulness_example}

Below we give an example of a distribution which is not faithful with respect to its graph structure, but can yet be found by \OurApproach{}.
Suppose we have a chain of three variables, $X_1\to X_2\to X_3$. For simplicity, we assume here that all the variables are binary, but the argument can similarly hold for any categorical data. The distribution $p(X_1)$ is an arbitrary function with $0<p(X_1=1)<1$, and the other two conditionals are deterministic functions: $p(X_2|X_1)=\delta[X_2 = X_1]$, $p(X_3|X_2)=\delta[X_3 = X_2]$. This joint distribution is not faithful to the graph, since $X_3$ is independent of $X_2$ given $X_1$, which is not implied by the graph. We will now focus our discussion on the edge $X_1\to X_3$ to show that, despite the independence, the proposed method ENCO can identify the true parent set of $X_3$. The first step of ENCO is to fit the neural networks to the observational distributions, which include $p(X_3)$, $p(X_3|X_1)$, $p(X_3|X_2)$, $p(X_3|X_1,X_2)$. 
Now, the update of the edge parameter $\gamma_{13}$ under interventions on $X_2$ can be summarized as follows, where we marginalize out over graph samples:
\begin{equation}
    \begin{split}
        \frac{\partial \mathcal{L}}{\partial \gamma_{13}}=\sigma'(\gamma_{13})\cdot\sigma(\theta_{13})\cdot\mathbb{E}_{\mathbf{X}}\Big[&p(X_2\to X_3)\cdot\underbrace{\left[-\log p(X_3|X_1,X_2)+\log p(X_3|X_2)\right]}_{\text{Gradients for graph samples with edge } X_2\to X_3}+\\
        & p(X_2\not\to X_3)\cdot\underbrace{\left[-\log p(X_3|X_1)+\log p(X_3)\right]}_{\text{Gradients for graph samples without edge } X_2\not\to X_3}\Big]
    \end{split}
\end{equation}
where the samples $\mathbf{X}$ are sampled from the graph under interventions on $X_2$.
Intuitively, the gradient points towards increasing the probability of the edge $X_1\to X_3$ if adding $X_1$ to the conditionals improves the log-likelihood estimate of $X_3$ in both graph samples, i.e. where we have the edge $X_2\to X_3$ or not.
Thus, to show that the gradient points towards decreasing the probability of the edge $X_1\to X_3$, we need to show that both of the above log-likelihood differences are greater than zero (note that positive gradients lead to a decrease since we minimize the objective).

In the first difference, $\mathbb{E}_{\mathbf{X}}\left[-\log p(X_3|X_1,X_2)+\log p(X_3|X_2)
\right]$, we know that $p(X_3|X_2)$ is the optimal estimator, since the ground truth data is generated via this conditional. Thus, independent of what function the network has learned for $p(X_3|X_1,X_2)$, the difference above can be only greater or equals to zero.
Note that for $p(X_3|X_1,X_2)$, there are value combinations that have never been observed, i.e. $X_1\neq X_2$, such that in practice we can't guarantee a specific distribution to be learned for such conditionals. Still, this does not constitute a problem for finding the graph as shown above.

The second difference, $\mathbb{E}_{\mathbf{X}}\left[-\log p(X_3|X_1)+\log p(X_3)\right]$, can be similarly reasoned. Since $X_1$ is independent of $X_3$ under interventions on $X_2$, $p(X_3|X_1)$ cannot lead to a better estimator than $p(X_3)$, even when trained on the interventional data. However, since $p(X_3|X_1)$ was trained on observational data, where there exist a strong correlation between $X_1$ and $X_3$, this estimator must be strictly worse than $p(X_3)$. Hence, this difference will be strictly positive.

To summarize, under interventions on $X_2$, the edge $X_1\to X_3$ will be trained towards decreasing its probability. Further, under interventions on $X_1$, the effect of $X_1$ on $X_3$ can be fully expressed by conditioning on $X_2$, making this gradient going to zero when the edge probability $X_2\to X_3$ goes towards one. For the edge $X_2\to X_3$ itself, the same reasoning as here can be followed such that independent of whether the edge $X_1\to X_3$ is included in the graph or not, conditioning $X_3$ on $X_2$ can only lead to an improvement in its estimator. Therefore, ENCO is able to find the correct graph despite it not being faithful.
\newpage
\section{Experimental details}
\label{sec:appendix_hyperparameters}

The following section gives an overview of the hyperparameters used across experiments.
Additionally, we discuss details of the graph generation and the learning process of different algorithms.

\subsection{Common graph structure experiments}
\label{sec:appendix_hyperparameters_simulated}

\subsubsection{Datasets}

\begin{figure}[t!]
    \centering
    \begin{tabular}{ccc}
        \texttt{bidiag} & \texttt{chain} & \texttt{collider} \\
        \includegraphics[width=0.3\textwidth]{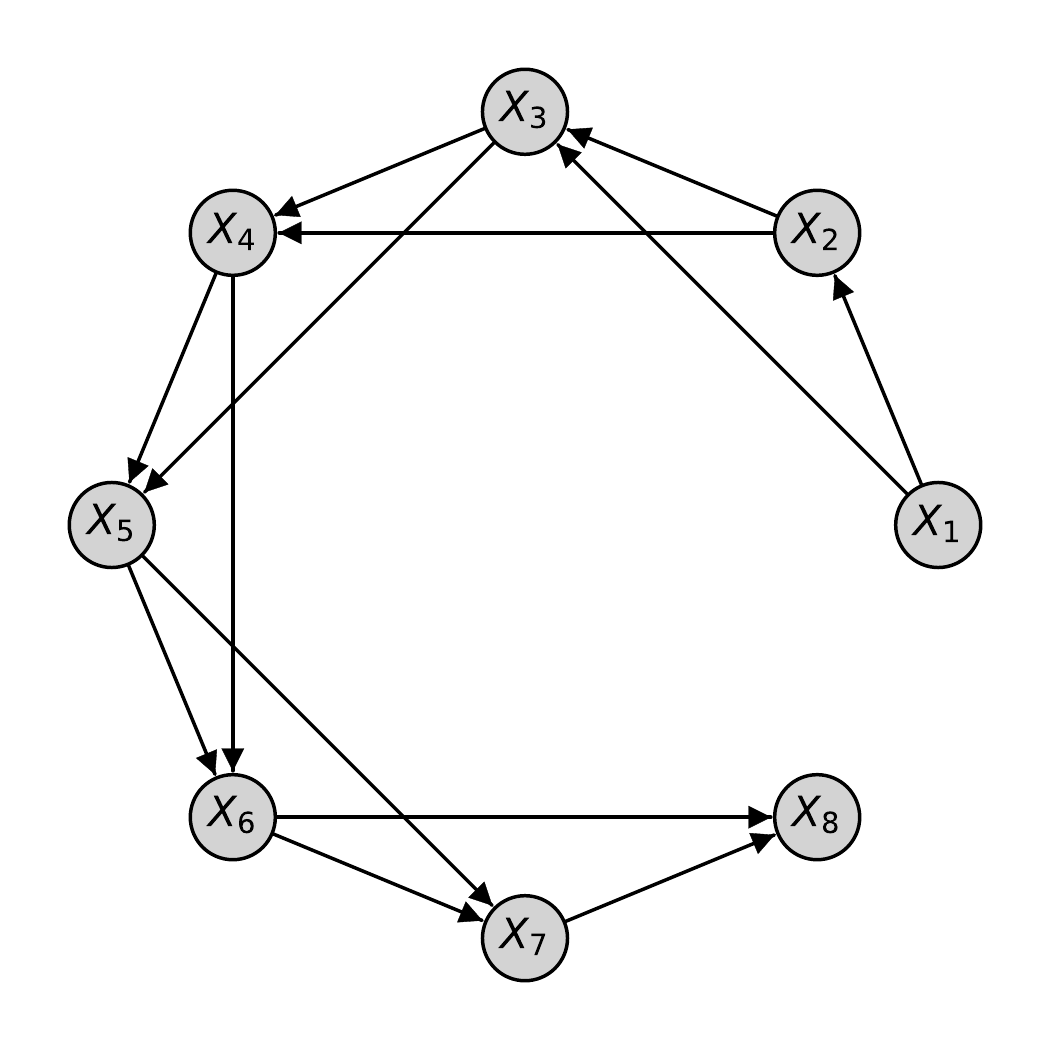} & \includegraphics[width=0.3\textwidth]{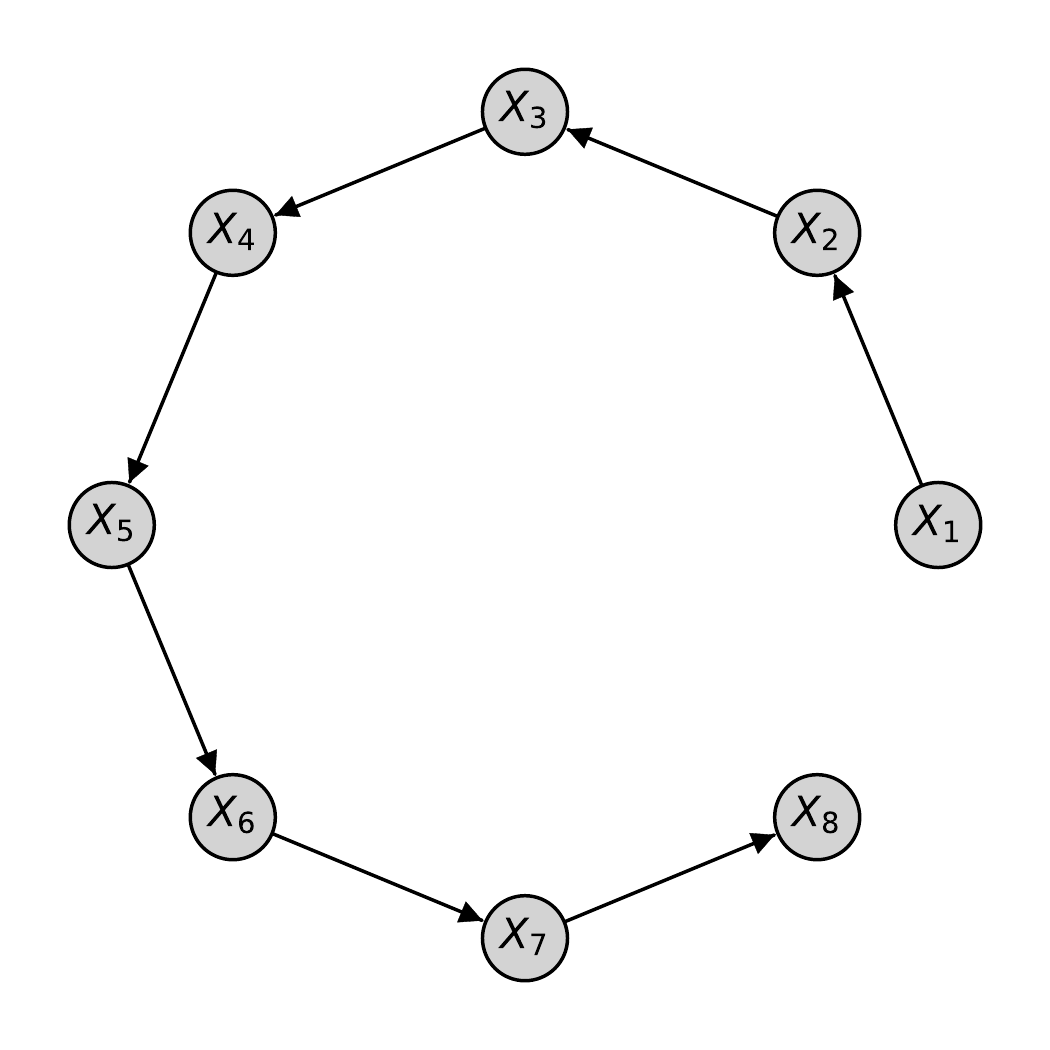} & 
        \includegraphics[width=0.3\textwidth]{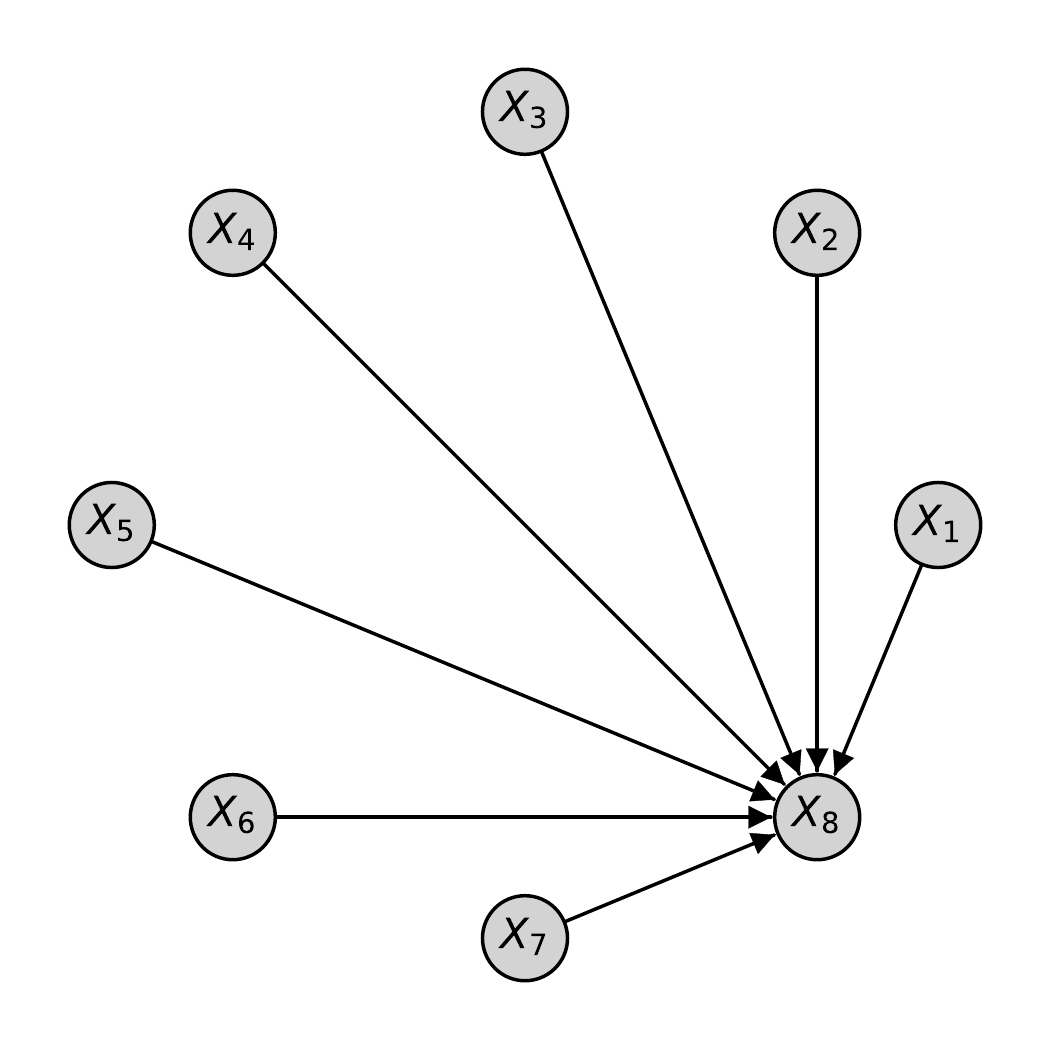}\\[5pt]
        \texttt{full} & \texttt{jungle} & \texttt{random} \\ 
        \includegraphics[width=0.3\textwidth]{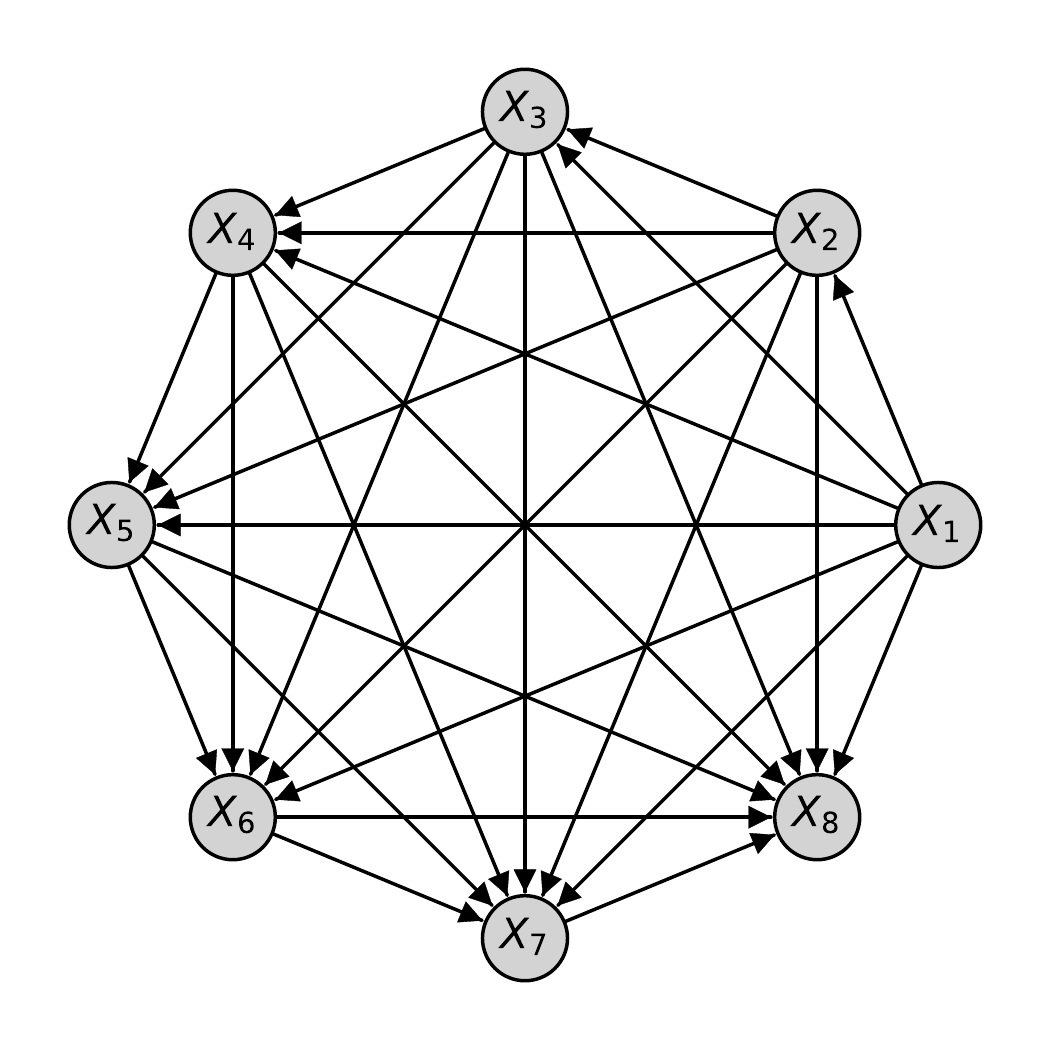} &
        \includegraphics[width=0.3\textwidth]{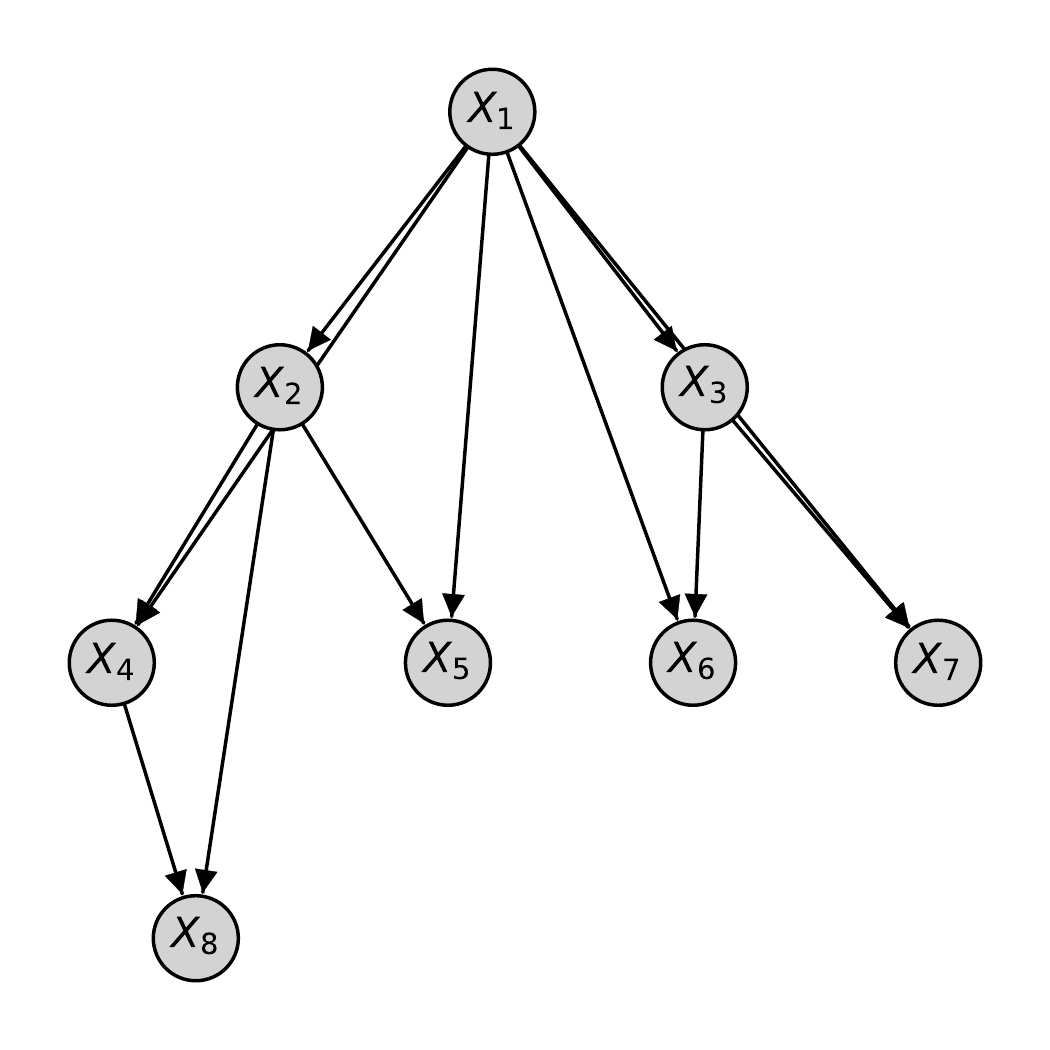} & 
        \includegraphics[width=0.3\textwidth]{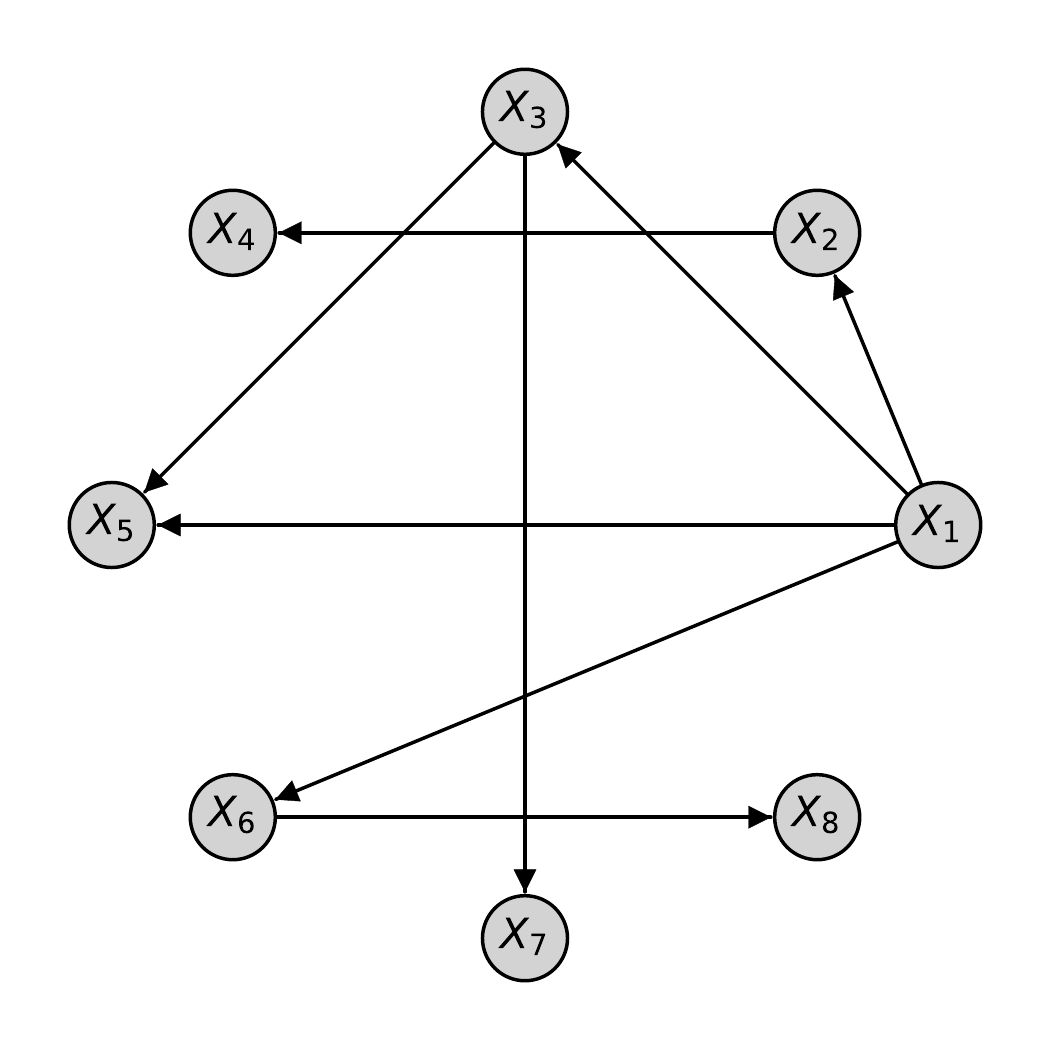}\\
    \end{tabular}
    \caption{Visualization of the common graph structures for graphs with 8 nodes. The graphs used in the experiments had 25 nodes. Note that the graph \texttt{random} is more densely connected for larger graphs, as the number of possible edges scales quadractically with the number of nodes.}
    \label{fig:appendix_graph_visualizations_common_structures}
\end{figure}

\paragraph{Graph generation} The six common graph structures we have used for the experiments in \Tabref{tab:experiments_simulated_results} are visualized in \Figref{fig:appendix_graph_visualizations_common_structures}. In the graph \texttt{bidiag}, a variable $X_i$ has $X_{i-2}$ and $X_{i-1}$ as parents, and consequently $X_{i+1}$ and $X_{i+2}$ as children. Hence, this graph represents a chain with 2-hop connections. The graph \texttt{chain} is a \texttt{bidiag} with a single hop, meaning that $X_i$ is the parent of $X_{i-1}$ but not $X_{i-2}$. In the graph \texttt{collider}, the variable $X_{N}$ has all other variables, $\bm{X}_{-i}$, as parents. In the graph \texttt{full}, the parents of a variable $X_i$ are all previous variables: $\text{pa}(X_i)=\{X_1,X_2,...,X_{i-1}\}$. Hence, it is the densest connected graph possible. The graph \texttt{jungle} represents a binary tree where a node is also connected to its parent's parent. Finally, the graph \texttt{random} follows a randomly sampled adjacency matrix. For every possible pair of variables $X_i,X_j$, we sample an edge with a likelihood of $0.3$. To determine the orientation of the edges, we assume the causal ordering of $X_i \succ X_{i+1}$. In other words, if we have an edge between $X_i$ and $X_j$, it is oriented $X_i\to X_j$ is $i<j$ else $X_j\to X_i$.

\paragraph{Conditional distributions} In the generated graphs, we use categorical variables that each have 10 categories. To model the ground-truth conditional distributions, we use randomly initialized neural networks. Specifically, we use MLPs of two layers where the categorical inputs are represented by embedding vectors. For a variable $X_i$ with $M$ parents, we stack the $M$ embedding vectors to form the input to the following MLPs. Each embedding has a dimensionality of 4, hence the input size to the first linear layer is $4M$. The hidden size of the layers is 48, and we use a LeakyReLU activation function in between the two linear layers. Finally, a softmax activation function is used on the output to obtain a distribution over the 10 categories. The MLP and hyperparameters have been chosen based on the design of networks used in \OurApproach{}, SDI \cite{Ke2019LearningInterventions} and DCDI \cite{Brouillard2020Differentiable}.
For the initialization of the networks, we follow \citet{Ke2019LearningInterventions} and use the orthogonal initialize with a gain of $2.5$. 
The biases are thereby initialized uniformly between $-0.5$ and $0.5$. This way, we obtain non-trivial, random distributions.
Experiments with different synthetic distributions are provided in Appendix~\ref{sec:appendix_nonneural_synthetic_data}.

\subsubsection{Methods and hyperparameters}
\label{sec:appendix_hyperparameters_baselines_and_hyper}

\paragraph{Baseline implementation} We used existing implementations to run the baselines GIES \cite{Hauser2012Characterization}, IGSP \cite{Wang2017Permutation}, GES \cite{Chickering2002Optimal} and DCDI \cite{Brouillard2020Differentiable}. 
For GIES, we used the implementation from the R package \texttt{pcalg}\footnote{\url{https://cran.r-project.org/web/packages/pcalg/index.html}}. 
To run categorical data, we used the \texttt{GaussL0penIntScore} score function. 
For IGSP, we used the implementation of the python package \texttt{causaldag}\footnote{\url{https://github.com/uhlerlab/causaldag}}. 
As IGSP uses conditional independence tests in its score function, we cast the categorical data into continuous space first and experiment with different kernel-based independence tests. 
Due to its long runtime for large dataset sizes, we limit the interventional and observational data set size to 25k. 
Larger dataset sizes did not show any significant improvements. 
For details on the observational GES experiments, see \Secref{sec:appendix_constraint_based_GES}.
Finally, we have used the original python code for DCDI published by the authors\footnote{\url{https://github.com/slachapelle/dcdi}}. 
We have added the same neural networks used by \OurApproach{} into the framework to perform structure learning on categorical data. 
Bugs in the original code were corrected to the best of our knowledge. 
Since SDI \cite{Ke2019LearningInterventions} has a similar learning structure as \OurApproach{}, we have implemented it in the same code base as \OurApproach{}. 
This allows us to compare the learning algorithms under exact same perquisites. 
Further, all methods with neural networks used the deep learning framework PyTorch \cite{Paszke2019PyTorch} which ensures a fair run time comparison across methods.

\begin{table}[t!]
    \centering
    \addtolength{\leftskip}{-2cm}
    \addtolength{\rightskip}{-2cm}
    \caption{Hyperparameter overview for the simulated graphs dataset experiments presented in \Tabref{tab:experiments_simulated_results}. }
    \label{tab:appendix_hyperparameters_simulated}
    \renewcommand{\arraystretch}{1.2}
    \begin{tabular}{lccc}
        \toprule
        \textbf{Hyperparameters} & SDI & \OurApproach{}\\
        \midrule
        Sparsity regularizer $\Lsparse$ & \{0.01, \underline{0.02}, 0.05, 0.1, 0.2\} &  \{0.002, \underline{0.004}, 0.01\} \\
        DAG regularizer & \{0.2, \underline{0.5}, 1.0, 2.0, 5.0\} & - \\
        Distribution model & \multicolumn{2}{c}{2 layers, hidden size 64, LeakyReLU($\alpha=0.1$)} \\
        Batch size & \multicolumn{2}{c}{128} \\
        Learning rate - model & \multicolumn{2}{c}{\{2e-3, \underline{5e-3}, 2e-2, 5e-2\}} \\
        Weight decay - model & \multicolumn{2}{c}{\{1e-5, \underline{1e-4}, 1e-5\}} \\
        Distribution fitting iterations $F$ & \multicolumn{2}{c}{1000} \\
        Graph fitting iterations $G$ & \multicolumn{2}{c}{100} \\
        Graph samples $K$ & \multicolumn{2}{c}{100} \\
        Epochs & 50 & 30 \\
        Learning rate - $\bm{\gamma}$ & \{\underline{5e-3}, 2e-2, 5e-2\} & \{5e-3, \underline{2e-2}, 5e-2\} \\
        Learning rate - $\bm{\theta}$ & - & \{5e-3, 2e-2, 5e-2, \underline{1e-1}\} \\
        \bottomrule
    \end{tabular}
\end{table}

\begin{figure}[t!]
    \centering
    \includegraphics[width=\textwidth]{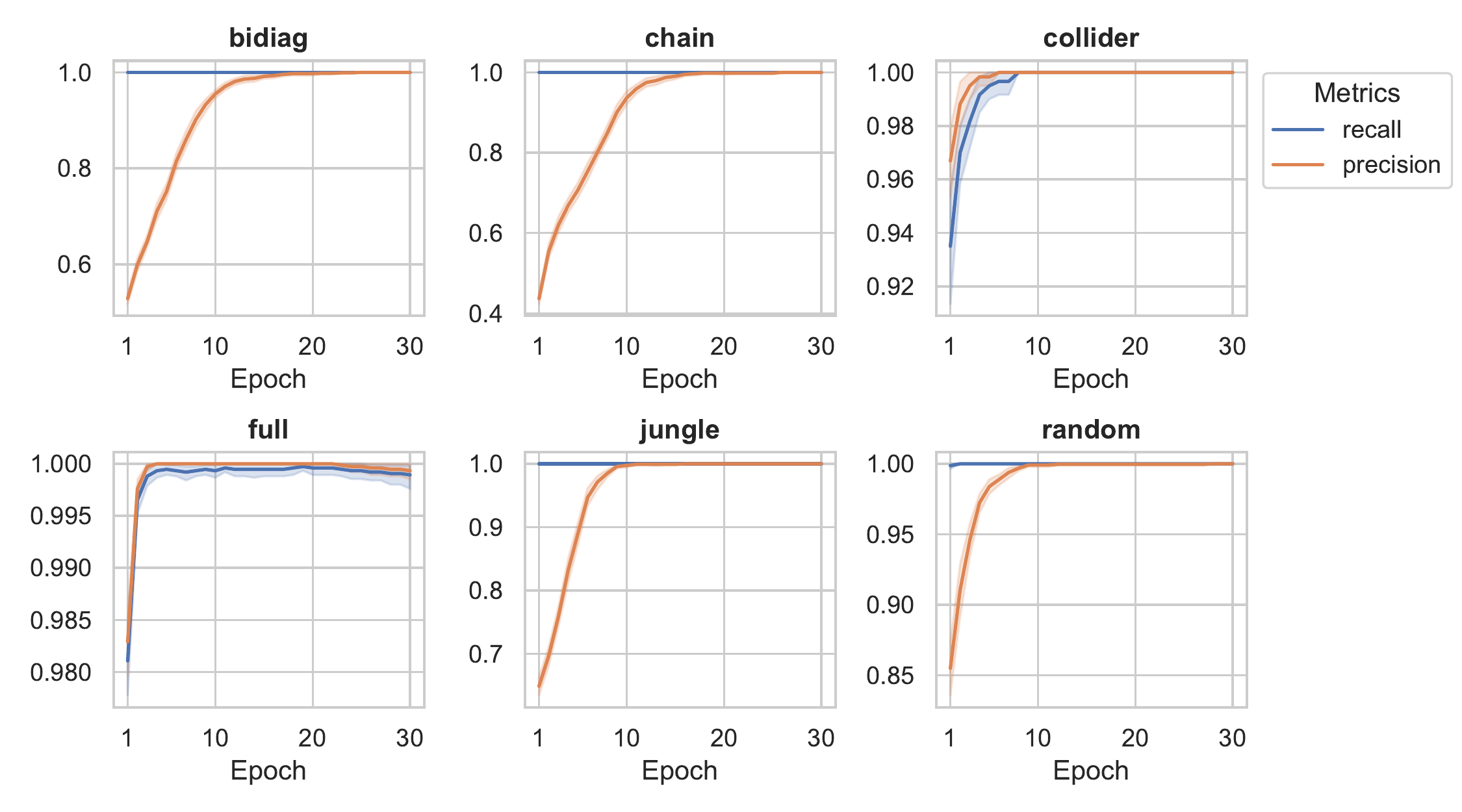}
    \caption{Learning curves of \OurApproach{} in terms of recall and precision on edge predictions for synthetic graph structures. The orientations for the ground-truth edges are not plotted as they have usually been correctly learned after the first epoch except for the graph \texttt{full}. Overall, we see that the edge recall starts very high for all graphs, and the precision catches up over the epochs. This is in line the steps in the convergence proof in \Secref{sec:appendix_guarantees}.}
    \label{fig:appendix_ENCO_learning_curve_simulated}
\end{figure}

\paragraph{Hyperparameters} To ensure a fair comparison, we performed a hyperparameter search for all methods. The hyperparameter search was performed on a hold-out set of graphs containing two of each graph structure. 
\begin{description}
    \item[GIES] We performed a hyperparameter search over the regularizer values $\lambda\in\{0.01, 0.02, 0.05, 0.1, 0.2, 0.5, 1, 2, 5, 10, 20, 50, 100, 200\}$. The value obtaining the best results in terms of structural hamming distance (SHD) was $\lambda=20$. The average run time of GIES was 2mins per graph.
    \item[IGSP] We experimented with two different conditional independence tests, \texttt{kci} and \texttt{hsic}, and different cutoff values $\alpha=\{1\text{e-}5,1\text{e-}4,1\text{e-}3,1\text{e-}2\}$. The best hyperparameter setting was \texttt{kci} with $\alpha=1\text{e-}3$. The average run time of IGSP was 13mins.
    \item[SDI] We focused the hyperparameter search for SDI on its two regularizers, $\Lsparse$ and $\lambda_{\text{DAG}}$, as well as its learning rate for $\gamma$. The other hyperparameters with respect to the neural networks were kept the same as \OurApproach{} for a fair comparison. We show all details of the hyperparameter search in \Tabref{tab:appendix_hyperparameters_simulated}. The best combination of regularizers found was $\Lsparse=0.02$ and $\lambda_{\text{DAG}}=0.5$. Lower values of $\Lsparse$ lead to more false positives, especially in sparse graphs, while a lower value of $\lambda_{\text{DAG}}$ caused many two-variable loops. Compared to the reported hyperparameter by \citet{Ke2019LearningInterventions} ($\lambda_{\text{DAG}}=0.5,\Lsparse=0.1$), we found a lower sparsity regularizer to work better. This is likely because of testing SDI on larger graphs. In contrast to \OurApproach{}, SDI needed a lower learning rate for $\gamma$ due to its higher variance gradient estimators. To compensate for it, we ran it for 50 instead of 30 epochs. In general, SDI achieved lower scores than in the original experiments by \citet{Ke2019LearningInterventions} which was because of the larger graph size and smaller dataset size. The average run time of SDI was 4mins per graph.
    \item[DCDI] The most crucial hyperparameter in DCDI was the initialization of the constraint factor $\mu_0$. We experimented with a range of $\mu_0\in\{1\text{e-}10,1\text{e-}9,1\text{e-}8,1\text{e-}7,1\text{e-}6,1\text{e-}5\}$ and found $\mu_0=1\text{e-}9$ to work best. This is close to the reported value of $1\text{e-}8$ by \citet{Brouillard2020Differentiable}. Higher values lead to empty graphs, while lower values slowed down the optimization. Additionally, we search over the regularizer hyperparameter $\lambda\in\{1\text{e-}3,1\text{e-}2,1\text{e-}1,1.0,10.0\}$ where we found $\lambda=0.1$, which is the same value used by \citet{Brouillard2020Differentiable}. We stop the search after the Lagrangian constraint is below $1\text{e-}7$, following \citet{Brouillard2020Differentiable}, or 50k iterations have been used which was sufficient to converge on all graphs. We have experimented with using weight decay to prevent overfitting, but did not find it to provide any gain in terms of structure learning performance. The average run time of DCDI was 16 minutes.
    \item[\OurApproach{}] We outline the hyperparameters of \OurApproach{} in \Tabref{tab:appendix_hyperparameters_simulated}. 
    As discussed in \Secref{sec:method_guarantees}, the most crucial hyperparameter in \OurApproach{} is the sparsity regularizer $\Lsparse$. 
    The larger it is, the faster \OurApproach{} converges, but at the same time might miss edges in the ground-truth graph. 
    Lower values allow the detection of more edges for the price of longer training times. 
    We have found that for the graph structures given, only the graph \texttt{full} was sensitive to the value of $\Lsparse$ where $\Lsparse=0.002$ and $\Lsparse=0.004$ performed almost equally well. 
    In comparison to SDI, \OurApproach{} can make use of larger learning rates due to lower variance gradient estimators. 
    Especially for $\bm{\theta}$, we have noticed that high learning rates are beneficial. 
    This is in line with our theoretical guarantees which require the orientation parameters to converge first. 
    We use the Adam optimizer for $\bm{\gamma}$ and $\bm{\theta}$ with the $\beta$-hyperparameters $(0.9,0.9)$ and $(0.9,0.999)$ respectively. 
    A lower $\beta_2$ hyperparameter for $\bm{\gamma}$ allows the second momentum to adapt faster to a change of gradient scale which happens for initial false positive predictions.
    
    The average run time of \OurApproach{} was 2mins per graph. 
    The algorithm could be sped up even more by reducing the number of graph samples $K$ and model fitting iterations. 
    However, for graphs of larger than 100 nodes, $K=100$ and longer model fitting times showed to be beneficial. 
    The learning curves in terms of recall and precision are shown in \Figref{fig:appendix_graph_visualizations_common_structures}.
\end{description}

\paragraph{Enforcing acyclicity} ENCO is guaranteed to converge to acyclic graphs in the data limit; arguably, an assumption that does not always hold.
In the presence of cycles, which can occur especially when low data is available, a simple heuristic is to keep the graph, which maximizes the orientation probabilities.
Specifically, we aim to find the order $O\in S_N$, where $S_N$ represents the set of all permutations from $1$ to the number of variables $N$, for which we maximize the following objective:
\begin{equation}
    \hat{O}=\arg\max_{O} \left[\prod_{i=1}^{N} \prod_{j=i+1}^{N} \sigma(\theta_{O_i,O_j})\right]
\end{equation}
For small cycles it is easy to do this exhaustively by checking all permutations.
For larger cycles, we apply a simple greedy search that works just as well.
Once the order $\hat{O}$ has been found, we remove all edges $X_i\to X_j$ where $i$ comes after $j$ in $\hat{O}$.
This guarantees to result in an acyclic graph.

The intuition behind this heuristic is the following.
Cycles are often caused by a single orientation pair being incorrect due to noise in the interventional data.
For example, in a chain $X_1\to X_2\to X_3\to X_4\to X_5$, it can happen that the orientation parameter $\theta_{14}$ is incorrectly learned as orientation of the edge between $X_1$ and $X_4$ as $X_4\to X_1$ if the interventional data on $X_4$ does not show the independence of $X_1$ and $X_4$. 
However, most other orientation parameters, e.g. $\theta_{12}$, $\theta_{13}$, $\theta_{24}$, $\theta_{34}$, etc., have been likely learned correctly.
Thus, it is easy to spot that $\theta_{14}$ is an outlier, and this is what the simple heuristic above implements.

\paragraph{Results} Besides the structural hamming distance, a common alternative metric is \textit{structural intervention distance} (SID) \cite{Peters2015Structural}.
In contrast to SHD, SID quantifies the closeness between two DAGs in terms of their corresponding causal inference statements.
Hence, it is suited for comparing causal discovery methods.
The results of the experiments on the synthetic graphs in terms of SID are shown in \Tabref{tab:appendix_simulated_results_sid}, and show a similar trend as before, namely that ENCO is outperforming all baselines.

\begin{table}[t!]
    \centering
    \addtolength{\leftskip}{-4cm}
    \addtolength{\rightskip}{-4cm}
    \small
    \caption{Extension of \Tabref{tab:experiments_simulated_results} with the metric \textit{structural intervention distance} (SID) (lower is better), averaged over 25 graphs each. The conclusion is the same as for SHD, namely that ENCO outperforms all baselines, while the acyclic heuristic has an even greater impact.}
    \label{tab:appendix_simulated_results_sid}
    \resizebox{\textwidth}{!}{
    \begin{tabular}{l|cccccc}
        \toprule
		\textbf{Graph type} & \textbf{bidiag} & \textbf{chain} & \textbf{collider} & \textbf{full} & \textbf{jungle} & \textbf{random}\\
		\midrule
		GIES %
		& $460.0$ \footnotesize{($\pm60.1$)} & $224.2$ \footnotesize{($\pm87.3$)} & $83.6$ \footnotesize{($\pm143.5$)} & $527.9$ \footnotesize{($\pm35.7$)} & $441.4$ \footnotesize{($\pm26.1$)} & $471.9$ \footnotesize{($\pm33.7$)}\\
		IGSP %
		& $423.3$ \footnotesize{($\pm48.2$)} & $240.1$ \footnotesize{($\pm78.8$)} & $120.7$ \footnotesize{($\pm51.4$)} & $554.8$ \footnotesize{($\pm26.4$)} & $394.8$ \footnotesize{($\pm73.5$)} & $524.0$ \footnotesize{($\pm18.8$)}\\
		SDI & $243.7$ \footnotesize{($\pm 55.1$)} & $70.2$ \footnotesize{($\pm 46.8$)} & $24.0$ \footnotesize{($\pm  0.0$)} & $537.1$ \footnotesize{($\pm 29.2$)} & $180.0$ \footnotesize{($\pm 56.4$)} & $317.9$ \footnotesize{($\pm 62.7$)}\\
        DCDI & $369.3$ \footnotesize{($\pm 47.5$)} & $233.4$ \footnotesize{($\pm 24.8$)} & $10.9$ \footnotesize{($\pm  3.6$)} & $183.6$ \footnotesize{($\pm 54.5$)} & $339.4$ \footnotesize{($\pm 59.1$)} & $158.6$ \footnotesize{($\pm 69.1$)}\\
        \midrule
        ENCO (ours) & $28.2$ \footnotesize{($\pm 18.8$)} & $19.9$ \footnotesize{($\pm 17.9$)} & $\bm{7.4}$ \footnotesize{($\pm 13.3$)} & $63.2$ \footnotesize{($\pm 22.6$)} & $16.3$ \footnotesize{($\pm  9.8$)} & $77.6$ \footnotesize{($\pm 27.2$)}\\
        ENCO-acyclic (ours) & $\bm{0.0}$ \footnotesize{($\pm 0.0$)} & $\bm{0.0}$ \footnotesize{($\pm 0.0$)} & $\bm{7.4}$ \footnotesize{($\pm 13.3$)} & $\bm{17.7}$ \footnotesize{($\pm 8.9$)} & $\bm{4.6}$ \footnotesize{($\pm 8.1$)} & $\bm{5.3}$ \footnotesize{($\pm 11.8$)}\\
        \bottomrule
    \end{tabular}
    }
\end{table}

\subsection{Scalability experiments}
\label{sec:appendix_hyperparameters_scalability}

\paragraph{Graph generation} For generating the graphs, we use the same strategy as for the graph \texttt{random} in the previous experiments. 
The probability of sampling an edge is set to $8/N$, meaning that on average, every node has 8 in- and outgoing edges. 
We limit the number of parents to 10 per node since, otherwise, we cannot guarantee that the randomly sampled distributions take all parents faithfully into account. 
This is also in line with the real-world inspired graphs of the BnLearn repository, which have a maximum of 6 parents. 
To give an intuition on the complexity of such graphs, we show an example graph of 100 nodes in \Figref{fig:appendix_large_graphs_example}. 
Accordingly to the number of variables, we have increased the data set size to 4096 samples per intervention, and 100k observational samples.
We did not apply the order heuristic on the predictions, since ENCO was able to recover acyclic graphs by itself with the given data.

\begin{figure}
    \centering
    \begin{subfigure}[t]{0.4\textwidth}
        \includegraphics[width=0.8\textwidth]{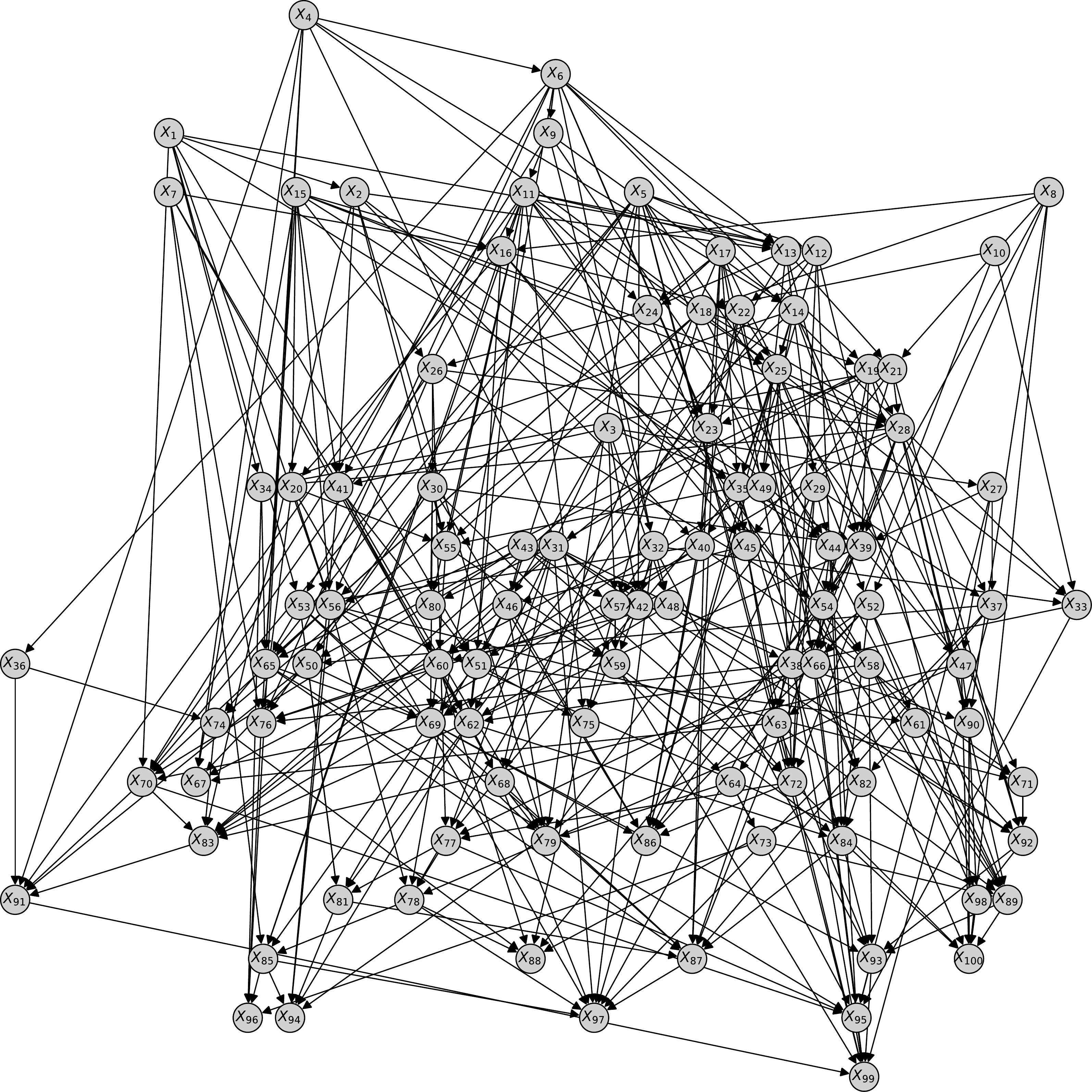}
        \caption{Example graph $100$ nodes}
        \label{fig:appendix_large_graphs_example}
    \end{subfigure}
    \hspace{0.75cm}
    \begin{subfigure}[t]{0.4\textwidth}
        \includegraphics[width=\textwidth]{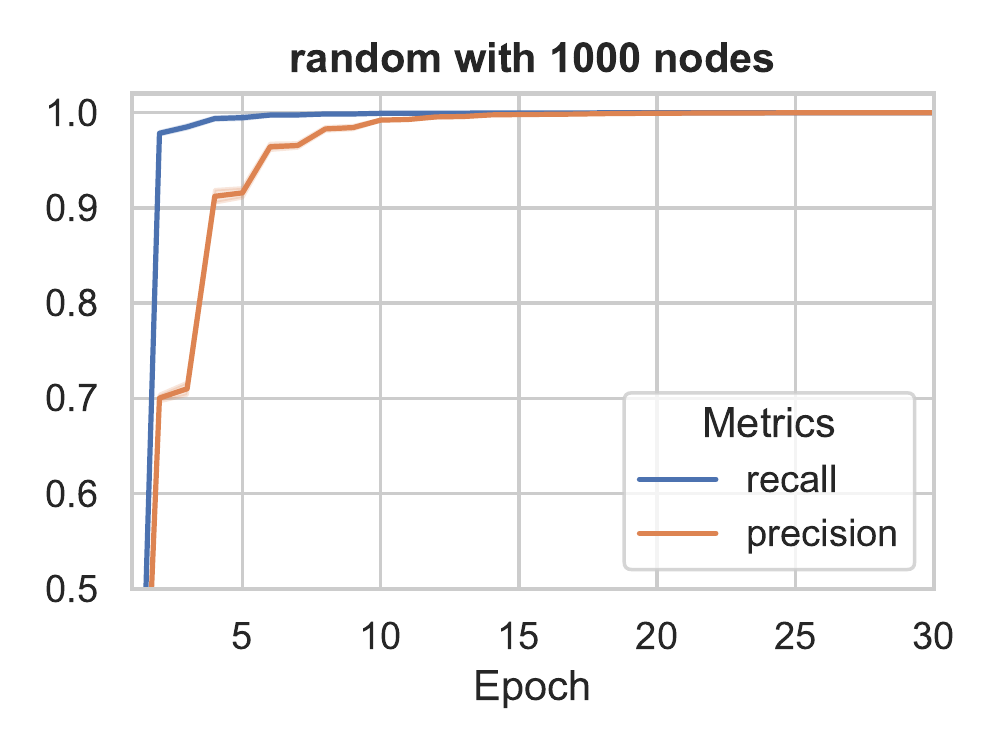}
        \caption{Learning curve $1,000$ nodes}
        \label{fig:appendix_large_graphs_learning_curve}
    \end{subfigure}
    \caption{(a) Example graph of 100 variables (best viewed electronically). Every node has on average 8 edges and a maximum of 10 parents. (b) Plotting recall and precision of the edge predictions for the training on graphs with $1,000$ nodes. The small standard deviation across graphs shows that \OurApproach{} can reliably recover large graphs.}
    \label{fig:appendix_large_graphs_visus}
\end{figure}

\paragraph{$\gamma$-freezing stage} For \OurApproach{}, one challenge of large graphs is that the orientation parameters $\bm{\theta}$ are updated very sparsely.
The gradients for $\theta_{ij}$ require data from an intervention on one of its adjacent nodes $X_i$ or $X_j$, which we evaluate less frequently with increasing $N$ as we iterate over interventions on all $N$ nodes. 
Hence, we require more iterations/epochs just for training the orientation parameters while wasting a lot of computational resources. 
To accelerate training of large graphs, we freeze $\bm{\gamma}$ in every second graph fitting stage.
Updating only $\bm{\theta}$ allows us to use the same graph sample $C_{-ij}$ for both $\mathcal{L}_{X_i\to X_j}(X_j)$ and $\mathcal{L}_{X_i\not\to X_j}(X_j)$ since the log-likelihood estimate of $X_j$ only needs to be evaluated for $\theta_{ij}$.
With this gradient estimator, we experience that as little as 4 graph samples are sufficient to obtain a reasonable gradient variance.
Hence, it is possible to perform more gradient updates of $\bm{\theta}$ in the same computation time.
Note that this is estimator not efficient when training $\bm{\gamma}$ as we require different $C_{-ij}$ samples for every $i$.
In experiments, we alternate the standard graph fitting step with this pure $\theta$-training stage.
We want to emphasize that this approach can also be used for small graphs obtaining similar results as in \Tabref{tab:experiments_simulated_results}.
However, it is less needed because the orientation parameters are more frequently updated in the first place.
Such an approach is not possible for the baselines, SDI and DCDI, because they do not model the orientation as a separate variable.

\paragraph{Hyperparameters} To experiment with large graphs, we mostly keep to the same hyperparameters as reported in \Secref{sec:appendix_hyperparameters_simulated}. However, all methods showed to gain by a small hyperparameter search.
For SDI and \OurApproach{}, we increase the number of distribution fitting iterations as the neural networks need to model a larger set of possible parents. 
We also increase the learning rate of $\bm{\gamma}$ to $2\text{e-}2$. 
However, while SDI reaches better performance with the increased learning rate at epoch 30, it showed to perform worse when training for longer. 
This indicates that high learning rates can lead to local minima in SDI.
Additionally, we noticed that a slightly higher sparsity regularizer improved convergence speed for \OurApproach{} while SDI did not improve with a higher sparsity regularizer. 
\Tabref{tab:appendix_hyperparameters_scalability} shows a hyperparameter overview of \OurApproach{} on large-scale graphs, and \Figref{fig:appendix_large_graphs_learning_curve} the learning curve on graphs of $1,000$ nodes.

For DCDI, we noticed that the hyperparameters around the Lagrangian constraint needed to be carefully fine-tuned. 
The Lagrangian constraint can reach values larger than possible to represent with double, and starts with $8\text{e}216$ for graphs of $1,000$ nodes. Following \citet{Brouillard2020Differentiable}, we normalize the constraint by the value after initialization, which gives us a more reasonable value to start learning. We performed another hyperparameter search on $\mu_0$, noticed however that it did not have a major impact. 
In the run time of \OurApproach{}, DCDI just starts to increase the weighting factor of the augmented Lagrangian while the DAG constraint is lower than $1\text{e-}10$ for the smallest graph. 
The best value found was $\mu_0=1\text{e-}7$.

\begin{table}[ht!]
    \centering
    \caption{Hyperparameter overview of \OurApproach{} for the scalability experiments presented in Table~\ref{tab:experiments_scalability_results}. Numbers in the center represent that we use the same value for all graphs.}
    \label{tab:appendix_hyperparameters_scalability}
    \renewcommand{\arraystretch}{1.2}
    \begin{tabular}{lcccc}
        \toprule
        \textbf{Hyperparameters} & 100 nodes & 200 nodes & 400 nodes & 1000 nodes\\
        \midrule
        Distribution model & \multicolumn{4}{c}{2 layers, hidden size 64, LeakyReLU($\alpha=0.1$)} \\
        Batch size - model & 128 & 128 & 128 & 64 \\
        Learning rate - model & \multicolumn{4}{c}{5e-3} \\
        Distribution fitting iterations $F$ & 2000 & 2000 & 4000 & 4000 \\
        Graph fitting iterations $G$ & \multicolumn{4}{c}{100} \\
        Graph samples & \multicolumn{4}{c}{100} \\
        Learning rate - $\bm{\gamma}$ & \multicolumn{4}{c}{2e-2} \\
        Learning rate - $\bm{\theta}$ & \multicolumn{4}{c}{1e-1} \\
        $\theta$-training iterations & 1000 & 1000 & 2000 & 2000 \\
        $\theta$-training graph samples & \multicolumn{4}{c}{2 + 2} \\
        Sparsity regularizer $\Lsparse$ & \multicolumn{4}{c}{\{0.002, 0.004, \underline{0.01}\}} \\
        Number of epochs (max.) & \multicolumn{4}{c}{30}\\
        \bottomrule
    \end{tabular}
\end{table}

\paragraph{Results} For clarity, we report the results of all methods below. The exact values might not be easily readable in \Figref{fig:experiments_scalability_results} due to large differences in performance.

\begin{table}[ht!]
    \centering
    \caption{Results for graphs between 100 and 1000 nodes. We report the average and standard deviation of the structural hamming distance (SHD) over 10 randomly sampled graphs. $^{\dagger}$The maximum runtime of \OurApproach{} was measured on an NVIDIA RTX3090. Baselines were executed on the same hardware.}
    \label{tab:experiments_scalability_results}
    \small
    \begin{tabular}{ccccc}
        \toprule
        \textbf{Graph size} & 100 & 200 & 400 & 1000 \\
        \textbf{Max Runtime}$^{\dagger}$ & 15mins & 45mins & 2.5hrs & 9hrs \\
        \midrule
        DCDI \cite{Brouillard2020Differentiable} & 583.1 ($\pm 21.8$) & 1399.0 ($\pm 67.5$) & 4761.2 ($\pm 303.4$) & OOM \\
        SDI \cite{Ke2019LearningInterventions} & 35.7 ($\pm 2.9$) & 100.9 ($\pm 7.6$) & 356.3 ($\pm 11.5$) & 1314.4 ($\pm 58.5$) \\
        \midrule
        \OurApproach{} (Ours) & 0.0 ($\pm 0.0$) & 0.0 ($\pm 0.0$) & 0.0 ($\pm 0.0$) & 0.2 ($\pm 0.42$) \\ 
        \bottomrule
    \end{tabular}
\end{table}

\subsection{Latent confounder experiments}
\label{sec:appendix_hyperparameters_latent_confounders}

\paragraph{Graph generation} The graphs used for testing the latent confounding strategy are based on the \texttt{random} graphs from \Secref{sec:experiments_common_graphs}. 
We use graphs of 25 nodes, and add 5 extra nodes that represent latent confounders. 
Each latent confounder $X_l$ is connected to two randomly sampled nodes $X_i$, $X_j$ that do not have a direct connection. 
However, $X_i$ and $X_j$ can be an ancestor-descendant pair and have any other (shared) parent set (see \Figref{fig:appendix_latent_confounder_visu}). 
In the adjacency matrix, we add the edges $X_l\to X_i$ and $X_l\to X_j$, and perform the data generation as for the previous graphs. 
After data generation, we remove the 5 latent confounders from both observational and interventional data. 
The task is to learn the graph structure of the remaining 25 observable variables, as well as detecting whether there exists a latent confounder between any pair of variables.
We use the same setup in terms of dataset size as before for the observational samples, namely 5k, but increased the samples per intervention to 512. 
Little interventional data showed to cause a high variance in the interventional gradients, $\gamma^{(I)}_{ij}$, which is why more false positives occured.
The results in for the limited data with 200 interventions, and results in the data limit, \ie{} for 10k interventional samples and 100k observational samples, are shown in \Tabref{tab:appendix_latent_confounder_results}.

\begin{table}
    \centering
    \small
    \caption{Results of \OurApproach{} on detecting latent confounders averaged over 25 graphs with 25 nodes in the data limit (10k samples per intervention, 100k observational samples) and limited data (200 samples per intervention, 5k observational samples). 
    In the data limit, only false negative predictions of latent confounders occured which did not affect other edge predictions.
    With little interventional data, more false positives occur reducing the precision.
    }
    \label{tab:appendix_latent_confounder_results}
    \begin{tabular}{l|ccc}
        \toprule
		Metrics & \OurApproach{} & \OurApproach{} & \OurApproach{} \\
		& (10k interv./100k observ.) & (512 interv./5k observ.) & (200 interv./5k observ.) \\
		\midrule
		SHD & $0.0$ \footnotesize{($\pm 0.0$)} & $1.24$ \footnotesize{($\pm 1.33$)} & $4.12$ \footnotesize{($\pm 1.86$)}\\
		Confounder recall & $96.8\%$  \footnotesize{($\pm 9.5\%$)} & $93.6\%$  \footnotesize{($\pm 13.8\%$)} & $92.0\%$ \footnotesize{($\pm 11.5\%$)} \\
		Confounder precision & $100.0\%$ \footnotesize{($\pm 0.0\%$)} & $96.5\%$ \footnotesize{($\pm 7.1\%$)} & $83.8\%$ \footnotesize{($\pm 16.4\%$)}\\
        \bottomrule
    \end{tabular}
\end{table}

\paragraph{Baselines} None of our previous continuous optimization baselines, \ie{}, SDI and DCDI, are able to deal with latent confounders. 
To the best of our knowledge, other methods that are able to handle latent confounders commonly take assumptions that do not hold in our experimental setup. 
Further, most methods are able to deal with latent confounders in the sense that they obtain the correct results despite latent confounding being present.
However, in our case, we explicitly predict latent confounders which is a different task by itself.

\paragraph{Hyperparameters} To show that we can perform latent confounder detection without specific hyperparameters, we use the same hyperparameters as for the experiment on the previous graph structures (see Appendix~\ref{sec:appendix_hyperparameters_latent_confounders}). 
To record $\gamma^{(I)}_{ij}$ and $\gamma^{(O)}_{ij}$ separately, we use separate first and second order momentum parameters in the Adam optimizer. 
We plot in \Figref{fig:appendix_confounder_score_threshold} the latent confounder scores $\text{lc}(X_i,X_j)$ calculated based on \Eqref{eqn:methodology_latent_confounder_score}. We see that the score converges close to 1 for pairs with a latent confounder, and for all other, it converges to 0. This verifies our motivation of the score function discussed in \Secref{sec:method_latent_variables}, and also shows that the method is not sensitive to the threshold hyperparameter $\tau$. We choose $\tau=0.4$ which was slightly higher than the highest value recorded for any other pair at early stages of training.

\begin{figure}[t!]
    \centering
    \begin{subfigure}[t]{0.4\textwidth}
    	\centering
    	\tikz{ %
    		\node[latent] (Xl) {$X_l$} ; %
    		\node[obs, below=of Xl, xshift=-.7cm, yshift=.5cm] (X1) {$X_i$} ; %
    		\node[obs, below=of Xl, xshift=.7cm, yshift=.5cm] (X2) {$X_j$} ; %
    	    \node[const, left=of X1] (Xetc1) { $\cdots$\hspace{3mm} } ;
    	    \node[const, right=of X2] (Xetc2) { \hspace{3mm}$\cdots$} ;
    		
    	    \edge{Xl}{X1}
    	    \edge{Xl}{X2}
    	    \edge{Xetc1}{X1}
    	    \edge{Xetc2}{X2}
    	}
    	\caption{Latent confounder example}
    	\label{fig:appendix_latent_confounder_visu}
    \end{subfigure}
    \hspace{5mm}
    \begin{subfigure}[t]{0.5\textwidth}
        \centering
        \includegraphics[width=\textwidth]{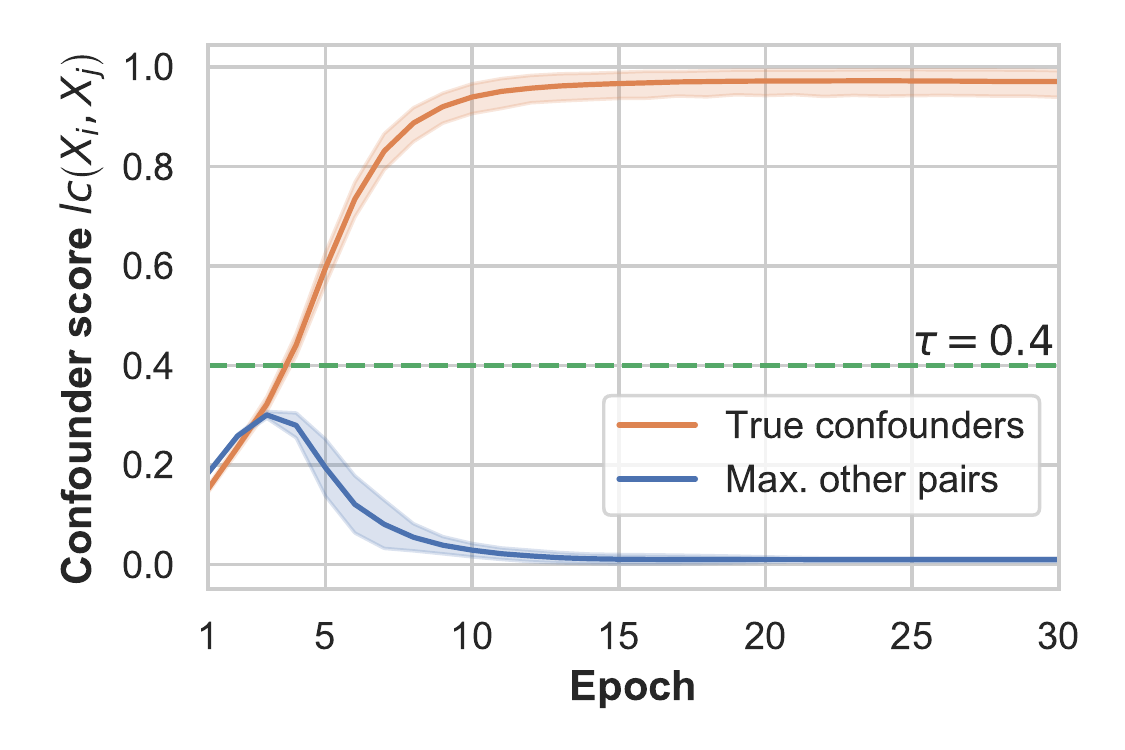}
        \caption{Threshold sensitivity}
        \label{fig:appendix_confounder_score_threshold}
    \end{subfigure}
    \caption{\textbf{Left}: Example of a latent confounder scenario, where $X_l$ is not observed and introduces a dependency between $X_i$ and $X_j$ on observational data. The dots on the left and right represent eventual (observed) parents of $X_i$ and $X_j$. \textbf{Right}: Plotting the average score $\text{lc}(X_i,X_j)$ for confounders $X_i\leftarrow X_l\rightarrow X_j$ in the true causal graph (orange) and maximum score of any other node pair (blue). The plot shows the detection of latent confounders in \OurApproach{} is not sensitive to the specific value of $\tau$.}
\end{figure}

\subsection{Interventions on fewer variables}
\label{sec:appendix_hyperparameter_fewer_interventions}

\paragraph{Datasets} We perform the experiments of interventions on fewer variables on the same graphs and datasets as used for the initial synthetic graphs (see \Secref{sec:appendix_hyperparameters_simulated}).
To simulate having interventions on fewer variables, we randomly sample a subset of variables for which we include the interventional data, and remove for all others.
The sampled variables are the same for both ENCO and DCDI, and differ across graphs.
The dataset size is the same as before, namely 200 samples per intervention and 5k observational datasets.

\paragraph{ENCO for partial intervention sets} While the theoretical guarantees for convergence to an acyclic graph apply when interventions on all variables are possible, it is straightforward to extend the ENCO algorithm to support partial interventions as well.
Normally, in the graph fitting stage, we sample one intervention at a time.
We can, thus, simply restrict the sampling only to the interventions that are possible (or provided in the dataset).
In this case, we update the orientation parameters $\theta_{ij}$ of only those edges that connect to an intervened variable, either $X_i$ or $X_j$, as before.
All other orientation parameters would remain unchanged throughout the training, since their gradients rely on interventions missing from the dataset. 
Instead, we extend the gradient estimator in \Eqref{eqn:methodology_orientation_gradients} to not be exclusive to adjacent interventions, but include interventions on all variables.
Specifically, for the orientation parameter $\theta_{ij}$ without any interventions on $X_i$ or $X_j$, we use the following gradient estimator:
\begin{equation}
    \label{eqn:appendix_expanded_orientation_gradients}
    \begin{split}
        \frac{\partial}{\partial \theta_{ij}}\tilde{\mathcal{L}} = \sigma'(\theta_{ij})\Big(&\sigma(\gamma_{ij})\cdot \E_{I_{X_k},\bm{X},C_{-ij}}\left[\mathcal{L}_{X_i\to X_j}(X_j)-\mathcal{L}_{X_i\not\to X_j}(X_j)\right]-\\
        & \sigma(\gamma_{ji})\cdot\E_{I_{X_k},\bm{X},C_{-ij}}\left[\mathcal{L}_{X_j\to X_i}(X_i)-\mathcal{L}_{X_j\not\to X_i}(X_i)\right]\Big)
    \end{split}
\end{equation}
where we have an intervention on an arbitrary variable $X_k$ with $k\neq i, k\neq j$.
This still represents an unbiased gradient estimator since in the derivation of the estimator, we excluded interventions on other variables only to reduce noise.

ENCO has been designed under the assumption that interventional data is provided.
When we have interventional data on only a very small subset of variables, we might not optimally use the information that is provided by the observational data.
To overcome this issue, we can run a causal discovery method that solely work on observational data and return an undirected graph.
This skeleton can be used as a prior, and prevents false positive edges between conditionally independent variables.

\paragraph{Hyperparameters} We reuse the hyperparameters of the experiments on the synthetic graph except that we use a slighly smaller sparsity regularizer, $\Lsparse=0.002$, and a weight decay of $4e$-$5$. 
For the orientation parameters without adjacent intervention, we use a learning rate of $0.1\cdot\text{lr}_{\theta}$ which is $1e$-$3$ for this experiment.
For DCDI, we observed that a higher regularization term of $\lambda=1.0$ obtained best performance. All other hyperparameters are the same as in \Secref{sec:appendix_hyperparameters_simulated}.

\paragraph{Results} For additional experimental results, see \Secref{sec:appendix_partial_interventions}.

\subsection{Real-world inspired experiments}
\label{sec:appendix_hyperparameters_real_world}

\paragraph{Datasets} We perform experiments on a collection of causal graphs from the Bayesian Network Repository (BnLearn) \cite{Scutari2010Learning}.
The repository contains graphs inspired by real-world applications that are used as benchmarks in literature.
We chose the graphs to reflect a variety of sizes and different challenges (rare events, deterministic variables, etc.).
The chosen graphs are cancer \cite{Korb2010Bayesian}, earthquake \cite{Korb2010Bayesian}, asia \cite{Lauritzen1988LocalCW}, sachs \cite{Sachs2005CausalPN}, child \cite{Spiegelhalter1992Learning}, alarm \cite{Beinlich1989Alarm}, diabetes \cite{Andreassen1991Model}, and pigs \cite{Scutari2010Learning}.
The graphs have been downloaded from the BnLearn website\footnote{\url{https://www.bnlearn.com/bnrepository/}}.
For the small graphs, we have used a dataset size of 50k observational samples and 512 samples per intervention.
This is a larger dataset size than for the synthetic graph because many edges in the real-world graphs have very small causal effects that cannot be recovered from limited data, and the goal of the experiment was to show that the convergence conditions also hold on real-world graphs.
Hence, we need more observational and interventional samples.
The results with a smaller dataset size, i.e. 5k observational and 200 interventional samples as before, are shown in \Tabref{tab:appendix_real_world_results}.
For the large graphs, we follow the dataset size for the scalability experiments (see \Secref{sec:appendix_hyperparameters_scalability}).

\begin{table}[t!]
    \centering
    \addtolength{\leftskip}{-2cm}
    \addtolength{\rightskip}{-2cm}
    \small
    \caption{Results on graphs from the BnLearn library measured in structural hamming distance (lower is better). Results are averaged over 5 seeds with standard deviations.
    }
    \label{tab:appendix_real_world_results}
    \resizebox{\textwidth}{!}{
    \begin{tabular}{l|cccccccc}
        \toprule
		Dataset & cancer %
		& earthquake %
		& asia %
		& sachs %
		& child %
		& alarm %
		& diabetes %
		& pigs %
		\\
		 & \footnotesize{(5 nodes)} & \footnotesize{(5 nodes)} & \footnotesize{(8 nodes)} & \footnotesize{(11 nodes)} & \footnotesize{(20 nodes)} & \footnotesize{(37 nodes)} & \footnotesize{(413 nodes)} & \footnotesize{(441 nodes)}\\
		\midrule
		SDI & $3.0$ \footnotesize{($\pm0.0$)} & $0.4$ \footnotesize{($\pm0.5$)} & $4.0$ \footnotesize{($\pm0.0$)} & $7.0$ \footnotesize{($\pm0.0$)} & $11.2$ \footnotesize{($\pm0.4$)} & $24.4$ \footnotesize{($\pm1.7$)} & $422.4$ \footnotesize{($\pm8.7$)} & $18.0$ \footnotesize{($\pm1.6$)}\\
		DCDI & $4.0$ \footnotesize{($\pm0.0$)} & $2.0$ \footnotesize{($\pm0.0$)} & $5.0$ \footnotesize{($\pm0.0$)} & $5.4$ \footnotesize{($\pm2.1$)} & $8.4$ \footnotesize{($\pm0.7$)} & $30.0$ \footnotesize{($\pm4.2$)} & - & -\\
		\midrule
		\OurApproach{} & $\mathbf{0.0}$ \footnotesize{($\pm0.0$)} & $\mathbf{0.0}$ \footnotesize{($\pm0.0$)} & $\mathbf{0.0}$ \footnotesize{($\pm0.0$)} & $\mathbf{0.0}$ \footnotesize{($\pm0.0$)} & $\mathbf{0.0}$ \footnotesize{($\pm0.0$)} & $\mathbf{1.0}$ \footnotesize{($\pm0.0$)} & $\mathbf{2.0}$ \footnotesize{($\pm0.0$)} & $\mathbf{0.0}$ \footnotesize{($\pm0.0$)}\\
        \bottomrule
    \end{tabular}
    }
\end{table}

\paragraph{Hyperparameters} We reuse most of the hyperparameters of the previous experiments. For all graphs less than 100 nodes, we use the hyperparameters of Appendix~\ref{sec:appendix_hyperparameters_simulated}, \textit{i.e.} the synthetic graphs of 25 nodes. For all graphs larger than 100 nodes, we use the hyperparameters of Appendix~\ref{sec:appendix_hyperparameters_scalability}, \textit{i.e.} the large-scale graphs. One exception is that we allow the fine-tuning of the regularizer parameter for both sets. For \OurApproach{}, we used a slightly smaller regularizer, $\Lsparse=0.002$, for the small graphs, and a larger one, $\Lsparse=0.02$, for the large graphs. Due to the large amount of deterministic variables, \OurApproach{} tends to predict more false positives in the beginning before removing them one by one. For SDI, we also found a smaller regularizer, $\Lsparse=0.01$, to work best for the small graphs. However, in line with the results of \citet{Ke2019LearningInterventions}, SDI was not able to detect all edges. Even lower regularizers showed to perform considerably worse on the child dataset, while minor improvements were made on the small graphs. Hence, we settled for $\Lsparse=0.01$. In terms of run time, both methods used 100 epochs for the small graphs and 50 for the large graphs.

\paragraph{Results} The results including standard deviations can be found in \Tabref{tab:appendix_real_world_results}.
The low standard deviation for \OurApproach{} shows that the approach is stable across seeds, even for large graphs.
SDI has a zero standard deviation for a few graphs.
In those cases, SDI converged to the same graph across seeds, but not necessarily the correct graph.
We have also applied DCDI \cite{Brouillard2020Differentiable} to the real-world datasets and report the results in \Tabref{tab:appendix_real_world_results} and \ref{tab:appendix_real_world_results_low_data}.
DCDI performs relatively similar to SDI, making a few more mistakes on the very small graphs ($<10$ nodes) while being slightly better on \texttt{sachs} and \texttt{child}.
Nonetheless, \OurApproach{} outperforms DCDI on all graphs.
We do not report results of DCDI on the largest graphs, \texttt{diabetes} and \texttt{pigs}, because it ran out of memory for \texttt{diabetes} (larger number of max. categories per variable) and did not converge within the same time limitations as SDI and \OurApproach{} (see \Secref{sec:experiments_scalability} for a comparison on scalability).

\begin{table}[t!]
    \centering
    \addtolength{\leftskip}{-2cm}
    \addtolength{\rightskip}{-2cm}
    \small
    \caption{Results on graphs from the BnLearn library measured in structural hamming distance (lower is better), using 5k observational and 200 interventional samples.
    }
    \label{tab:appendix_real_world_results_low_data}
    \begin{tabular}{l|cccccc}
        \toprule
		Dataset & cancer %
		& earthquake %
		& asia %
		& sachs %
		& child %
		& alarm %
		\\
		 & \footnotesize{(5 nodes)} & \footnotesize{(5 nodes)} & \footnotesize{(8 nodes)} & \footnotesize{(11 nodes)} & \footnotesize{(20 nodes)} & \footnotesize{(37 nodes)}\\
		\midrule
		SDI & $3.0$ \footnotesize{($\pm0.0$)} & $\mathbf{0.4}$ \footnotesize{($\pm0.5$)} & $4.6$ \footnotesize{($\pm0.5$)} & $8.4$ \footnotesize{($\pm0.5$)} & $12.4$ \footnotesize{($\pm0.9$)} & $26.6$ \footnotesize{($\pm1.1$)}\\
		DCDI & $4.0$ \footnotesize{($\pm0.0$)} & $2.0$ \footnotesize{($\pm0.0$)} & $4.4$ \footnotesize{($\pm0.7$)} & $7.2$ \footnotesize{($\pm2.4$)} & $9.8$ \footnotesize{($\pm0.6$)} & $31.4$ \footnotesize{($\pm0.7$)}\\
		\midrule
		\OurApproach{} & $\mathbf{1.2}$ \footnotesize{($\pm0.4$)} & $\mathbf{0.4}$ \footnotesize{($\pm0.9$)} & $\mathbf{1.4}$ \footnotesize{($\pm0.5$)} & $\mathbf{0.4}$ \footnotesize{($\pm0.5$)} & $\mathbf{0.8}$ \footnotesize{($\pm1.1$)} & $\mathbf{11.4}$ \footnotesize{($\pm1.8$)}\\
        \bottomrule
    \end{tabular}
\end{table}
\newpage
\section{Additional experiments}
\label{sec:appendix_additional_experiments}

In this section, we show additional experiments performed as ablation studies of \OurApproach{}.
First, we discuss further experiments 
We then discuss the effect of using our gradient estimators proposed in \Secref{sec:method_guarantees} compared to \citet{Bengio2019MetaTransfer}.
Next, we show experiments on synthetic graphs with deterministic variables violating faithfulness, and experiments on continuous data with Normalizing Flows.
Finally, we discuss experiments with different causal mechanism functions for generating synthetic, conditional categorical distributions besides neural networks.

\subsection{Effect of the sample size}
\label{sec:appendix_sparse_data}

The number of samples provided as observational and interventional data is crucial for causal structure learning methods since the more data we have, the better we can estimate the underlying causal mechanisms.
To gain further insights in the effect of the sample size on ENCO and the compared baselines, we repeat the experiments of \Secref{sec:experiment_synthetic} with different sample sizes.

\paragraph{Large sample size} First, we use very large sample sizes to find the upper bound performance level that we can expect from each method. 
For this, we sample 100k observational samples per graph, and 10k samples per intervention.
We observed that this is sufficient to model most conditional probabilities up to a negligible error.
The results are shown in \Tabref{tab:appendix_simulated_results}.
We find that, in line with the theoretical guarantees, \OurApproach{} can reliably recover most graphs, only making $0.3$ mistakes on average on the full graph.
Of the baselines, only DCDI is able to recover the collider graph without errors since its edges can be independently orientated.
For all other graphs, DCDI converges to acyclic graphs, but incorrectly orients some edges and predicts false positive edges, while being 8 times slower than ENCO on the same hardware.
All other baselines show improved SHD scores than in \Tabref{tab:experiments_simulated_results} as well, but are not able to match \OurApproach{}'s performance.
This shows that, even in the data limit, \OurApproach{} achieves notably better results than concurrent methods.

\begin{table}[b!]
    \centering
    \addtolength{\leftskip}{-2cm}
    \addtolength{\rightskip}{-2cm}
    \small
    \caption{Repeating experiments of \Tabref{tab:experiments_simulated_results} with large sample sizes (10k samples per intervention, 100k observational samples). In line with the theoretical guarantees, \OurApproach{} can reliably recover five out of the six graph structures without errors.}
    \label{tab:appendix_simulated_results}
    \resizebox{\textwidth}{!}{
    \begin{tabular}{l|cccccc}
        \toprule
		\textbf{Graph type} & \textbf{bidiag} & \textbf{chain} & \textbf{collider} & \textbf{full} & \textbf{jungle} & \textbf{random} \\
		\midrule
		GIES %
		& $47.4$ \footnotesize{($\pm5.2$)} & $22.3$ \footnotesize{($\pm3.5$)} & $13.3$ \footnotesize{($\pm3.0$)} & $152.7$ \footnotesize{($\pm12.0$)} & $53.9$ \footnotesize{($\pm8.9$)} & $86.1$ \footnotesize{($\pm12.0$)}\\
		IGSP %
		& $33.0$ \footnotesize{($\pm4.2$)} & $12.0$ \footnotesize{($\pm1.9$)} & $23.4$ \footnotesize{($\pm2.2$)} & $264.6$ \footnotesize{($\pm7.4$)} & $38.6$ \footnotesize{($\pm5.7$)} & $76.3$ \footnotesize{($\pm7.7$)}\\
		SDI %
		& $2.1$ \footnotesize{($\pm1.5$)} & $0.8$ \footnotesize{($\pm0.9$)} & $14.7$ \footnotesize{($\pm4.0$)} & $121.6$ \footnotesize{($\pm18.4$)} & $1.8$ \footnotesize{($\pm1.6$)} & $1.8$ \footnotesize{($\pm1.9$)}\\
		DCDI %
		& $3.7$ \footnotesize{($\pm1.5$)} & $4.0$ \footnotesize{($\pm1.3$)} & $\mathbf{0.0}$ \footnotesize{($\pm0.0$)} & $2.8$ \footnotesize{($\pm2.1$)} & $1.2$ \footnotesize{($\pm1.5$)} & $2.2$ \footnotesize{($\pm1.5$)}\\
		\midrule
		\OurApproach{} (Ours) & $\mathbf{0.0}$ \footnotesize{($\pm0.0$)} & $\mathbf{0.0}$ \footnotesize{($\pm0.0$)} & $\mathbf{0.0}$ \footnotesize{($\pm0.0$)} & $\mathbf{0.3}$ \footnotesize{($\pm0.9$)} & $\mathbf{0.0}$ \footnotesize{($\pm0.0$)} & $\mathbf{0.0}$ \footnotesize{($\pm0.0$)}\\
    	\bottomrule
    \end{tabular}
    }
\end{table}

Next, we consider situations where data is very limited.
Thereby, we consider two data sample axes: observational and interventional data.

\paragraph{Limited interventional data sample sizes} We repeat the experiments of \Tabref{tab:experiments_simulated_results} for \OurApproach{} while limiting the sample size per intervention to 20, 50, and 100 (200 before).
The observational dataset size of 5000 samples is thereby kept constant.
We plot the performance for all graph structures in \Figref{fig:appendix_ENCO_small_interventional_samples}.
Overall, the decrease of performance with lower interventional sample size is consistent across graph structures.
With only 20 samples per intervention, it becomes especially hard to reason about variables with many parents, since the variable's distribution is determined by many other parents as well.
Yet, for four out of the six graphs, we obtain an SHD of less than 1 with 100 interventional samples, and less than 6 when only 20 samples are available.
In conclusion, ENCO works well with little interventional data if most variables have a small parent set.

\begin{figure}[t!]
    \centering
    \includegraphics[width=\textwidth]{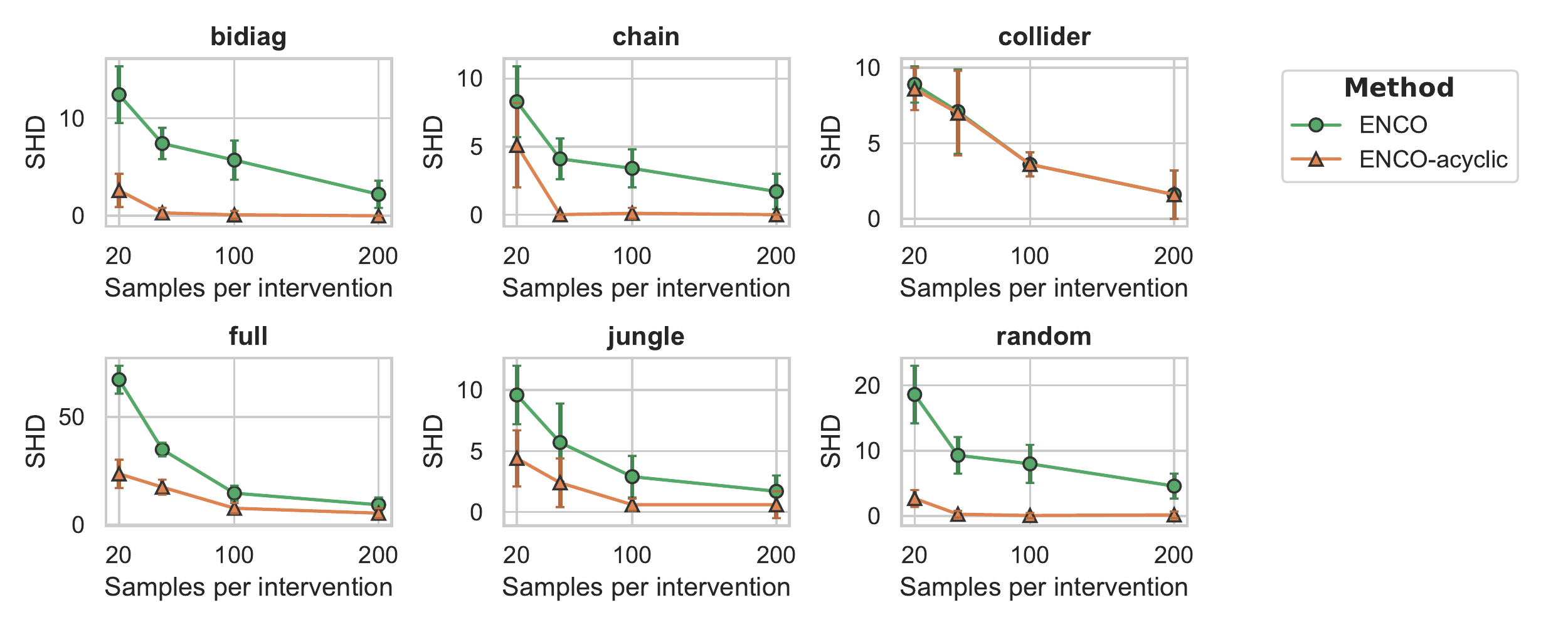}
    \caption{Results of \OurApproach{} for different graph structures under limited interventional data sample size. Note the different scale of the y-axis for the six graphs. While the general trend is the same for all graphs, \ie{} decreasing performance with fewer samples, the order heuristic can reduce the SHD error by a considerable margin for most graphs.}
    \label{fig:appendix_ENCO_small_interventional_samples}
\end{figure}

\paragraph{Limited observational data sample sizes} Similarly as above, we repeat the experiments of \Tabref{tab:experiments_simulated_results} for \OurApproach{} but limit the observational sample size to 1000 and 2000 (5000 before) while keeping 200 samples per interventions.
Observational data is important in ENCO for learning the conditional distributions.
For variables with many parents, this becomes more difficult when fewer samples are available, because the input space grows exponentially with the number of parents.
Thus, we would expect the collider and full graph suffer the most from having less observational data, and this is indeed the case as shown by the results in \Figref{fig:appendix_ENCO_small_observational_samples}.
The results of all other graphs are less affected, although interestingly, some become even better with less observational data.
For the chain $X_1\to X_2\to... X_N$, for instance, we observed that the learned conditional distributions picked up spurious correlations among variables, e.g., between $X_1$ and $X_3$ when modeling $p(X_3|X_1,X_2)$ which are, in the data limit, independent given $X_2$.
Since those correlations do not necessarily transfer to the interventional setting, it is easier to spot false positive edges, and we can obtain even better results than for the larger sample sizes.
In conclusion, having sufficient observational data is crucial in ENCO for graphs with variables that have larger parent sets, while being less important for sparser graphs.

\begin{figure}[t!]
    \centering
    \includegraphics[width=\textwidth]{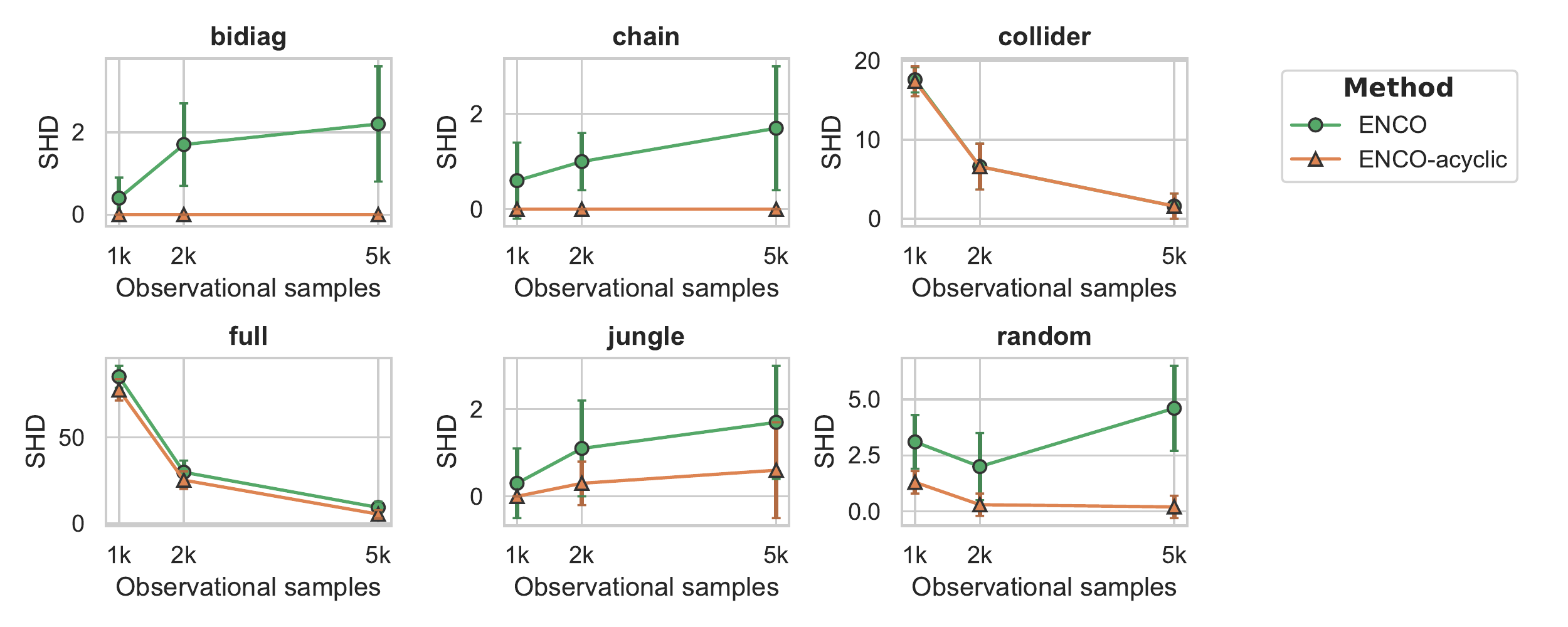}
    \caption{Results of \OurApproach{} for different graph structures under limited observational data sample size. Note the different scale of the y-axis for the six graphs. The structure learning performance remains good for sparse graphs, and suffers for graphs with larger parent sets.}
    \label{fig:appendix_ENCO_small_observational_samples}
\end{figure}

\paragraph{Limited interventional and observational data sample sizes} Finally, we combine the smallest interventional and observational data sample sizes, and also include the results of the previously best baselines, SDI and DCDI, in \Tabref{tab:appendix_simulated_results_small_sample_size}. 
The results of ENCO show the combination of the previous two effects: graphs consisting of variables with small parent sets can still be recovered well by ENCO, while errors increase for the collider and full graph.
Similar trends are observed for SDI, while DCDI showed a considerable decrease in performance for all graphs.
In conclusion, ENCO still works well for graphs with smaller parent sets under a small observational and interventional data regime, and outperforms related baselines in this setting.

\begin{table}[t!]
    \centering
    \addtolength{\leftskip}{-2cm}
    \addtolength{\rightskip}{-2cm}
    \small
    \caption{Repeating experiments of \Tabref{tab:experiments_simulated_results} with very small sample sizes (20 samples per intervention, 1k observational samples). Despite the limited data, \OurApproach{} can recover graphs with small parent sets reasonably well, while the graphs collider and full suffer for all methods.}
    \label{tab:appendix_simulated_results_small_sample_size}
    \resizebox{\textwidth}{!}{
    \begin{tabular}{l|cccccc}
        \toprule
        \textbf{Graph type} & \textbf{bidiag} & \textbf{chain} & \textbf{collider} & \textbf{full} & \textbf{jungle} & \textbf{random} \\
        \midrule
        SDI & $10.9$ ($\pm 2.7$) & $6.1$ ($\pm 1.5$) & $22.1$ ($\pm 1.9$) & $211.0$ ($\pm 6.2$) & $10.4$ ($\pm 2.7$) & $22.7$ ($\pm 7.4$) \\
        DCDI & $30.0$ ($\pm 4.2$) & $22.0$ ($\pm 1.5$) & $23.2$ ($\pm 1.3$) & $185.2$ ($\pm 4.5$) & $25.8$ ($\pm 2.7$) & $40.2$ ($\pm 8.4$) \\
        \midrule
        ENCO (Ours) & $9.7$ ($\pm 3.6$) & $5.6$ ($\pm 1.7$) & $22.7$ ($\pm 2.1$) & $132.6$ ($\pm 8.0$) & $8.1$ ($\pm 2.3$) & $18.4$ ($\pm 4.8$) \\
        ENCO-acyclic (Ours) & $\mathbf{2.0}$ ($\pm 2.3$) & $\mathbf{2.7}$ ($\pm 2.1$) & $22.9$ ($\pm 2.3$) & $\mathbf{88.4}$ ($\pm 6.6$) & $\mathbf{4.1}$ ($\pm 2.0$) & $\mathbf{5.3}$ ($\pm 2.5$) \\
        \bottomrule
    \end{tabular}
    }
\end{table}

\subsection{Interventions on fewer variables}
\label{sec:appendix_partial_interventions}

We have performed the experiments in \Secref{sec:experiments_fewer_interventions} using fewer interventions for all six synthetic graph structures.
The results are visualized in \Figref{fig:appendix_partial_interventions_results_plot}, and presented in table-form in \Tabref{tab:appendix_partial_interventions_results_table}.
From the experiments, we can see that ENCO with interventions on only 4 variables matches or outperforms DCDI with 10 interventions for 4 out of the 6 graph structures (bidiag, chain, full, jungle) when enforcing acyclicity.
Especially on chain-like graphs such as jungle, ENCO achieves lower SHD scores for the same number of interventions on variables, while DCDI incorrectly orientates many edges and predicts false positive edges between children.
On the graph collider, we observed a high variance for settings with very few interventions.
This is because when we intervene on the collider node itself, \OurApproach{} can deduce the orientation for all edges. 
Finally, on the graph random, we observe that enforcing acyclicity for \OurApproach{} reduces the error a lot. 
This is because incorrectly orientated edges cause more false positives in this densely connected graph, which are removed with the cycles.
We include a longer discussion of the limitations of ENCO on fewer interventions in Appendix~\ref{sec:appendix_convergence_partial_interventions}.
Still, we conclude that, in practice, ENCO performs competitively to DCDI, even when very few interventions are provided, and scales better to more interventions.

\begin{figure}[t!]
    \centering
    \includegraphics[width=\textwidth]{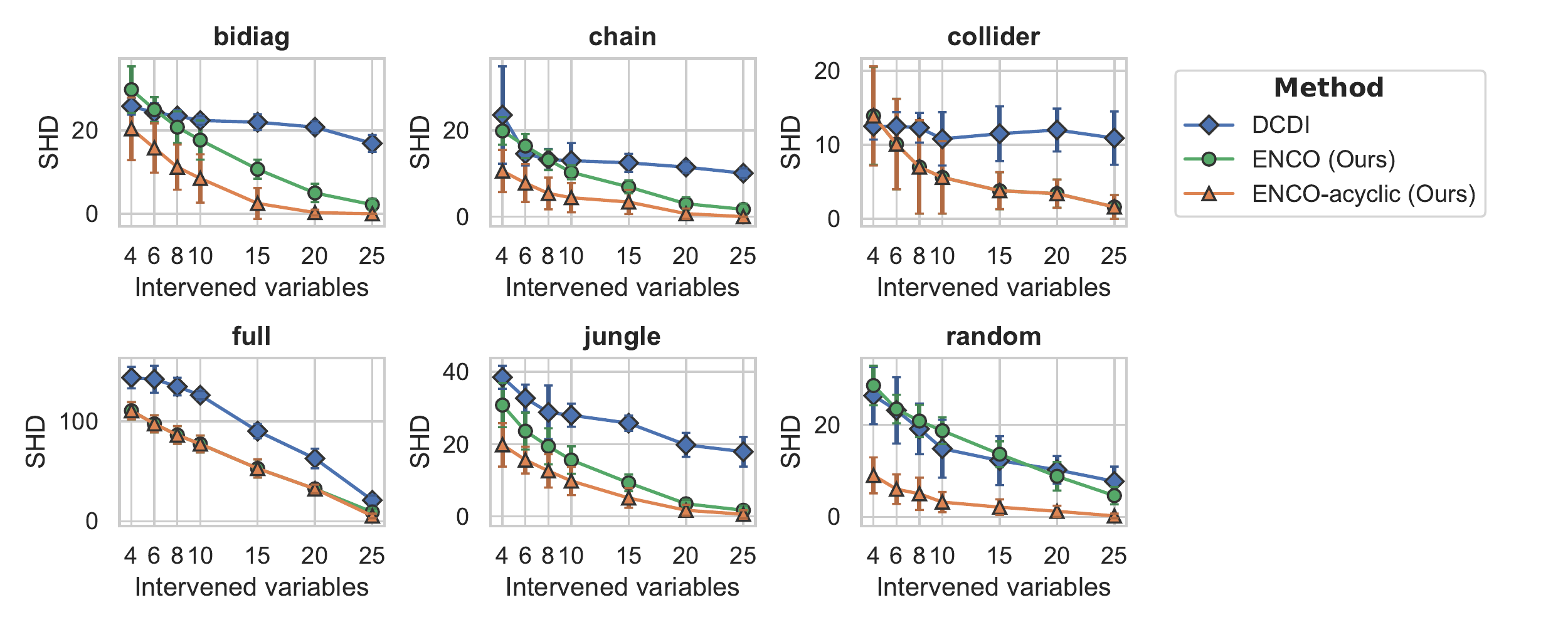}
    \caption{Results of \OurApproach{} and DCDI for different graph structures under fewer interventions provided. Note the different scale of the y-axis for the six graphs. For four out of six graphs, \OurApproach{} outperforms DCDI even for as few as 4 interventions, especially when enforcing acyclicity. The detailed numbers of the results are listed in \Tabref{tab:appendix_partial_interventions_results_table}.}
    \label{fig:appendix_partial_interventions_results_plot}
\end{figure}

\begin{table}[t!]
    \centering
    \addtolength{\leftskip}{-2cm}
    \addtolength{\rightskip}{-2cm}
    \small
    \caption{Detailed results of the experiments with fewer interventions. See \Figref{fig:appendix_partial_interventions_results_plot} for a visualization and discussion.}
    \label{tab:appendix_partial_interventions_results_table}
    \resizebox{\textwidth}{!}{
    \begin{tabular}{l|cccccc}
        \toprule
        \textbf{Graph type} & \textbf{bidiag} & \textbf{chain} & \textbf{collider} & \textbf{full} & \textbf{jungle} & \textbf{random} \\
        \midrule
        DCDI 4 vars & $25.8$ ($\pm 2.0$) & $23.6$ ($\pm 11.3$) & $12.5$ ($\pm 1.8$) & $143.8$ ($\pm 10.7$) & $38.5$ ($\pm 3.2$) & $26.3$ ($\pm 6.2$) \\
        DCDI 6 vars & $24.3$ ($\pm 2.2$) & $14.6$ ($\pm 2.7$) & $12.5$ ($\pm 1.9$) & $142.2$ ($\pm 13.5$) & $32.7$ ($\pm 3.8$) & $23.1$ ($\pm 7.2$) \\
        DCDI 8 vars & $23.5$ ($\pm 1.4$) & $13.3$ ($\pm 2.4$) & $12.3$ ($\pm 2.0$) & $134.8$ ($\pm 8.9$) & $28.8$ ($\pm 7.4$) & $19.1$ ($\pm 5.5$) \\
        DCDI 10 vars & $22.4$ ($\pm 1.1$) & $13.0$ ($\pm 4.1$) & $10.8$ ($\pm 3.6$) & $126.2$ ($\pm 4.2$) & $28.0$ ($\pm 3.2$) & $14.8$ ($\pm 6.3$) \\
        DCDI 15 vars & $22.0$ ($\pm 1.9$) & $12.5$ ($\pm 2.1$) & $11.5$ ($\pm 3.7$) & $90.2$ ($\pm 7.1$) & $25.8$ ($\pm 2.1$) & $12.2$ ($\pm 5.3$) \\
        DCDI 20 vars & $20.8$ ($\pm 1.4$) & $11.5$ ($\pm 1.3$) & $12.0$ ($\pm 2.9$) & $62.8$ ($\pm 9.8$) & $19.8$ ($\pm 3.3$) & $10.2$ ($\pm 3.0$) \\
        DCDI 25 vars & $16.9$ ($\pm 2.0$) & $10.1$ ($\pm 1.1$) & $10.9$ ($\pm 3.6$) & $21.0$ ($\pm 4.8$) & $17.9$ ($\pm 4.1$) & $7.7$ ($\pm 3.2$) \\
        \midrule
        ENCO 4 vars & $29.8$ ($\pm 5.6$) & $19.9$ ($\pm 3.2$) & $13.9$ ($\pm 6.6$) & $110.6$ ($\pm 8.6$) & $30.8$ ($\pm 6.1$) & $28.5$ ($\pm 4.3$) \\
        ENCO 6 vars & $25.0$ ($\pm 3.0$) & $16.4$ ($\pm 2.8$) & $10.1$ ($\pm 6.1$) & $97.5$ ($\pm 8.5$) & $23.6$ ($\pm 5.1$) & $23.4$ ($\pm 3.1$) \\
        ENCO 8 vars & $20.8$ ($\pm 3.8$) & $13.2$ ($\pm 2.4$) & $7.0$ ($\pm 6.3$) & $86.3$ ($\pm 8.9$) & $19.4$ ($\pm 5.0$) & $20.8$ ($\pm 3.5$) \\
        ENCO 10 vars & $17.7$ ($\pm 4.7$) & $10.3$ ($\pm 1.6$) & $5.6$ ($\pm 4.9$) & $77.3$ ($\pm 8.5$) & $15.6$ ($\pm 3.8$) & $18.7$ ($\pm 2.9$) \\
        ENCO 15 vars & $10.7$ ($\pm 2.3$) & $6.9$ ($\pm 1.5$) & $3.8$ ($\pm 2.5$) & $52.8$ ($\pm 9.1$) & $9.3$ ($\pm 2.3$) & $13.6$ ($\pm 2.8$) \\
        ENCO 20 vars & $5.0$ ($\pm 2.2$) & $3.0$ ($\pm 1.5$) & $3.4$ ($\pm 1.9$) & $32.4$ ($\pm 4.2$) & $3.5$ ($\pm 1.1$) & $8.8$ ($\pm 3.1$) \\
        ENCO 25 vars & $2.2$ ($\pm 1.4$) & $1.7$ ($\pm 1.3$) & $1.6$ ($\pm 1.6$) & $9.2$ ($\pm 3.4$) & $1.7$ ($\pm 1.3$) & $4.6$ ($\pm 1.9$) \\
        \midrule
        ENCO-acyclic 4 vars & $20.4$ ($\pm 7.5$) & $10.6$ ($\pm 4.9$) & $13.9$ ($\pm 6.7$) & $110.6$ ($\pm 8.6$) & $19.8$ ($\pm 6.0$) & $9.0$ ($\pm 3.9$) \\
        ENCO-acyclic 6 vars & $15.8$ ($\pm 5.9$) & $7.8$ ($\pm 4.4$) & $10.1$ ($\pm 6.1$) & $97.5$ ($\pm 8.5$) & $15.6$ ($\pm 3.7$) & $6.0$ ($\pm 3.2$) \\
        ENCO-acyclic 8 vars & $11.2$ ($\pm 5.4$) & $5.4$ ($\pm 3.7$) & $7.0$ ($\pm 6.3$) & $86.3$ ($\pm 8.9$) & $12.6$ ($\pm 4.6$) & $5.0$ ($\pm 3.5$) \\
        ENCO-acyclic 10 vars & $8.5$ ($\pm 5.8$) & $4.4$ ($\pm 3.4$) & $5.6$ ($\pm 4.9$) & $77.3$ ($\pm 8.5$) & $9.8$ ($\pm 3.9$) & $3.2$ ($\pm 2.2$) \\
        ENCO-acyclic 15 vars & $2.5$ ($\pm 3.7$) & $3.4$ ($\pm 2.8$) & $3.8$ ($\pm 2.5$) & $52.8$ ($\pm 9.1$) & $5.1$ ($\pm 2.7$) & $2.1$ ($\pm 1.7$) \\
        ENCO-acyclic 20 vars & $0.3$ ($\pm 0.7$) & $0.7$ ($\pm 1.2$) & $3.4$ ($\pm 1.9$) & $32.4$ ($\pm 4.2$) & $1.7$ ($\pm 1.5$) & $1.2$ ($\pm 1.2$) \\
        ENCO-acyclic 25 vars & $0.0$ ($\pm 0.0$) & $0.0$ ($\pm 0.0$) & $1.6$ ($\pm 1.6$) & $5.3$ ($\pm 2.3$) & $0.6$ ($\pm 1.1$) & $0.2$ ($\pm 0.5$) \\
        \bottomrule
    \end{tabular}
    }
\end{table}

\subsection{Ablation study on gradient estimators}
\label{sec:appendix_ablation_gradient_estimator}

\begin{table}[t!]
    \centering
    \addtolength{\leftskip}{-2cm}
    \addtolength{\rightskip}{-2cm}
    \small
    \caption{Extension of \Tabref{tab:appendix_simulated_results} with ablation study of using \citet{Bengio2019MetaTransfer} gradients with \OurApproach{}. }
    \label{tab:appendix_ablation_gradient_estimator_results}
    \resizebox{\textwidth}{!}{
    \begin{tabular}{l|cccccc}
        \toprule
		\textbf{Graph type} & \textbf{bidiag} & \textbf{chain} & \textbf{collider} & \textbf{full} & \textbf{jungle} & \textbf{random} \\
		\midrule
		\OurApproach{} (Ours) & & & & & & \\
		- \citet{Bengio2019MetaTransfer} grads & $0.1$ \footnotesize{($\pm0.4$)} & $\mathbf{0.0}$ \footnotesize{($\pm0.0$)} & $\mathbf{0.0}$ \footnotesize{($\pm0.0$)} & $1.9$ \footnotesize{($\pm1.5$)} & $\mathbf{0.0}$ \footnotesize{($\pm0.0$)} & $\mathbf{0.0}$ \footnotesize{($\pm0.0$)}\\
		- Our gradient estimator & $\mathbf{0.0}$ \footnotesize{($\pm0.0$)} & $\mathbf{0.0}$ \footnotesize{($\pm0.0$)} & $\mathbf{0.0}$ \footnotesize{($\pm0.0$)} & $\mathbf{0.3}$ \footnotesize{($\pm0.9$)} & $\mathbf{0.0}$ \footnotesize{($\pm0.0$)} & $\mathbf{0.0}$ \footnotesize{($\pm0.0$)}\\
    	\bottomrule
    \end{tabular}
    }
\end{table}

To analyze the importance of the low-variance gradient estimators in \OurApproach{}, we repeat the experiments on synthetic graph structure from \Secref{sec:experiments_common_graphs} where the gradient estimators of \OurApproach{} have been replaced by those from \citet{Bengio2019MetaTransfer}.
The results are shown in \Tabref{tab:appendix_ablation_gradient_estimator_results}.
Overall, the scores are very similar with minor differences for the graphs \texttt{full} and \texttt{bidiag}.
In comparisons to the learning curves in \Figref{fig:appendix_ENCO_learning_curve_simulated}, the curves with the gradient estimator of \citet{Bengio2019MetaTransfer} are more noisy, with recall and precision jumping up and down.
While \OurApproach{} easily converged early to the correct graphs for all graph types, this model often required the full 30 iterations to reach the optimal recovery.

The difference between the two gradient estimators becomes more apparent on large graphs. 
We repeated the experiments of \Secref{sec:experiments_scalability} on the graphs with $100$ nodes using the gradient estimator of \citet{Bengio2019MetaTransfer}.
Within the 30 epochs, the model obtained an SHD of $15.4$ on average over 5 experiments, which is considerably higher than \OurApproach{} with the proposed gradient estimators ($0.0$).
Still, this is only half of the errors that SDI \cite{Ke2019LearningInterventions} with the same gradient estimator achieved.
Hence, we can conclude that the proposed gradient estimators are beneficial for \OurApproach{} but not strictly necessary for small graphs.
For large graphs, the low variance of the estimator becomes much more important.

\subsection{Deterministic variables}
\label{sec:appendix_deterministic_variables}

In contrast to algorithms working on observational data, \OurApproach{} does not strictly require the faithfulness assumption.
Hence, we can apply \OurApproach{} to graphs with deterministic variables.
Deterministic variables have a distribution that is defined by a one-to-one mapping of its parents' inputs to an output value.
In other words, we have the following distribution:
\begin{equation}
    p(X_i|\text{pa}(X_i))=\mathbbm{1}\left[X_i=f(\text{pa}(X_i))\right]
\end{equation}
where $f$ is an arbitrary function.
The difficulty of deterministic variables is that a variable $X_i$ can be fully replaced by its parents $\text{pa}(X_i)$ in any conditional distribution.
The only way we can identify deterministic variables is from interventional data, where an intervention on $X_i$ breaks the dependency to its parents.

We have already tested \OurApproach{} on deterministic variables in the context of the real-world inspired graphs of \Secref{sec:experiments_real_world}.
To have a more detailed analysis, we created synthetic graphs following the \texttt{random} graph setup with an edge probability of $0.1$ and an average of two parents, and maximum of three parents.
An example graph is shown in \Figref{fig:appendix_deterministic_variables_example_graph}.
All variables except the leaf nodes have deterministic distributions, where the function $f(\text{pa}(X_i))$ is randomly created by picking a random output category for any pair of input values.
We create 10 such graphs and use the same hyperparameter setup as for the synthetic graphs, except that we increase the sparsity regularizer to $\Lsparse=0.02$.
We report the results in \Tabref{tab:appendix_deterministic_results}.
In line with the results on the real-world graphs, \OurApproach{} is able to recover most graphs with less than two errors.
As a baseline, we apply SDI \cite{Ke2019LearningInterventions} and see a significant higher error rate.
The method predicts many false positives, including two-variable loops, but was also missing out some true positive edges.
We conclude that \OurApproach{} also works well on deterministic graphs.

\begin{table}[t!]
    \centering
    \addtolength{\leftskip}{-2cm}
    \addtolength{\rightskip}{-2cm}
    \small
    \caption{Experiments on graphs with deterministic variables. The performance over 10 experiments is reported in terms of SHD with standard deviation in brackets. \OurApproach{} can recover most of the graphs with less than two errors.}
    \label{tab:appendix_deterministic_results}
    \begin{tabular}{l|c}
        \toprule
		\textbf{Graph type} & \textbf{deterministic}\\
		\midrule
		SDI \cite{Ke2019LearningInterventions} & $20.6$ \footnotesize{($\pm3.8$)}\\
		\midrule
		\OurApproach{} (Ours) & $\mathbf{1.4}$ \footnotesize{($\pm1.3$)}\\
    	\bottomrule
    \end{tabular}
\end{table}

\begin{figure}[t!]
    \centering
    \includegraphics[width=0.5\textwidth]{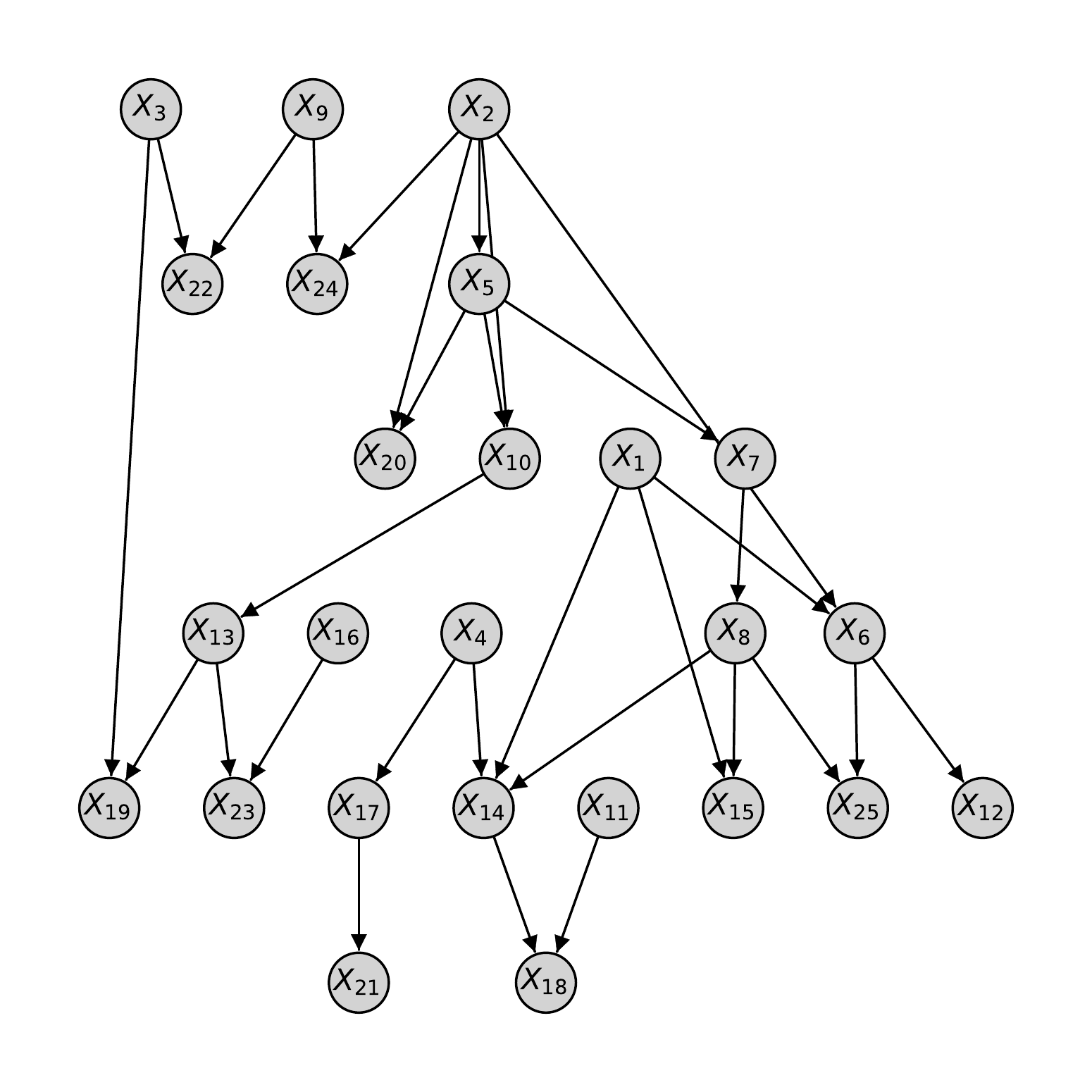}
    \caption{Example graph for the deterministic setup. We use the \texttt{random} setup with edge probability $0.1$ and limit number of parents to 3. All variables except the leaf nodes have deterministic distributions.}
    \label{fig:appendix_deterministic_variables_example_graph}
\end{figure}

\begin{figure}[t!]
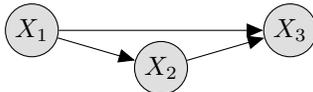

    \centering
	\tikz{ %
		\node[obs] (X1) {$X_1$} ; %
		\node[obs, right=of X1, yshift=-.5cm] (X2) {$X_2$} ; %
		\node[obs, right=of X2, yshift=.5cm] (X3) {$X_3$} ; %
		
	    \edge{X1}{X2}
	    \edge{X1}{X3}
	    \edge{X2}{X3}
	}
	\caption{Example graph for cancelling paths $X_1\to X_2\to X_3$ and $X_1\to X_3$. This can, for instance, occur when the conditional distribution of $X_2$ is a Dirac delta around $X_1$: $p(X_2|X_1)=\delta[X_2=X_1]$.}
	\label{fig:appendix_deterministic_variables_cancel_paths}
\end{figure}

\paragraph{Cancellation of paths}
Besides deterministic nodes, a common example for faithfulness violation is the cancellation of two paths.
For instance, consider the causal graph with the three variables $X_1,X_2,X_3$ shown in Figure~\ref{fig:appendix_deterministic_variables_cancel_paths}, and the conditional distribution $p(X_2|X_1)=\delta[X_2=X_1]$.
In this case, the two paths $X_1\to X_2\to X_3$ and $X_1\to X_3$ cancel each other, i.e. $X_3$ is independent of $X_2$ when conditioned on $X_1$, and independent of $X_1$ when conditioned on $X_2$.
This implies that only one of the two graphs is necessary for describing the relations.
Yet, ENCO can find the edge $X_1\to X_3$ by observing interventions on $X_2$, since in this case, $X_1\independent X_2$ and $X_1\not\independent X_3|X_2$.
The remaining edges can be learned in the same manner.
We also emperically verify this by running ENCO on the graph structure of Figure~\ref{fig:appendix_deterministic_variables_cancel_paths} with the three variables being binary.
We set $p(X_2|X_1)=\delta[X_2=X_1]$ for canceling the two paths, and the remaining distributions are randomly initialized.
ENCO reconstructs the graph without errors, showing that it also works in practice.

\subsection{Continuous data}
\label{sec:appendix_continuous_data}

We verify that ENCO works just as well with continuous data by performing the experiments on datasets from \citet{Brouillard2020Differentiable} that contained interventions on all variables.
In these datasets, the graphs consist of 10 variables with an average of one edge per variable, and deploy three different causal mechanisms: linear, nonlinear additive noise models, and nonlinear models with non-additive noise using neural networks.  
The datasets contain 909 observational samples and 909 samples per intervention.
All results of GIES, IGSP, and DCDI have been taken from \citet{Brouillard2020Differentiable} (Appendix C.7, Table 22-24).
We follow the setup of \citet{Brouillard2020Differentiable} and compare two different neural network setups.
First, we use MLPs that model a Gaussian density by predicting a mean and variance variable (denoted by suffix \textit{G}).
The second setup uses normalizing flows, more specifically a two-layer deep sigmoidal flow \cite{Huang2018Neural}, which is flexible enough to model more complex distributions (denoted by suffix \textit{DSF}).
The rest of the experimental setup in ENCO is identical to the categorical case.

Results are shown in Table~\ref{tab:appendix_continuous_results}, and the observations are the same as with categorical data.
ENCO outperforms all other methods in all settings, especially for the more complex distributions.
The higher error rate for the DSF setup is mostly due to overfitting of the flow models.
We conclude that ENCO works as accurately for both continuous and categorical data.

\begin{table}[t!]
    \centering
    \addtolength{\leftskip}{-2cm}
    \addtolength{\rightskip}{-2cm}
    \small
    \caption{Experiments on graph with continuous data from \citet{Brouillard2020Differentiable}. The suffix ``\textit{-G}'' denotes that the neural networks model a Gaussian density, and ``\textit{-DSF}'' a two-layer deep sigmoidal flow. \OurApproach{} outperforms all baselines in this scenario, verifying that \OurApproach{} also works on continuous data well.}
    \label{tab:appendix_continuous_results}
    \begin{tabular}{l|ccc}
        \toprule
        \textbf{Graph type} & \textbf{Linear} & \textbf{Nonlinear with additive noise} & \textbf{Nonlinear with non-additive noise} \\
        \midrule
        GIES & $0.6$ ($\pm 1.3$) & $9.1$ ($\pm 5.3$) & $4.4$ ($\pm 6.1$) \\
        IGSP (best) & $1.9$ ($\pm 1.8$) & $5.3$ ($\pm 3.0$) & $4.1$ ($\pm 2.8$) \\
        DCDI-G & $1.3$ ($\pm 1.9$) & $5.2$ ($\pm 7.5$) & $2.3$ ($\pm 3.6$) \\
        DCDI-DSF & $0.9$ ($\pm 1.3$) & $4.2$ ($\pm 5.6$) & $7.0$ ($\pm 10.7$) \\
        GES + Orientating & $2.5$ ($\pm 4.3$) & $12.5$ ($\pm 9.2$) & $7.9$ ($\pm 7.4$)\\
        \midrule
        ENCO-G (ours) & $\mathbf{0.0}$ ($\pm 0.0$) & $\mathbf{0.2}$ ($\pm 0.4$) & $\mathbf{0.1}$ ($\pm 0.3$) \\
        ENCO-DSF (ours) & $0.3$ ($\pm 0.7$) & $1.4$ ($\pm 1.4$) & $1.2$ ($\pm 1.5$) \\
        \bottomrule
    \end{tabular}
\end{table}

\subsection{Skeleton learning with observational baseline}
\label{sec:appendix_constraint_based_GES}

To show the benefit of learning a graph from observational and interventional data jointly, we compare ENCO to a simple observational baseline. This baseline first learns the skeleton of the graph by applying greedy equivalence search (GES) \cite{Chickering2002Optimal} on the observational data. Then, for each interventional dataset, we apply GES as well and use those skeletons to orientate the edges of the original one. This can be done by checking for each undirected edge $X-Y$ whether $X\to Y$ is in the skeleton of interventions on $X$ or not. As a reference implementation of GES, we have used the one provided in the Causal Discovery Toolbox \cite{Kalainathan2020Causal}.

The results on continuous data are shown in Table~\ref{tab:appendix_continuous_results}.
Since GES assumes linear mechanisms and gaussianity of the data, it is unsurprising that it performs better on the linear Gaussian dataset than on the non-linear datasets. However, on all the three datasets, it constitutes the lowest performance compared to the other methods, including ENCO. This highlights the benefits of incorporating interventional data in the learning of the skeleton and graph structure. To gain further insights in comparison to the constraint-based baseline, we repeat the experiments with smaller sample sizes. The original dataset has 909 samples for observational data and per intervention, and we sub-sample 500 and 100 of those respectively for simulating smaller dataset sizes. The results of those experiments can be found in \Tabref{tab:appendix_continuous_results_GES_smaller_data}. It is apparent that the results of GES on the linear dataset get considerably worse with fewer data samples being available, while ENCO-G is able to reconstruct most graphs still without errors. Especially for the small dataset of 100 samples, we noticed that the skeletons found by GES on observational data already contained couple of mistakes. This shows that for small datasets, observational data alone might not be sufficient to find the correct skeleton while by jointly learning from observational and interventional data, we can yet find the graph up to minor errors.

\begin{table}[t!]
    \centering
    \addtolength{\leftskip}{-2cm}
    \addtolength{\rightskip}{-2cm}
    \small
    \caption{Experiments on graph with continuous data from \citet{Brouillard2020Differentiable} with smaller sample sizes for both observational and interventional datasets (in brackets). \OurApproach{} shows to perform much better in smaller sample sizes than a skeleton+orientation method, underlining the benefit of learning the whole graph from observational and interventional data jointly.}
    \label{tab:appendix_continuous_results_GES_smaller_data}
    \resizebox{\textwidth}{!}{
    \begin{tabular}{l|ccc}
        \toprule
        \textbf{Graph type} & \textbf{Linear Gaussian} & \textbf{Nonlinear with additive noise} & \textbf{Nonlinear with non-additive noise} \\
        \midrule
        GES + Orientating (909) & $2.5$ ($\pm 4.3$) & $12.5$ ($\pm 9.2$) & $7.9$ ($\pm 7.4$)\\
        ENCO-G (ours) (909) & $\mathbf{0.0}$ ($\pm 0.0$) & $\mathbf{0.2}$ ($\pm 0.4$) & $\mathbf{0.1}$ ($\pm 0.3$) \\
        \midrule
        GES + Orientating (500) & $3.2$ ($\pm 4.3$) & $12.0$ ($\pm 9.2$) & $9.2$ ($\pm 7.5$)\\
        ENCO-G (ours) (500) & $\mathbf{0.1}$ ($\pm 0.3$) & $\mathbf{0.5}$ ($\pm 0.7$) & $\mathbf{0.0}$ ($\pm 0.0$) \\
        \midrule
        GES + Orientating (100) & $6.1$ ($\pm 5.3$) & $10.2$ ($\pm 6.7$) & $9.0$ ($\pm 6.5$)\\
        ENCO-G (ours) (100) & $\mathbf{0.2}$ ($\pm 0.4$) & $\mathbf{0.9}$ ($\pm 1.1$) & $\mathbf{1.3}$ ($\pm 0.8$) \\
        \bottomrule
    \end{tabular}
    }
\end{table}

Further, we also apply GES on the categorical data with an additional hyperparameter search over the penalty discount.
The results in \Tabref{tab:appendix_experiments_GES_categorical} give a similar conclusion as on the continuous data. While the baseline attains good scores for chains, it makes considerably more errors on all other graph structures than ENCO. This shows that ENCO is much more robust by jointly learning from observational and interventional data.

\begin{table}[t!]
    \vspace{-1mm}
    \centering
    \addtolength{\leftskip}{-4cm}
    \addtolength{\rightskip}{-4cm}
    \small
    \caption{Comparing structure learning methods in terms of structural hamming distance (SHD) on common graph structures (lower is better), averaged over 25 graphs each. ENCO outperforms all baselines, and by enforcing acyclicity after training, can recover most graphs with minimal errors.}
    \label{tab:appendix_experiments_GES_categorical}
    \vspace{-1mm}
    \resizebox{\textwidth}{!}{
    \begin{tabular}{l|cccccc}
        \toprule
		\textbf{Graph type} & \textbf{bidiag} & \textbf{chain} & \textbf{collider} & \textbf{full} & \textbf{jungle} & \textbf{random}\\
		\midrule
		GIES %
		& $33.6$ \footnotesize{($\pm7.5$)} & $17.5$ \footnotesize{($\pm7.3$)} & $24.0$ \footnotesize{($\pm2.9$)} & $216.5$ \footnotesize{($\pm15.2$)} & $33.1$ \footnotesize{($\pm2.9$)} & $57.5$ \footnotesize{($\pm14.2$)}\\
		IGSP %
		& $32.7$ \footnotesize{($\pm5.1$)} & $14.6$ \footnotesize{($\pm2.3$)} & $23.7$ \footnotesize{($\pm2.3$)} & $253.8$ \footnotesize{($\pm12.6$)} & $35.9$ \footnotesize{($\pm5.2$)} & $65.4$ \footnotesize{($\pm8.0$)}\\
		SDI & $9.0$ ($\pm 2.6$) & $3.9$ ($\pm 2.0$) & $16.1$ ($\pm 2.4$) & $153.9$ ($\pm 10.3$) & $6.9$ ($\pm 2.3$) & $10.8$ ($\pm 3.9$) \\
        DCDI & $16.9$ ($\pm 2.0$) & $10.1$ ($\pm 1.1$) & $10.9$ ($\pm 3.6$) & $21.0$ ($\pm 4.8$) & $17.9$ ($\pm 4.1$) & $7.7$ ($\pm 3.2$) \\
		GES + Orientating & $14.8$ \footnotesize{($\pm2.6$)} & $0.5$ \footnotesize{($\pm0.7$)} & $20.8$ \footnotesize{($\pm 2.4$)} & $282.8$ \footnotesize{($\pm 4.2$)} & $14.7$ \footnotesize{($\pm 3.1$)} & $60.1$ \footnotesize{($\pm 8.9$)}\\
        \midrule
        ENCO (ours) & $2.2$ ($\pm 1.4$) & $1.7$ ($\pm 1.3$) & $\bm{1.6}$ ($\pm 1.6$) & $9.2$ ($\pm 3.4$) & $1.7$ ($\pm 1.3$) & $4.6$ ($\pm 1.9$) \\
        ENCO-acyclic (ours) & $\bm{0.0}$ ($\pm 0.0$) & $\bm{0.0}$ ($\pm 0.0$) & $\bm{1.6}$ ($\pm 1.6$) & $\bm{5.3}$ ($\pm 2.3$) & $\bm{0.6}$ ($\pm 1.1$) & $\bm{0.2}$ ($\pm 0.5$) \\
        \bottomrule
    \end{tabular}
    }
    \vspace{-1mm}
\end{table}

\subsection{Non-neural based data simulators}
\label{sec:appendix_nonneural_synthetic_data}

\begin{table}[t!]
    \centering
    \addtolength{\leftskip}{-2cm}
    \addtolength{\rightskip}{-2cm}
    \small
    \caption{Experiments with a different data simulator, introducing independence among parents for each variable. Similar to the neural-based synthetic data, \OurApproach{} recovers most graphs with a minor error rate, outperforming other baselines.}
    \label{tab:appendix_different_data_simulator}
    \resizebox{\textwidth}{!}{
    \begin{tabular}{l|cccccc}
        \toprule
        \textbf{Graph type} & \textbf{bidiag} & \textbf{chain} & \textbf{collider} & \textbf{full} & \textbf{jungle} & \textbf{random} \\
        \midrule
        GIES & $30.7$ ($\pm 3.1$) & $16.9$ ($\pm 2.4$) & $18.6$ ($\pm 2.5$) & $238.6$ ($\pm 4.0$) & $30.1$ ($\pm 3.5$) & $110.0$ ($\pm 11.8$) \\
        IGSP & $27.0$ ($\pm 4.2$) & $14.0$ ($\pm 2.8$) & $25.0$ ($\pm 1.4$) & $259.5$ ($\pm 3.5$) & $24.0$ ($\pm 7.1$) & $112.5$ ($\pm 4.9$) \\
        SDI & $12.6$ ($\pm 2.7$) & $7.6$ ($\pm 2.6$) & $2.8$ ($\pm 1.6$) & $99.0$ ($\pm 7.5$) & $14.8$ ($\pm 3.2$) & $36.7$ ($\pm 4.7$) \\
        DCDI & $15.7$ ($\pm 1.9$) & $8.4$ ($\pm 1.3$) & $4.7$ ($\pm 2.5$) & $25.3$ ($\pm 6.2$) & $18.9$ ($\pm 3.7$) & $9.8$ ($\pm 3.6$) \\
        \midrule
        ENCO (ours) & $0.5$ ($\pm 0.6$) & $0.4$ ($\pm 0.6$) & $0.9$ ($\pm 0.8$) & $1.0$ ($\pm 1.2$) & $1.2$ ($\pm 1.2$) & $1.9$ ($\pm 1.8$) \\
        ENCO-acyclic (ours) & $\mathbf{0.0}$ ($\pm 0.0$) & $\mathbf{0.0}$ ($\pm 0.0$) & $\mathbf{0.8}$ ($\pm 0.8$) & $\mathbf{0.2}$ ($\pm 0.4$) & $\mathbf{0.3}$ ($\pm 0.6$) & $\mathbf{0.1}$ ($\pm 0.3$) \\
        \bottomrule
    \end{tabular}
    }
\end{table}

Using neural networks to generate the simulated data might give SDI, DCDI and ENCO an advantage in our comparisons since they rely on similar neural networks to model the distribution.
To verify that \OurApproach{} works for other simulated data similarly well, we run experiments on categorical data with other function forms for the causal mechanisms instead of neural networks.
Since there is no straightforward way of defining 'linear' mechanisms for categorical data, we instead express a conditional distribution as a product of independent, single conditionals:
\begin{equation}
    p(X_i|\text{pa}(X_i))=\frac{\prod_{X_j\in\text{pa}(X_i)} p(X_i|X_j)}{\sum_{\tilde{x}_i}\prod_{X_j\in\text{pa}(X_i)} p(X_i=\tilde{x}_i|X_j)}
\end{equation}
with $p(X_i|X_j)=\frac{\exp(\alpha_{X_i,X_j})}{\sum_{X_i}\exp(\alpha_{X_i,X_j})}, \alpha_{\cdot,\cdot}\sim\mathcal{N}(0,2)$.
Hence, the effect of each variable in the parent set is independent of all others, similar to linear functions in the continuous case.
The individual probability densities represent a softmax distribution over variables sampled from a Gaussian distribution.

We apply GIES, IGSP, DCDI, SDI and ENCO to the same set of synthetic graph structures with these new causal mechanisms.
Similar to the previous experiments, we provide 200 samples per intervention and 5k observational samples to the algorithms, and repeat the experiments with 25 independently sampled graphs.
The results in Table~\ref{tab:appendix_different_data_simulator} give the same conclusion as the experiments on neural-based causal mechanisms, namely that ENCO outperforms all baselines.
Most methods experience a decrease in performance since the average causal effect of each parent is lower than in the neural case where more complex interactions between parents can be modeled.
Still, ENCO only shows minor decreases, having less than one mistake on average for every graph structure when applying the orientation heuristic.

\end{document}